\documentclass[opre,nonblindrev]{informs3aa}
\OneAndAHalfSpacedXI

\usepackage{endnotes}
\let\footnote=\endnote
\let\enotesize=\normalsize
\def\notesname{Endnotes}
\def\makeenmark{$^{\theenmark}$}
\def\enoteformat{\rightskip0pt\leftskip0pt\parindent=1.75em
  \leavevmode\llap{\theenmark.\enskip}}
\usepackage{natbib}
\bibpunct[, ]{(}{)}{,}{a}{}{,}
\def\bibfont{\small}

\usepackage{soul}
\usepackage{url}
\usepackage{booktabs}
\usepackage{algorithm}
\usepackage{algorithmic}
\usepackage{mathtools}
\usepackage{caption}
\usepackage{subcaption}
\usepackage{bbm,bm}
\usepackage[table]{xcolor}
\usepackage{dsfont}
\usepackage{multirow}
\usepackage{makecell}
\usepackage{comment}

\usepackage{amsmath,amsfonts,bm}

\def\eqref#1{equation~\ref{#1}}
\def\1{\bm{1}}

\DeclareMathAlphabet{\mathsfit}{\encodingdefault}{\sfdefault}{m}{sl}
\SetMathAlphabet{\mathsfit}{bold}{\encodingdefault}{\sfdefault}{bx}{n}

\graphicspath{{\projectroot/}}

\newcommand{\mnorm}[2]{\left\lVert #1 \right\rVert_{#2}}
\newcommand{\algo}{RSM-LinUCB}
\newcommand{\model}{LBM}
\newcommand{\gmodel}{GLBM}
\newcommand{\galgo}{RSM-GLMUCB}
\TheoremsNumberedThrough
\EquationsNumberedThrough
\theoremstyle{TH}

\theoremstyle{EX}
\renewcommand{\qed}{\leavevmode\unskip\nobreak\hfill\mbox{\quad Q.E.D.}}
\let\informsproof\proof
\let\informsendproof\endproof
\renewenvironment{proof}[1][Proof]
  {\informsproof{#1.}}
  {\qed\informsendproof}

\usepackage[colorlinks=true,breaklinks=true,bookmarks=false,
  urlcolor=blue,citecolor=blue,linkcolor=blue]{hyperref}
\usepackage{cleveref}
\crefname{prop}{proposition}{propositions}
\Crefname{prop}{Proposition}{Propositions}
\crefname{cor}{corollary}{corollaries}
\Crefname{cor}{Corollary}{Corollaries}

\usepackage{color-edits}
\addauthor{hj}{blue}
\addauthor{jl}{orange}

\begin{document}
\allowdisplaybreaks

\RUNAUTHOR{Liu and Jia}
\RUNTITLE{Bellman-Centric Learning: Linear Bandits with Memory}
\TITLE{Bellman-Centric Learning: Near-Optimal Regret for Linear Bandits with Memory}
\ARTICLEAUTHORS{%
  \AUTHOR{Jingyuan Liu, Huiwen Jia}
  \AFF{Department of Industrial Engineering and Operations Research, University of California, Berkeley}
}

\ABSTRACT{%
We study linear bandits with memory, where past actions induce endogenous nonstationarity through an arbitrary known, bounded matrix-valued memory map. To trade off exploration and exploitation while accounting for the memory dynamics, we develop \algo{}, a Bellman-centric algorithm that learns as in linear bandits and plans as in reinforcement learning.
This design admits a novel regret decomposition which separates the memory-induced error from the cumulative reward estimation error along the learner’s trajectory. We prove a high-probability regret bound of $\widetilde O\big(dRS(M+1)+\sigma d\sqrt T\big)$, where $T$ is the learning horizon, $d$ is the parameter dimension, $M$ is the memory length, $R$ and $S$ bound the memory-map operator norm and reward-parameter norm, respectively, and $\sigma$ is the sub-Gaussian noise scale. Our results reveal that the multiplicative memory–horizon coupling in prior bounds is not intrinsic: memory only contributes an additive cost, up to logarithmic factors. We also prove a matching minimax lower bound, establishing near-optimality. We further extend the algorithm to generalized linear rewards, preserving this separation with near-optimal memory and leading statistical dependence. Our algorithms outperform the baselines in numerical experiments on synthetic instances and semi-synthetic KV- and semantic-cache tasks.
}
\KEYWORDS{linear bandits, memory, endogenous nonstationarity, regret bounds}
\maketitle

\section{Introduction}\label{sec:intro}

Linear bandits~\citep{AbbasiYadkori2011ImprovedAF, hao2020adaptive} are a widely studied framework for
sequential decision-making that generalizes multi-armed bandits~\citep{bubeck2012regret, auer2002finite} by
modeling expected rewards as linear functions of action features. Recent work ~\citep{wu2018learning,
chen2019new, luo2018efficient, di2020linear} extends linear bandits to the non-stationary setting, where the
unknown reward parameter may change over time. Such temporal variation arises in applications including large
language model (LLM) serving~\citep{taberner2026paretobandit, lee2026drift}, recommendation
systems~\citep{10494875, xu2020contextual}, and dynamic pricing~\citep{zhao2026high}. In these environments,
the learner must estimate rewards from noisy feedback while adapting to changes in the underlying environment.

Early work on non-stationary bandits focuses on exogenous non-stationarity, where the environment changes
independently of the learner's actions~\citep{garivier2011upper, besbes2014stochastic, karnin2016multi}. In
many applications, however, the learner's recent decisions themselves influence subsequent rewards, giving
rise to endogenous non-stationarity. One concrete example is key--value (KV) caching in LLM serving with a
fixed retention window \citep{qin2025mooncake}. Refreshing a prefix makes its KV tensors available for reuse
in the current and subsequent rounds. The benefit of a refresh therefore depends on earlier decisions:
refreshing an uncached prefix can enable reuse for additional requests, whereas refreshing an already cached
prefix extends its availability. Related memory effects arise in model or adapter routing
\citep{fu2024serverlessllm}, where recent pinning and routing decisions determine which models or adapters
remain warm. In recommendation and advertising \citep{warlop2018fighting,agrawal2020learning}, recent
exposures modify users' responses through priming, boredom, or wear-out effects; for example, repeatedly
displaying similar advertisements may cause user fatigue and reduce subsequent rewards. In dynamic pricing,
past prices affect subsequent revenues through finite-memory reference price effects \citep{den2022dynamic} or
through their impact on resource occupancy and availability \citep{jia2024online}.

Recent work on linear bandits with memory (\model{})~\citep{clerici2023linear, anngeneralized} studies one
specific endogenous non-stationarity setting: the most recent $M$ actions determine a known memory matrix
$A(\cdot)$ that transforms a fixed unknown reward parameter. Their algorithms divide the learning horizon into
episodes with fixed length and select an entire sequence of actions at the start of each episode by maximizing
the optimistic reward objective. The best known regret bound with this approach is
$\widetilde O\big(RS\sqrt{dMT}+\sigma\max\{1,S\}\sqrt{d\max\{d,R^2\}T}\big)$, where $T$ is the horizon, $M$ is
the memory length, $d$ is the reward-parameter dimension, $\sigma$ is the sub-Gaussian noise, and $R$ and $S$
bound the operator norm of $A(\cdot)$ and the Euclidean norm of $\theta^\star$, respectively. The first regret
term couples memory length and horizon through $\sqrt{MT}$, while the second introduces additional dependence
on $R$ and $S$ relative to the usual $\widetilde O(\sigma d\sqrt{T})$ statistical term. A similar
memory--horizon term appears in the extension to generalized linear bandits with memory
(\gmodel{}~\citep{anngeneralized}). At the meantime, the existing literature does not yet provide minimax
regret lower bounds that quantify the cost of memory. Thus, it remained an open question: is the
multiplicative memory--horizon dependence intrinsic, or can near-optimal regret be achieved with an additive
memory cost?

Memory-dependent rewards in these \model{} and \gmodel{} settings pose two main challenges. First, policy
updates must account for persistent memory. Although new observations allow the learner to improve the policy,
each updated policy inherits a memory state shaped by earlier decisions. Frequent updates may therefore
accumulate memory-induced error, requiring a tradeoff between incorporating new information and maintaining
policy stability. Second, learning must account for the long-term effects of actions. An action determines not
only the immediate reward but also the memory state governing future rewards. The learning agent must
therefore account for these long-term effects while learning the unknown reward parameter. Our key observation
is that the most recent $M$ actions in \model{} naturally form a state with known and deterministic
transitions, as in reinforcement learning (RL), while rewards are governed by a shared unknown parameter, as
in linear bandits. Neither view alone suffices: a myopic bandit approach neglects future memory effects and
can incur linear regret, whereas a direct tabular RL algorithm does not exploit the shared reward structure
and yields a regret with exponential dependence on the memory length. We therefore ask:
\begin{quote}
    \textit{Can we exploit this structure to learn from bandit feedback while planning over memory states as
    in RL, and achieve near-optimal regret?}
\end{quote}

We develop algorithms and regret guarantees for \model{} and \gmodel{}. Our contributions are as follows.
\begin{itemize}
    \item \textbf{We develop a Bellman-centric algorithm, \algo{} (Algorithm~\ref{al1}), in
    Section~\ref{sec:algo}.} It constructs optimistic reward estimates using linear-bandit confidence sets and
    uses Bellman planning over memory states to account for the long-term effects of actions. To control the
    memory-induced costs of policy updates, \algo{} adopts determinant-triggered rare
    switching~\citep{AbbasiYadkori2011ImprovedAF}: it updates optimistic reward estimates and policy only when
    the determinant of the information matrix doubles. We further develop \galgo{} for \gmodel{} in
    Section~\ref{sec:glbm}, extending optimistic reward estimation, Bellman planning, and rare policy updates
    to generalized linear rewards.

    \item \textbf{We establish near-optimal regret guarantees for \algo{} in Section~\ref{sec:theory}.} Our
    analysis admits a Bellman regret decomposition that separates memory-induced boundary costs from
    cumulative reward-estimation error along the learner's trajectory. Specifically,
    Theorem~\ref{th_upper_bound} gives an upper bound of $\widetilde O\big(dRS(M+1)+\sigma d\sqrt{T}\big)$.
    Compared with prior bounds, this replaces multiplicative memory--horizon dependence with an additive
    memory cost, while removing polynomial dependence on $R$ and $S$ from the leading statistical term.
    \textbf{The matching lower bound in Theorem~\ref{th_lower_bound} establishes optimality up to logarithmic
    factors.} For \gmodel{}, we further obtain
    $\widetilde O\big(d(M+1)\Delta_\mu+\sigma d\sqrt{T}+d^2\kappa v\rho^2\Delta_\mu\big)$ in
    Section~\ref{sec:glbm}. Here, $\Delta_\mu$ is the range of reward means, $\kappa$ is the worst-case
    inverse slope of the link function, and $v$ and $\rho$ govern reward dispersion and relative slope
    variation, respectively. This bound retains the separation between memory and statistical costs, with an
    additional curvature-induced error. The lower bound in Theorem~\ref{thm:glbm_lower} shows that the first
    two terms are near-optimal. Detailed comparisons of theoretical results are in
    Table~\ref{tab:comparisons}.

    \item \textbf{We evaluate our algorithms on synthetic instances and two semi-synthetic caching tasks in
    LLM serving.} Synthetic experiments examine performance over $T$ and sensitivity to $M$, $R$, and $S$,
    with \algo{} reducing cumulative regret by at least $61\%$ relative to the baselines. For KV caching, we
    use the Mooncake conversation trace~\citep{qin2025mooncake} to evaluate prefix-cache refresh decisions. On
    this task, \algo{} reduces cumulative regret by at least $20\%$. For semantic caching, we use
    vCache/SemBenchmarkLmArena~\citep{schroeder2025vcache}. The feedback is exponential with a reciprocal mean
    function, yielding a generalized linear reward model. On this task, \galgo{} reduces cumulative regret by
    at least $46\%$ relative to the baselines.

\end{itemize}

\begin{table}[htbp]
\centering
\caption{Comparison of regret bounds for the basic \model{} setting and the extended \gmodel{} setting.
}\label{tab:comparisons}
\resizebox{\textwidth}{!}{%
\begin{tabular}{@{}ccc@{}}
\toprule
\textbf{Comparisons for \model{}} & \textbf{Algorithm} & \textbf{Regret} \\
\midrule

\citet{JMLR:v11:jaksch10a} & \begin{tabular}[c]{@{}c@{}} UCRL2\end{tabular} & $\widetilde{O}\big( RSM
n^{M+1/2}\sqrt{T} + \sigma n^{(M+1)/2}\sqrt{T} \big)$ \\

\citet{clerici2023linear} & \begin{tabular}[c]{@{}c@{}} OFUL-memory\end{tabular} & $\widetilde{O}\big(
R(S+\sigma)\sqrt{dM}T^{3/4} \big)$ \\

\citet{anngeneralized} & OFUL-memory (improved) & $\widetilde{O}\big( RS\sqrt{dMT} +
\sigma\max\{1,S\}\sqrt{d\max\{d,R^2\}T} \big)$ \\

\rowcolor{gray!15}
\begin{tabular}[c]{@{}c@{}} \textbf{Ours} \\ \textbf{(Upper Bound)} \end{tabular} &
\begin{tabular}[c]{@{}c@{}} \algo{} \\ Theorem \ref{th_upper_bound} \end{tabular}
& $\widetilde{O}\big( dRSM + \sigma d\sqrt{T} \big)$ \\

\rowcolor{gray!15}
\begin{tabular}[c]{@{}c@{}} \textbf{Ours} \\ \textbf{(Lower Bound)} \end{tabular} & Theorem
\ref{th_lower_bound} & $\widetilde{\Omega}\big( dRSM + \sigma d\sqrt T \big)$ \\
\toprule
\textbf{Comparisons for \gmodel{}} & \textbf{Algorithm} & \textbf{Regret} \\
\midrule
\begin{tabular}[c]{@{}c@{}}\citet{anngeneralized}
\end{tabular}& GLBM-SCB &
$\widetilde O\big( \tau_{w} +\Delta_\mu\sqrt{MT} +C_\mu\sigma d\sqrt T +
\Gamma_\mu d^2\kappa v\rho^2\Delta_\mu \big)$ \\

\rowcolor{gray!10}
\begin{tabular}[c]{@{}c@{}}\textbf{Ours} \\ \textbf{(Upper Bound)} \end{tabular} & \begin{tabular}[c]{@{}c@{}}
\galgo{} \\ Theorem \ref{thm:glbm_upper} \end{tabular}
& $\widetilde{O}\big( dM\Delta_{\mu} + \sigma d\sqrt{T} + d^2\kappa v\rho^2\Delta_{\mu} \big)$ \\

\rowcolor{gray!10}
\begin{tabular}[c]{@{}c@{}} \textbf{Ours} \\ \textbf{(Lower Bound)} \end{tabular} & Theorem
\ref{thm:glbm_lower} &
$\widetilde{\Omega}\big( dM\Delta_{\mu} + \sigma d\sqrt{T} \big)$\\

\bottomrule\end{tabular}}
\begin{minipage}{\textwidth}
\small
\setlength{\parindent}{0pt}\raggedright

\end{minipage}
\par\smallskip
\begin{minipage}{\linewidth}
\footnotesize
\raggedright
\emph{Notes.} We assume $M \geq 1$ as the models reduce to standard (generalized) linear bandits when $M=0$.
Here, $n=|\mathcal{A}|$ denotes the action set size, and $\tau_w$ denotes the cumulative warm-up regret in
\citet{anngeneralized}, which may not be sublinear in $T$ when insufficient feature coverage prevents the
warm-up stopping condition from being met. For $\rho>0$, $C_\mu$ and $\Gamma_\mu$ are parameter-dependent
factors satisfying $C_\mu\geq(1+\rho RS)^{3/2}$ and $\Gamma_\mu\gtrsim d+(MT)^{1/4}$. For \gmodel{}, $\sigma$
denotes an upper bound on the conditional standard deviation.
\end{minipage}
\end{table}

\section{Problem Setup}\label{sec:setting}

We consider linear bandits with memory (\model{}). Let $[T]\coloneqq\{1,\ldots,T\}$, and let
$\mathcal{A}\subseteq\mathbb{R}^d$ be an action set satisfying $\sup_{a\in\mathcal{A}}\|a\|_2\leq 1$. At each
round $t\in[T]$, the learner selects an action $a_t\in\mathcal{A}$ based on the observed history and receives
a noisy reward $y_t$. The reward depends on the current action, a fixed parameter
$\theta^\star\in\mathbb{R}^d$, and the memory state $s_t$. The memory state
$s_t\coloneqq(a_{t-M},\ldots,a_{t-1})$ records the preceding $M$ actions, where \(M\) is the known memory
length. We denote the memory state space by $\mathcal{S}\coloneqq\mathcal{A}^M$ and fix an initial memory
state $s_1=(a_{1-M},\ldots,a_0)$. For notation simplicity, we use $f(s_t,a_t)=s_{t+1}=\left( a_{t-M+1},
\cdots, a_t \right)$ to denote the memory state transition. Specifically, the observed reward is
\begin{equation}\label{eq:reward_function}
    y_t = \left\langle a_t,A(s_t)\theta^\star\right\rangle +\varepsilon_t.
\end{equation}
The reward parameter $\theta^\star$ is unknown and satisfies
$\theta^\star\in\Theta\coloneqq \{\theta\in\mathbb{R}^d:\|\theta\|_2\leq S\}$. The memory map
$A:\mathcal{S}\to\mathbb{R}^{d\times d}$ is known a priori and uniformly bounded in operator norm: $
\sup_{s\in\mathcal{S}}\|A(s)\|_{\mathrm{op}}\leq R. $ The noise sequence $(\varepsilon_t)_{t\geq1}$ is
conditionally zero-mean and $\sigma$-sub-Gaussian given the history and the current action. Defining the
effective feature $\phi(s,a)\coloneqq A(s)^\top a$, we can express the conditional mean reward as
\[
    r(s,a) = \left\langle a,A(s)\theta^\star\right\rangle = \left\langle\phi(s,a),\theta^\star\right\rangle.
\]

The learner aims to maximize the cumulative mean reward. We compare the learner with the optimal length-$T$
action sequence, whose cumulative mean reward is
\begin{align}\label{align_opt}
    \operatorname{OPT}_T(\theta^\star) \coloneqq \sup_{\bar a_1,\ldots,\bar a_T\in\mathcal{A}} \sum_{t=1}^T
    \left\langle \bar a_t, A(\bar a_{t-M},\ldots,\bar a_{t-1})\theta^\star \right\rangle.
\end{align}
Finally, for a policy $\pi$ generating actions $a_1^\pi,\ldots,a_T^\pi$, we define its policy regret as
\begin{equation}
    \operatorname{Reg}_T(\pi) \coloneqq \operatorname{OPT}_T(\theta^\star) - \sum_{t=1}^T
    \left\langle a_t^\pi, A(a_{t-M}^\pi,\ldots,a_{t-1}^\pi)\theta^\star \right\rangle.
\end{equation}

\begin{remark}[General memory matrix.]
Prior work on \model{}~\citep{clerici2023linear,anngeneralized} primarily considers the structured additive
memory map \(A(s_t)=\big(I+\sum_{k=1}^{M}a_{t-k}a_{t-k}^{\top}\big)^\gamma\) and points out that their methods
can be be generalized to a general map. In this paper, we relax this structural restriction by assuming a
general map \(A:\mathcal{S}\to\mathbb{R}^{d\times d}\) with only the boundedness condition
\(\sup_{s\in\mathcal{A}^M}\|A(s)\|_{\mathrm{op}}\le R,\) thereby accommodating a broader and more general
class of order-sensitive and cross-action memory dynamics.
\end{remark}

\section{Bellman-Centric Algorithm Design}\label{sec:algo}

We introduce \textit{Rare-Switch Memory LinUCB} (\algo{}; see Algorithm~\ref{al1}). This algorithm is
motivated by the observation that \model{} lies at the interface between stochastic linear bandits and
reinforcement learning (RL). The most recent $M$ actions in \model{} naturally form a state with known,
deterministic transitions, while rewards are governed by a shared unknown parameter, as in linear bandits.
Thus, \model{} requires non-myopic planning, as in RL, but only the reward parameter needs to be learned from
bandit feedback.

From the bandit side, we construct an optimistic reward function \(u_k(s,a)\) using a linear bandit confidence
set. From the RL side, we solve the corresponding Bellman equation to obtain a value function \(J_k(s)\) and
select actions by accounting for both immediate optimism and the value of the resulting memory state. This
non-myopic planning step goes beyond the batching-based treatment of memory used in prior \model{}
algorithms~\citep{clerici2023linear,anngeneralized}. To prevent frequent policy changes from repeatedly
inducing memory-state transients, we adopt a determinant-triggered rare-switching
rule~\citep{AbbasiYadkori2011ImprovedAF} and replan only after the accumulated statistical information has
increased substantially.

\begin{algorithm}[htbp]
\caption{Rare-Switch Memory LinUCB}\label{al1}
\begin{algorithmic}[1]
    \STATE \textbf{Input:} Confidence level $\delta\in(0,1)$, regularization $\lambda>0$, discount factor
    $\gamma$

    \STATE \textbf{Initialize:}
    \(V_0\gets \lambda I_d,\
    b_0\gets 0_d,\ t\leftarrow 1,\ k\gets 0\)
    \WHILE{$t\le T$}
        \STATE $k\gets k+1,\ \tau_k\gets t,\ \overline V_k\gets V_{t-1},\ \overline b_k\gets b_{t-1},\ \overline\theta_k
        \gets V_{t-1}^{-1}b_{t-1}$
        \WHILE{$t\le T$ \textbf{ and }$\det(V_{t-1})<2\det(\overline V_k)$}
            \STATE Play $a_t$ according to Equation (\ref{align_maximize_action_value_function}) with
            $u_k(s,a)$ and $J_{k}(s)$ defined as in Equations (\ref{align_calculate_reward_estimate}) and
            (\ref{align_calculate_value_function}), and observe reward $y_t$
            \STATE $V_t\gets V_{t-1}+\phi(s_t,a_t)\phi(s_t,a_t)^\top,\ b_t\gets b_{t-1}+y_t\phi(s_t,a_t),\ s_{t+1}\gets f(s_t,a_t),\ t\gets t+1$
        \ENDWHILE
    \ENDWHILE
\end{algorithmic}
\end{algorithm}

\paragraph{Input and initialization.} The algorithm takes a confidence level \(\delta\), a regularization
parameter \(\lambda\), and a discount factor \(\gamma\) as inputs. It initializes the information matrix and
reward feature vector as \(V_0=\lambda I_d\) and \(b_0=0_d\), respectively, and sets the episode counter
\(k=0\). The algorithm proceeds in episodes, during each of which the reward estimator and the resulting
policy remain fixed to limit the frequency of policy changes and to control the cost of the memory state
transients.

\paragraph{Action selection with frozen parameters.} At the beginning of episode \(k\), the algorithm records
the starting round \(\tau_k\) and freezes the current information matrix \(\overline V_k\), reward feature
vector \(\overline b_k\), and parameter estimate \(\overline\theta_k\). At round \(t\) within episode \(k\),
the algorithm selects an action that maximizes the optimistic action value, given by
\begin{align}\label{align_maximize_action_value_function}
    a_t \in \arg\max_{a\in\mathcal A} \left\{ u_k(s_t,a) + \gamma J_k(f(s_t,a)) \right\},
\end{align}
where \(u_k(s,a)\) denotes the optimistic reward estimate and \(J_k(s)\) the corresponding optimal discounted
value, defined as
\begin{align}
    & u_k(s,a) = \Pi_{[-RS,RS]} \big( \phi(s,a)^\top\overline\theta_k + \overline\beta_k
    \mnorm{\phi(s,a)}{\overline V_k^{-1}}\big),\label{align_calculate_reward_estimate}
    \\& J_k(s) = \sup_{\overline a_0,\overline a_1,\ldots} \sum_{j=0}^{\infty} \gamma^j
    u_k(\overline s_j,\overline a_j).\label{align_calculate_value_function}
\end{align}
Here,
\(\overline\beta_k=\sqrt{\lambda}S+\sigma\sqrt{2\log\big(\frac{1}{\delta}\big)+d\log\big(1+\frac{\tau_kR^2}{\lambda d}\big)}\)
is the confidence level, \(\Pi_{[-RS,RS]}\) projects the estimate to the range of rewards, and
\(\overline s_0=s\), \(\overline s_{j+1}=f(\overline s_j,\overline a_j)\) is the memory-state trajectory. With
high probability, \(u_k(s,a)\) upper bounds the expected reward of taking action \(a\) in memory state \(s\).
Built on this optimistic reward function, \(J_k(s)\) accounts for the optimistic cumulative reward obtainable
from state \(s\) by recursively evaluating future action sequences and their induced memory-state transitions.
The algorithm plays \(a_t\) and observes the reward \(y_t\).

\paragraph{Determinant-triggered policy updates.} The agent updates \(V_t\), \(b_t\), and the memory state
\(s_{t+1}\) using the selected action and the observed reward. The algorithm uses a determinant-triggered
rare-switching rule, starting a new episode only when \(\det(V_t)\geq2\det(\overline V_k)\). Thus, the
estimator used for planning and the resulting policy remain fixed within each episode, even as new
observations continue to accumulate. The selected actions may nevertheless vary as the memory state evolves.
This update rule limits the number of policy changes and the memory state transients. Therefore, any two
initial memories at the start of episode become identical after the same sequence of \(M\) actions, allowing
us to bound the transient contribution through an \(M\)-dependent value span. Unlike the predetermined
batching policies used in prior work on \model{}~\citep{clerici2023linear,anngeneralized}, our episode
boundaries adapt to the information accumulated by the learner. The determinant-doubling rule therefore
controls the frequency of replanning while ensuring that each policy update is triggered by a substantial
increase in statistical information.

\begin{remark}[Computation of $J_k$]
Action selection in Equation (\ref{align_maximize_action_value_function}) requires the value function $J_k$,
which is fixed per episode; within the episode, each round reduces to a maximization over a single action. For
small-scale problems where the memory state space $\mathcal S$ is not large, $J_k$ can be computed
near-exactly by discounted Howard policy iteration, and we show in Appendix~\ref{app_howard_policy_iteration}
that the resulting approximation error is negligible, so the regret order is preserved as in
Section~\ref{sec:theory}. For large-scale problems, the planning step may use numerical optimization methods
such as gradient ascent, which have also been used for optimistic arm selection in \model{}~\citep[see,
e.g.,][]{clerici2023linear,anngeneralized}. Moreover, application structure often admits more efficient
planners: for example, in our cache experiments in Section~\ref{sec:exp}, the memory state is determined by
the set of refreshed items, so planning reduces to a combinatorial optimization that can be solved exactly by
tree dynamic programming or sorting (see Appendix~\ref{app_exp} for the details).
\end{remark}

\section{Theoretical Guarantee}\label{sec:theory}

\subsection{Upper Bound of \algo{}}
Our analysis first establishes that, under any bounded reward function, the optimal discounted values of two
memory states differ by at most $2RSM$.

\begin{lemma}[Finite-memory value span]\label{lemma:value_span}
Let $\gamma\in(0,1)$ and $r_q:\mathcal S\times\mathcal A\to[-RS,RS]$ be any reward function. Define
$J_q(s)=\sup_{a_0,a_1,\ldots} \sum_{j=0}^{\infty}\gamma^j r_q(s_j,a_j)$, where $s_0=s$ and
$s_{j+1}=f(s_j,a_j)$. Then
\[
    \operatorname{span}(J_q) \coloneqq \sup_{s\in\mathcal S}J_q(s) - \inf_{s\in\mathcal S}J_q(s) \leq 2RSM.
\]
\end{lemma}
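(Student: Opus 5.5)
The plan is a coupling (``copy the first $M$ moves, then imitate'') argument that uses only the deterministic shift structure of $f$ and the boundedness $|r_q|\le RS$. Fix two memory states $s,s'\in\mathcal S$ and $\epsilon>0$. Choose an action sequence $a_0,a_1,\ldots$ from $s$ that is $\epsilon$-optimal, i.e.\ $\sum_{j\ge0}\gamma^j r_q(s_j,a_j)\ge J_q(s)-\epsilon$. Such a sequence exists because the supremum is finite: every term is bounded by $RS$, so $|J_q|\le RS/(1-\gamma)$. From $s'$, play the \emph{same} sequence $a_0,a_1,\ldots$ and denote the induced states by $s'_j$.

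The key structural fact is that $f$ shifts the window. After $M$ steps, $s_M=(a_0,\ldots,a_{M-1})=s'_M$, regardless of the starting state. By induction, $s_j=s'_j$ for every $j\ge M$. Hence the two discounted sums agree term by term from index $M$ onward, and
\[
J_q(s)-\epsilon-J_q(s')\le \sum_{j=0}^{\infty}\gamma^j\bigl(r_q(s_j,a_j)-r_q(s'_j,a_j)\bigr)=\sum_{j=0}^{M-1}\gamma^j\bigl(r_q(s_j,a_j)-r_q(s'_j,a_j)\bigr).
\]
Here we used $J_q(s')\ge\sum_j\gamma^j r_q(s'_j,a_j)$. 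Each summand is at most $2RS$ because $r_q$ takes values in $[-RS,RS]$, and $\gamma^j\le1$. The right-hand side is therefore at most $2RSM$.

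Let $\epsilon\downarrow0$ and then take the supremum over $s$ and the infimum over $s'$ to obtain $\operatorname{span}(J_q)\le 2RSM$. This is sharper than needed: the bound is really $2RS(1-\gamma^M)/(1-\gamma)\le 2RSM$.

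There is essentially no hard step. The points needing care are the following.
\begin{enumerate}
\item The existence of near-optimal sequences, since the supremum may not be attained when $\mathcal A$ is infinite. The $\epsilon$-argument handles this, and it needs only that $J_q$ is finite.
\item The verification that the window shift makes the trajectories merge exactly at step $M$. This follows directly from $f(s_t,a_t)=(a_{t-M+1},\ldots,a_t)$.
\item The case $M=0$, which is trivial since then $\mathcal S$ is a singleton.
\end{enumerate}
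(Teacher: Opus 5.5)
Your proof is correct and follows essentially the same route as the paper's: play one action sequence from both starting states, note that the shift structure of $f$ makes the trajectories coincide from step $M$ on, and bound the first $M$ reward differences by $2RS\sum_{j=0}^{M-1}\gamma^j\le 2RSM$. The paper takes the supremum over sequences directly where you use an $\epsilon$-optimal sequence, and the two are equivalent.
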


The key insight behind Lemma~\ref{lemma:value_span} is that applying the same sequence of $M$ actions from any
two initial memory states makes the resulting states coincide. The initial history at each episode can
therefore affect rewards during the first $M$ rounds, which bounds the effect of the starting state at each
episode boundary. Combining this property with the Bellman optimality equation and optimism yields the
following regret decomposition.

\begin{lemma}[Bellman-centric regret decomposition]
\label{lemma:regret_decomposition}
Consider Algorithm~\ref{al1} with $T\geq2$ and $\gamma=1-1/T$. Let $\pi$ denote the policy generated by
\Cref{al1}, $K$ the total number of episodes, and $k(t)$ the episode containing round $t$. Then, with
probability at least $1-\delta$,
\begin{align*}
    \operatorname{Reg}_T(\pi) \leq 2(K+2)RSM+ \sum_{t=1}^{T} \left[ u_{k(t)}(s_t^\pi,a_t^\pi) -
    r_{\theta^\star}(s_t^\pi,a_t^\pi) \right].
\end{align*}
\end{lemma}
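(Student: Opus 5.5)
We plan to work on the high-probability event $\mathcal E$ from the self-normalized confidence bound of \citet{AbbasiYadkori2011ImprovedAF}. On $\mathcal E$ we have $\|\overline\theta_k-\theta^\star\|_{\overline V_k}\le\overline\beta_k$ for all $k$, and this event has probability at least $1-\delta$. Since $r_{\theta^\star}\in[-RS,RS]$, projecting onto $[-RS,RS]$ preserves the upper bound, so $u_k(s,a)\ge r_{\theta^\star}(s,a)$ for all $(s,a)$ and all $k$. Optimism therefore transfers to values: letting $J^\star$ be the discounted optimal value under $r_{\theta^\star}$, we get $J_k\ge J^\star$ pointwise.

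We then rewrite the benchmark. Let $\bar a_1,\dots,\bar a_T$ be (near-)optimal for $\operatorname{OPT}_T(\theta^\star)$, started from $s_1$. Weights $\gamma^{t-1}$ with $\gamma=1-1/T$ satisfy $\gamma^{t-1}\ge(1-1/T)^{T}\ge 1/4$ for $T\ge 2$, but they are not close to $1$, so a naive comparison loses a constant factor. We therefore avoid this and use instead the average-reward-style Bellman equation of each $J_k$: \[J_k(s)=\max_a\{u_k(s,a)+\gamma J_k(f(s,a))\}.\] Inside episode $k$ the learner attains this maximum, so $u_k(s_t,a_t)=J_k(s_t)-\gamma J_k(s_{t+1})$. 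Summing over the rounds $t\in[\tau_k,\tau_{k+1})$ gives \[\sum_t u_k(s_t,a_t)=J_k(s_{\tau_k})-J_k(s_{\tau_{k+1}})+(1-\gamma)\sum_{t} J_k(s_{t+1}).\] Along the comparator sequence, the Bellman inequality gives $u_k(\bar s_t,\bar a_t)\le J_k(\bar s_t)-\gamma J_k(\bar s_{t+1})$. To relate the two, we center $J_k$ by subtracting $J_k$ at a reference state, writing $h_k=J_k-J_k(s_{\rm ref})$. By Lemma~\ref{lemma:value_span}, $|h_k|\le 2RSM$, and $(1-\gamma)J_k$ is then approximately a constant gain $g_k$, with error at most $(1-\gamma)\,2RSM$ per round, i.e.\ $2RSM/T$, which sums to $O(RSM)$ over the horizon.

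The main step is to compare $\sum_t r_{\theta^\star}(\bar s_t,\bar a_t)\le\sum_t u_{k(t)}(\bar s_t,\bar a_t)$ against the learner's $\sum_t u_{k(t)}(s_t,a_t)$ episode by episode, on the same time windows $[\tau_k,\tau_{k+1})$. For both sequences the sums telescope to $g_k\cdot(\text{episode length})$ plus boundary terms $h_k(\cdot)$ at the two endpoints, plus the $(1-\gamma)$ drift terms. The learner's sum holds with equality, while the comparator's holds with $\le$. Each episode boundary therefore costs at most $\operatorname{span}(h_k)\le 2RSM$. The term $2(K+2)RSM$ in the statement accounts for these $K$ episode boundaries together with the $O(1)$ extra terms: the start/end of the horizon and the accumulated $(1-\gamma)$ slack, which contributes at most about $2\cdot 2RSM/T\cdot T$ after careful bookkeeping. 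Subtracting the learner's true reward then gives exactly the displayed sum $\sum_t[u_{k(t)}-r_{\theta^\star}]$ along the learner's trajectory.

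The delicate step is making the $(1-\gamma)J_k$ terms cancel between the comparator and the learner without losing a factor. The plan is to express both through $g_k$ plus $h_k$ corrections, so that only $(1-\gamma)\sum_t|h_k|\le 2RSM$ survives per sum, giving the ``$+2$'' in $K+2$.
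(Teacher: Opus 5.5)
Your ingredients are the right ones: the confidence event, optimism surviving the clipping, the exact Bellman equality along the learner's path inside an episode, and the span bound of Lemma~\ref{lemma:value_span}. A pathwise comparison of this kind does give a memory term of order $K\,RSM$. It does not, however, give the stated constant $2(K+2)$. The step that fails is ``each episode boundary costs at most $\operatorname{span}(h_k)\le 2RSM$.''

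Once you replace the comparator's reward by $u_{k(t)}(\bar s_t,\bar a_t)$, its telescoping is cut at every $\tau_k$, because the value function switches from $J_k$ to $J_{k+1}$. At an interior boundary $\tau_{k+1}$ you collect two terms: $-\gamma\,[J_k(\bar s_{\tau_{k+1}})-J_k(s_{\tau_{k+1}})]$ from episode $k$, and $+[J_{k+1}(\bar s_{\tau_{k+1}})-J_{k+1}(s_{\tau_{k+1}})]$ from episode $k+1$. These come from different optimistic rewards, so nothing prevents them from having opposite signs with magnitude $2RSM$ each. Carrying this out, and even using $\bar s_1=s_1$, the best your bookkeeping certifies is $4K(1-1/T)RSM$. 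That exceeds $2(K+2)RSM$ once $K\ge 3$ and $T>6$. It is still enough for the $\widetilde O(dRS(M+1))$ rate of Theorem~\ref{th_upper_bound}, but it does not prove the lemma as stated.

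The missing idea is to run the comparator through a single, episode-independent value function and to use optimism only in scalar form. Let $J^\star$ be the discounted value for $r_{\theta^\star}$ and $c^\star=\inf_s J^\star(s)$. The Bellman inequality $r_{\theta^\star}(s,a)\le(1-\gamma)c^\star+(J^\star(s)-c^\star)-\gamma(J^\star(f(s,a))-c^\star)$ telescopes along any comparator sequence with only one boundary term. This gives $\operatorname{OPT}_T\le T(1-\gamma)c^\star+2RSM[1+(T-1)(1-\gamma)]\le c^\star+4RSM$.

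For the learner, let $E_k$ be the rounds of episode $k$ and $L_k=|E_k|$. The exact Bellman equality in episode $k$, centered at $c_k=\inf_s J_k(s)$, gives $\sum_{t\in E_k}u_k(s_t,a_t)\ge L_k(1-\gamma)c_k-2RSM$. The starting-boundary and drift terms are nonnegative and can be dropped; only the terminal term $-\gamma(J_k(s_{\tau_{k+1}})-c_k)$ hurts.

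Your own observation $J_k\ge J^\star$ yields $c_k\ge c^\star$, so every episode gain can be replaced by the common $c^\star$. Since $\sum_k L_k(1-\gamma)=1$, these cancel exactly against the $c^\star$ in the bound on $\operatorname{OPT}_T$. The result is $4RSM+2KRSM=2(K+2)RSM$. The comparator pays its boundary and drift cost once, which is the ``$+2$'' (not drift from two sums). The learner pays $2RSM$ per episode.
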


Lemma~\ref{lemma:regret_decomposition} separates the regret into a \textit{memory-induced error} and the
cumulative reward \textit{estimation error} along the learner's trajectory. The first term bounds the error of
starting each episode from a potentially unfavorable memory state. Our determinant-doubling rule ensures
$K=O(d\log T)$, so this error contributes only an additive memory-dependent term to the regret bound. The
second term can be controlled by using ridge-regression confidence bounds and the elliptical potential
lemma~\citep{AbbasiYadkori2011ImprovedAF}. Specifically, the determinant doubling trick ensures that freezing
the estimator within each episode increases the confidence intervals by at most a constant factor.

\begin{theorem}[Upper Bound of Algorithm~\ref{al1}]\label{th_upper_bound}
For $T\geq2$, with inputs $0<\lambda\leq d\sigma^2/S^2$, and $\gamma=1-1/T$, the regret of Algorithm~\ref{al1}
satisfies
\begin{small}
\begin{align*}
    \textnormal{Reg}_T(\textnormal{RSM-LinUCB}) &\leq \left(4d\log\left( 1 + \frac{TR^2}{\lambda d} \right) +
    6 \right)RSM + 4RSd\log\left( 1+\frac{TR^2}{\lambda d} \right)
    \\& + 4\sqrt{Td\log\left( 1+\frac{TR^2}{\lambda d} \right)}\left( \sigma\sqrt{d\log\left( 1 +
    \frac{TR^2}{\lambda d} \right) + 2\log\left( \frac{1}{\delta} \right)} + \sqrt\lambda S \right)
    \\&=\widetilde O \big( dRS(M+1)+\sigma d\sqrt{T} \big)
\end{align*}
\end{small}
with probability at least $1-\delta$, where $\widetilde O$ omits logarithmic factors.
\end{theorem}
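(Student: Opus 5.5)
The proof starts from Lemma~\ref{lemma:regret_decomposition}, which holds on the high-probability event where $\theta^\star$ lies in every frozen confidence ellipsoid $\{\theta:\|\theta-\overline\theta_k\|_{\overline V_k}\le\overline\beta_k\}$. This event comes from the self-normalized bound of \citet{AbbasiYadkori2011ImprovedAF}, used with $\|\phi(s,a)\|_2\le R$. Two things then remain: bound the number of episodes $K$, and bound the cumulative optimistic estimation error $\sum_t[u_{k(t)}(s_t,a_t)-r_{\theta^\star}(s_t,a_t)]$.

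\textbf{Step 1: bounding $K$.} A new episode begins only when $\det(V_{t-1})\ge 2\det(\overline V_k)$. The last round of the horizon may also end an episode without doubling. Hence $2^{K-1}\le \det(V_T)/\det(\lambda I)$. By AM--GM on the trace, $\det(V_T)\le(\lambda+TR^2/d)^d$. Together these give
\[
K\le 1+d\log_2\Big(1+\tfrac{TR^2}{\lambda d}\Big)\le 1+2d\log\Big(1+\tfrac{TR^2}{\lambda d}\Big).
\]
Substituting into $2(K+2)RSM$ yields the term $(4d\log(1+TR^2/(\lambda d))+6)RSM$.

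\textbf{Step 2: per-round estimation error.} On the good event,
\[
u_k(s,a)-r(s,a)\le \min\{2RS,\,2\overline\beta_k\|\phi(s,a)\|_{\overline V_k^{-1}}\}.
\]
The projection onto $[-RS,RS]$ keeps the optimistic value above $r$, since $r\in[-RS,RS]$, and caps the gap at $2RS$. Within episode $k$ we have $V_{t-1}\succeq\overline V_k$ and $\det V_{t-1}<2\det\overline V_k$. The standard determinant-ratio lemma then gives $\|x\|_{\overline V_k^{-1}}^2\le 2\|x\|_{V_{t-1}^{-1}}^2$, so freezing costs at most a factor $\sqrt2$. Also $\overline\beta_k\le\beta_T$, the confidence radius evaluated at $\tau_k=T$. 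Splitting rounds according to whether $\|\phi_t\|_{V_{t-1}^{-1}}\le1$, the per-round error is at most
\[
2RS\,\mathbb 1\{\|\phi_t\|_{V_{t-1}^{-1}}>1\}+2\sqrt2\beta_T\min\{1,\|\phi_t\|_{V_{t-1}^{-1}}\}.
\]

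\textbf{Step 3: summing.} For the first term, the elliptical potential argument bounds the number of rounds with $\|\phi_t\|_{V_{t-1}^{-1}}>1$ by about $2d\log(1+TR^2/(\lambda d))$, since each such round multiplies the determinant by at least $2$. This gives the $4RSd\log(\cdot)$ term. For the second term, Cauchy--Schwarz and the elliptical potential lemma give $\sum_t\min\{1,\|\phi_t\|^2\}\le 2d\log(1+TR^2/(\lambda d))$. This yields
\[
2\sqrt2\,\beta_T\sqrt{2Td\log(\cdot)}=4\beta_T\sqrt{Td\log(\cdot)},
\]
which matches the stated third term after expanding $\beta_T$.

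\textbf{Step 4: order.} The condition $\lambda\le d\sigma^2/S^2$ gives $\sqrt\lambda S\le\sigma\sqrt d$, so the third term is $\widetilde O(\sigma d\sqrt T)$. The first two terms together are $\widetilde O(dRS(M+1))$.

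The main obstacle is Step 2's constant bookkeeping. It requires checking that the $2RS$ cap, the $\sqrt2$ freezing factor, and the accounting of large-norm rounds combine to exactly the stated constants. In particular, it must be verified that the frozen-episode determinant ratio is measured against $V_{t-1}$ and not $V_t$, so that the doubling condition from the inner loop applies.
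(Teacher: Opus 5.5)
Your proposal is correct and follows the paper's proof essentially step for step. The ingredients are the same: Lemma~\ref{lemma:regret_decomposition} on the self-normalized confidence event, the episode count $K\le 1+2d\log(1+TR^2/(\lambda d))$ from determinant doubling, the $\sqrt2$ freezing factor from $\overline V_k^{-1}\preceq 2V_{t-1}^{-1}$, and the elliptical potential lemma combined with Cauchy--Schwarz. Together these yield exactly the stated constants.

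The only cosmetic difference is in the Cauchy--Schwarz step. You apply it to $\min\{1,\|\phi_t\|_{V_{t-1}^{-1}}\}$ over all rounds, while the paper restricts it to rounds with $\|\phi_t\|_{V_{t-1}^{-1}}\le 1$. Both give the same $4\beta_T\sqrt{Td\log(1+TR^2/(\lambda d))}$ term.
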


The regret bound consists of two additive terms. The first, $\widetilde O(dRS(M+1))$, captures the
memory-induced error of starting each episode from a potentially unfavorable state. The finite memory
value-span bound limits this error to $O(RSM)$ per episode, while determinant doubling limits the number of
episodes to $\widetilde O(d)$. The additional $\widetilde O(dRS)$ term accounts for the initial exploration
error present even without memory that also exists in ordinary linear bandits. The second term,
$\widetilde O(\sigma d\sqrt T)$, accounts for reward estimation error. Under the prescribed regularization,
this term has no polynomial dependence on the parameter-scale factors $R$ (for memory matrix $A$) or $S$ (for
reward parameter $\theta^*$).

\begin{remark}
Compared with the best upper bound
$\widetilde O\big(RS\sqrt{dMT}+\sigma\max\{1,S\}\sqrt{d\max\{d,R^2\}T}\big)$ from prior
work~\citep{anngeneralized}, our result not only clearly separates the memory-dependent cost from the
$\sqrt T$ estimation error term, but also removes the additional parameter-scale factors ($R$ and $S$) from
the latter (see Table~\ref{tab:comparisons} for a detailed comparison). Thus, larger $M$ or $R$ only increases
the additive memory term rather than the coefficient of the leading $\sqrt T$ term, up to logarithmic factors.
This separation is particularly beneficial when the memory length $M$ is large. The lower bound below
establishes that these dependencies are near-optimal up to logarithmic factors.
\end{remark}

\subsection{Lower Bound}

\begin{theorem}[Lower Bound of LBM]
\label{th_lower_bound}
Suppose $d\geq2$ and \(T\geq d\max\left\{ M,1,\sigma^2d/\left(48R^2S^2\right) \right\}.\) There exists an
admissible action set such that the minimax regret of \model{} satisfies
\[
    \inf_{\pi} \sup_{\substack{ \theta^\star:\ \|\theta^\star\|_2\leq S\\
    A:\ \|A\|_{\mathrm{op}}\leq R }} \mathbb E\left[\operatorname{Reg}_T(\pi)\right]
    \geq \Omega\big(dRS(M+1)+\sigma d\sqrt T\big).
\]
\end{theorem}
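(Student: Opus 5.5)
The lower bound is the maximum of two separate constructions, and since $\max\{x,y\}\ge (x+y)/2$ it suffices to show separately that the minimax regret is $\Omega(\sigma d\sqrt T)$ and $\Omega(dRS(M+1))$. Both use the same admissible action set: a hypercube-type set $\mathcal A=\{a\in\{\pm 1/\sqrt d\}^d\}$, possibly augmented with a zero action $0_d$ so that idling is possible. Both also use memory maps with $\|A(s)\|_{\mathrm{op}}\le R$ and parameters with $\|\theta^\star\|_2\le S$.

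\textbf{Statistical term.} Set $A(s)\equiv R I_d$ for every $s$, so memory plays no role and the model is a standard linear bandit with effective parameter $R\theta^\star$ on the hypercube. I will follow the standard lower-bound argument of Lattimore--Szepesv\'ari for hypercube actions. Take $\theta^\star=\epsilon\,\xi$ with $\xi\in\{\pm1\}^d$ and $\epsilon\asymp \sigma/(R\sqrt{T})$, scaled so that $R\epsilon\sqrt d\cdot\sqrt d\asymp \sigma\sqrt d/\sqrt T$ per coordinate. Then:
\begin{itemize}
\item Regret decomposes coordinatewise: each round in which the sign of coordinate $i$ is wrong costs $\asymp R\epsilon/\sqrt d$.
\item Flipping $\xi_i$ changes the KL divergence between the observation laws by $O(T R^2\epsilon^2/(d\sigma^2))$, which is made $O(1)$.
\item Assouad's lemma then gives regret $\Omega(d\cdot T\cdot R\epsilon/\sqrt d)=\Omega(\sigma d\sqrt T)$ after choosing $\epsilon=c\,\sigma\sqrt d/(R\sqrt T)$.
\end{itemize}
The condition $T\ge \sigma^2 d^2/(48R^2S^2)$ is exactly what guarantees $\|\theta^\star\|_2=\epsilon\sqrt d\le S$.

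\textbf{Memory term.} This is the main obstacle. The argument has to be information-theoretic in the cost of discovering the payoff sequence, not in noise, since noise could even be zero. The idea is that memory can hide the reward of a pattern until $M$ committed actions have been taken, and a wrong guess costs $\Theta(RS)$ per round for about $M$ rounds.

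I would split coordinates into $d/2$ pairs (or use $d$ coordinates with $\theta^\star$ the scaled indicator of one unknown direction) and design $A(s)$ to be $R$ times a projection that is nonzero only when the memory state consists of a specific unknown pattern. Concretely:
\begin{itemize}
\item $A(s)=R\, e_{j}e_{j}^\top$ if the last $M$ actions all point "toward" coordinate $j$, and $A(s)=0$ otherwise.
\item $\theta^\star=S e_{j^\star}$ or $-S e_{j^\star}$, with $j^\star$ unknown.
\end{itemize}
To obtain reward, the learner must commit $M$ consecutive rounds to a candidate direction. Before the commitment completes the reward is $0$, while the optimal sequence earns $\asymp RS/\sqrt d$ per round. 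Normalizing actions so that the reward is $\Theta(RS)$ (for example, actions $\pm e_i$ on a cross-polytope, which satisfy $\|a\|\le1$), each wrong candidate wastes $M$ rounds at $\Theta(RS)$ cost. Since rewards are zero unless the pattern is correct, nothing is learned except by trial, so an adversarial (uniform) choice of $j^\star$ among $d$ directions forces $\Omega(d)$ expected trials. This gives $\Omega(dRSM)$, which needs $T\ge dM$, the stated assumption. The $+1$ in $M+1$ (that is, the $\Omega(dRS)$ term when $M$ is small) follows similarly, or from the $M=0$ linear-bandit case.

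The delicate point, which I would check carefully, is that noise or other observations cannot reveal $j^\star$ early. With $A(s)=0$ off-pattern, the observations are pure noise independent of $j^\star$, so a standard "needle in a haystack" argument gives expected search cost at least $\frac{d-1}{2}M\cdot RS$. Finally, combine the two constructions by taking the worse instance, yielding $\Omega(dRS(M+1)+\sigma d\sqrt T)$.
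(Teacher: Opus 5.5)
Your proposal is correct and follows essentially the paper's proof. It uses the same pattern-state memory construction ($A(s)=Re_je_j^\top$ only at $s=e_j^M$, $\theta^\star=Se_{j^\star}$, coupling with the null environment, and $M$-separated pattern hitting times, giving $(d-1)RSM/2$ when $T\ge dM$), together with the standard linear-bandit lower bound for $A\equiv RI_d$; you run Assouad on the hypercube, whereas the paper cites the unit-ball version of Lattimore--Szepesv\'ari.

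Two small repairs are needed. First, your two constructions do not actually share an action set, despite your opening claim: you use the hypercube for the statistical term and the cross-polytope for the memory term. So either pick the action set according to which term dominates (allowed, since it may depend on the parameters), or use the unit ball for both as the paper does. Second, you should fix a non-pattern initial memory such as $0^M$, and hence include $0_d$ in the action set; this is what guarantees $\overline\tau_{(1)}\ge M+1$ and $\mathrm{OPT}_T=RS(T-M)$. In exchange, your cross-polytope choice turns the $M=0$ case into an elementary first-visit argument, where the paper appeals to an external ellipsoid lower bound.
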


The upper bound in Theorem \ref{th_upper_bound} matches the lower bound in Theorem \ref{th_lower_bound} up to
logarithmic factors. The first memory-induced error reflects the cost of obtaining information that is
accessible only through a suitable action history. In our hard instance, the learner must repeatedly select a
coordinate direction for $M$ rounds before rewards can reveal whether that coordinate is informative.
Identifying the rewarding coordinate among $d$ candidates can therefore require $\Omega(dM)$ rounds, with a
reward gap of order $RS$ during unsuccessful exploration. This yields the $\Omega(dRSM)$ term, while the
additional $\Omega(dRS)$ term accounts for the initial exploration error already present in ordinary linear
bandits, giving $\Omega(dRS(M+1))$ overall. The statistical estimation error $\Omega(\sigma d\sqrt T)$ follows
from the standard lower bound in linear bandits~\citep{lattimore2020bandit}, since ordinary linear bandits are
a special case of \model{} with a constant memory matrix.

\section{Extension to Generalized Linear Bandits with Memory}\label{sec:glbm}

Our Bellman-centric design achieves near-optimal regret for \model{}. We next ask whether the same idea can
improve prior bounds for generalized linear rewards~\citep{anngeneralized} and yield near-optimal leading
regret terms, as summarized in Table~\ref{tab:comparisons}. The extension is not immediate: in generalized
linear models the slope of the link function depends on the unknown parameter, which makes uncertainty
estimation considerably harder than in the linear case. In this section, we extend our algorithm to
generalized rewards, show that it achieves an additive memory error, and prove that the leading terms of the
resulting bound are near-optimal.

\subsection{Problem Setup}

Generalized linear bandits with memory (\gmodel{}) retain the same action space $\mathcal A$, known memory
matrix $A(\cdot)$, and norm bounds $R$ and $S$ as in \model{}. Similarly, we write $\phi(s,a)=A(s)^\top a$ and
$z(s,a)=\phi(s,a)^\top\theta^*$. Unlike \model{}, where the conditional reward mean is $z(s,a)$, \gmodel{}
models the reward through a known canonical exponential family~\citep{lee2024glm, liu2024almost,
anngeneralized}. Specifically, an observation $y$ at $(s,a)$ has density or probability mass function
\[
    \mathbb{P}[y\mid z(s,a)] = \exp\left( \frac{yz(s,a)-b(z(s,a))}{v}+c(y,v) \right),
\]
where $v>0$ is the known dispersion parameter which controls the noise level of the observed reward, and
$\mu(\cdot)=b'(\cdot)$ is the known link function. The conditional reward mean and variance are therefore
given by $\mathbb{E}[y\mid z(s,a)]=\mu(z(s,a))$ and $\operatorname{Var}[y\mid z(s,a)]=v\mu'(z(s,a))$,
respectively.

Following prior work, we assume that $\mu$ is twice continuously differentiable and satisfies $\mu'(z)>0$ and
$|\mu''(z)|\le\rho\mu'(z)$ for every $|z|\le RS$, where the known constant $\rho\ge0$ controls the relative
variation of its slope. We further define
\begin{align}\label{align_definition_in_glbm}
    U_\mu \coloneqq \sup_{|z|\le RS}\mu'(z),
    \
    \kappa \coloneqq \Big(\inf_{|z|\le RS}\mu'(z)\Big)^{-1},
    \
    \Delta_\mu \coloneqq \mu(RS)-\mu(-RS).
\end{align}
Here, $U_\mu$ bounds how fast the reward changes as $z(s,a)$ varies, $\kappa$ quantifies the worst-case
flatness of the link function, and $\Delta_\mu$ bounds the range of reward means. The definitions of
$\mathrm{OPT}_T$ and $\mathrm{Reg}_T$ remain unchanged from Section~\ref{sec:setting}, with the mean reward
replaced by $r(s,a)=\mu(\phi(s,a)^\top\theta^\star)$.

\subsection{Algorithm and Theoretical Guarantee}

In \gmodel{}, the slope of the mean reward function depends on the unknown parameter, making uncertainty
estimation more challenging than in the linear model. We therefore propose \textit{Rare-Switch Memory GLMUCB}
(\galgo{}; see Algorithm~\ref{al2}), which uses maximum likelihood estimation (MLE) to construct confidence
sets $\mathcal C_k$ at each episode while retaining the discounted planning approach of Algorithm~\ref{al1}.
The algorithm intersects each new confidence set with the previous one so that uncertainty does not increase
across episodes. We also maintain two information matrices $V_t$ and $H_t$: the former one uses all selected
actions, while the latter one weights them by the estimated reward slope when predictions are sufficiently
accurate. These weights, similar to the rare-switch policy in Algorithm~\ref{al1}, are fixed once recorded,
and we update the policy only when either determinant doubles. This allows the algorithm to account for
generalized rewards while keeping policy updates infrequent. Due to space limits, we defer further details of
Algorithm~\ref{al2} to Appendix~\ref{app_glbm}.

\begin{algorithm}[htbp]
\caption{Rare-Switch Memory GLMUCB}
\label{al2}
\begin{algorithmic}[1]
    \STATE \textbf{Input:} Confidence level $\delta\in(0,1)$; regularization $\lambda_1,\lambda_2>0$; discount
    factor $\gamma\in(0,1)$.
    \STATE \textbf{Initialize:} $V_0\gets\lambda_1 I_d$, $H_0\gets\lambda_2 I_d$, $\mathcal C_0\gets\Theta$,
    $t\gets1$, $k\gets0$.
    \WHILE{$t\le T$}
        \STATE $k\gets k+1$, $\tau_k\gets t$, $\overline V_k\gets V_{t-1}$, $\overline H_k\gets H_{t-1}$.
        \STATE Compute $\widehat\theta_k$, $\mathcal C_k$, and $\overline\theta_k$ according to
        \Cref{eq:glbm_confidence_set}.
        \WHILE{$t\le T$ and $\max\left\{\frac{\det(V_{t-1})}{\det(\overline V_k)},
        \frac{\det(H_{t-1})}{\det(\overline H_k)}\right\}< 2$}
            \STATE Play $a_t$ according to Equation~(\ref{eq:glbm_policy}) with $u_k(s,a)$ and $J_k(s)$
            defined as in Equation~(\ref{eq:glbm_optimistic_reward}), and observe $y_t$. Then compute $w_t$
            according to Equation~(\ref{eq:glbm_curvature_weight}).
            \STATE $V_t=V_{t-1}+\phi(s_t,a_t)\phi(s_t,a_t)^\top,\ H_t=H_{t-1}+w_t\phi(s_t,a_t)\phi(s_t,a_t)^\top$,
            $s_{t+1}\gets f(s_t,a_t)$, $t\gets t+1$.
        \ENDWHILE
    \ENDWHILE
\end{algorithmic}
\end{algorithm}

\begin{theorem}[Upper bound of Algorithm~\ref{al2}]
\label{thm:glbm_upper}
For $T\ge2$, \( 0<\lambda_1\leq\kappa vd/S^2, 0<\lambda_2\leq vd/S^2, \gamma=1-1/T. \) Under generalized
reward, with probability at least $1-\delta$, the regret of Algorithm~\ref{al2} satisfies
\[
    \operatorname{Reg}_T(\textnormal{\galgo{}}) \le \widetilde O\big( d(M+1)\Delta_\mu + d\sqrt{vU_\mu T} +
    d^2\kappa v\rho^2\Delta_\mu \big).
\]
\end{theorem}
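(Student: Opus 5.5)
The plan mirrors the analysis of Algorithm~\ref{al1}, replacing the ridge-regression confidence argument with a GLM argument. Three ingredients carry the proof: a Bellman-type regret decomposition, a count of the episodes, and a bound on the optimistic reward error summed along the trajectory.

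\textbf{Regret decomposition.} Define $u_k(s,a)=\Pi_{[\mu(-RS),\mu(RS)]}\big(\sup_{\theta\in\mathcal C_k}\mu(\phi(s,a)^\top\theta)\big)$, or its bonus-form surrogate from \Cref{eq:glbm_optimistic_reward}. Each $u_k$ takes values in an interval of length $\Delta_\mu$, so Lemma~\ref{lemma:value_span} applies after centering the reward. Its proof only uses that rewards lie in an interval of length $2RS$ and that $M$ common actions synchronize two memory states. Hence $\operatorname{span}(J_k)\le M\Delta_\mu$.

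On the event $\theta^\star\in\mathcal C_k$ for all $k$, optimism gives $u_k\ge r$. Repeating the argument of Lemma~\ref{lemma:regret_decomposition} verbatim with $\gamma=1-1/T$ then yields
\[
\operatorname{Reg}_T\le (K+2)M\Delta_\mu+\sum_t\big[u_{k(t)}(s_t,a_t)-r(s_t,a_t)\big].
\]
The argument compares the discounted optimal values of the optimistic and true problems, telescopes the Bellman equation within each episode, and pays one span per episode boundary plus $O(M\Delta_\mu)$ for the discounting and truncation at $T$.

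\textbf{Episode count.} An episode ends only when $\det V$ or $\det H$ doubles. We have $\det V_T\le(\lambda_1+TR^2/d)^d$ and $\det H_T\le(\lambda_2+TR^2U_\mu/d)^d$ since $w_t\le U_\mu$. Therefore $K\le 2+d\log_2(1+TR^2/(\lambda_1 d))+d\log_2(1+TR^2U_\mu/(\lambda_2 d))=\widetilde O(d)$. This gives the $d(M+1)\Delta_\mu$ term.

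\textbf{Estimation error.} The confidence set is built so that $\mathcal C_k\subseteq\mathcal C_{k-1}$ and, with probability $1-\delta$, $\theta^\star\in\mathcal C_k$. This is the standard self-concordant likelihood-ratio or MLE confidence set, with width $\beta_k=\widetilde O(\sqrt{dv})$ in the $H$-norm, using $|\mu''|\le\rho\mu'$. For any $\theta\in\mathcal C_k$, a Taylor expansion with generalized self-concordance gives
\[
\mu(\phi^\top\theta)-\mu(\phi^\top\theta^\star)\le \mu'(\phi^\top\theta^\star)\,|\phi^\top(\theta-\theta^\star)|+\rho\,\mu'\cdot|\phi^\top(\theta-\theta^\star)|^2 .
\]
The weights $w_t$ are recorded only when predictions are accurate, i.e.\ when $\rho|\phi^\top(\theta-\theta^\star)|\lesssim1$. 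On such rounds $w_t\asymp\mu'(z_t)$, and the first-order term is bounded by $\beta\sqrt{w_t}\|\phi_t\|_{\overline H_k^{-1}}\cdot\sqrt{\mu'}/\sqrt{w_t}$. Freezing at episode start costs only a factor $\sqrt2$ by determinant doubling. Cauchy–Schwarz together with the elliptical potential lemma on $H$ then gives
\[
\sqrt{\textstyle\sum_t \mu'(z_t)}\cdot\beta\sqrt{d\log T}\le d\sqrt{vU_\mu T}\cdot\mathrm{polylog}.
\]
The remaining rounds, where the weight is zero or the second-order term dominates, are charged to the $V$-matrix. On them $\|\phi_t\|_{\overline V_k^{-1}}$ must exceed roughly $1/(\rho\beta_V)$ with $\beta_V=\widetilde O(\sqrt{\kappa v d})$. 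The elliptical potential lemma in its counting form (at most $\widetilde O(d\,\beta_V^2\rho^2)$ rounds with large $V^{-1}$-norm) shows there are $\widetilde O(d^2\kappa v\rho^2)$ such rounds, each costing at most $\Delta_\mu$. The second-order terms on good rounds are similarly bounded by $\rho\beta^2\sum\|\phi\|^2_{H^{-1}}$, which is lower order. The regularization choices $\lambda_1\le\kappa vd/S^2$ and $\lambda_2\le vd/S^2$ keep $\sqrt{\lambda}S$ inside $\beta$ at the $\sqrt{dv}$ scale.

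\textbf{Main obstacle.} The step I expect to be hardest is the confidence set and weight bookkeeping: showing that the fixed-once-recorded weights $w_t$, computed from a frozen estimate, are within constant factors of the true slope $\mu'(z_t)$ on accepted rounds. This is what makes the $H$-norm self-normalized bound valid with a $\theta$-independent design. I would first handle it via the bound $\mu'(z)\ge\mu'(z')e^{-\rho|z-z'|}$ (Grönwall from $|\mu''|\le\rho\mu'$), combined with the accuracy threshold defining $w_t$. The intersection $\mathcal C_k\cap\mathcal C_{k-1}$ ensures monotone widths, so a single union bound over episodes suffices. Combining the three pieces gives the stated $\widetilde O\big(d(M+1)\Delta_\mu+d\sqrt{vU_\mu T}+d^2\kappa v\rho^2\Delta_\mu\big)$.
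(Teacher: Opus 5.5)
Your skeleton matches the paper's. You use the same span-based Bellman decomposition $\operatorname{Reg}_T\le(K+2)M\Delta_\mu+\sum_t[u_{k(t)}-r]$ and the same determinant-doubling episode count. You also split the estimation error the same way: positive-weight rounds are handled by an $\overline H_k$-width, Cauchy--Schwarz against $\sum_t w_t\le TU_\mu$, and the elliptical potential on $\sqrt{w_t}\phi_t$; zero-weight rounds are counted through the $\overline V_k$-width.

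The gap is the step you call standard and then flag as the main obstacle: that $\mathcal C_k$ has $\overline H_k$-diameter $\widetilde O(\sqrt{dv})$. The mechanism you sketch is to check $w_j\asymp\mu'(\phi_j^\top\theta^\star)$ on accepted rounds and then apply an $H$-norm self-normalized bound. That does not deliver this width, for three reasons:
\begin{itemize}
\item The score $\sum_j\varepsilon_j\phi_j$ contains the noise of the zero-weight rounds. Their conditional variance can be as large as $vU_\mu$, yet they contribute nothing to $\overline H_k$, so $\|\sum_j\varepsilon_j\phi_j\|_{\overline H_k^{-1}}$ is not $\widetilde O(\sqrt{dv})$ in general.
\item Expanding around $\widehat\theta_k$ instead does not help directly. $\widehat\theta_k$ need not lie in $\mathcal C_{k-1}$, so the recorded weights give no control of the curvature along the segment from $\widehat\theta_k$.
\item Comparing $w_j$ with the slope at $\theta^\star$ is not the comparison the width argument needs. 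To use the recorded weights as a curvature lower bound, you need $\mu'(\phi_j^\top\zeta)\ge\mathrm e^{-1}w_j$ at every $\zeta$ of the \emph{later} set $\mathcal C_k$.
\end{itemize}

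The paper gets the width with no concentration argument beyond the time-uniform likelihood-ratio event. Take $\theta,\theta'\in\mathcal C_k$. The threshold and the optimality of $\widehat\theta_k$ give $\tfrac12[\mathcal L_{\tau_k-1}(\theta)+\mathcal L_{\tau_k-1}(\theta')]-\mathcal L_{\tau_k-1}(\tfrac{\theta+\theta'}{2})\le vc_T(\delta)$. The midpoint inequality for a convex function whose Hessian on the segment dominates $\sum_j m_j\phi_j\phi_j^\top$ turns this into $\tfrac18\sum_{j<\tau_k}m_j(\phi_j^\top(\theta-\theta'))^2\le vc_T(\delta)$.

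Taking $m_j=\mathrm e^{-1}w_j$ is valid on all of $\mathcal C_k$ exactly because of the intersection. Since $\mathcal C_k\subseteq\mathcal C_{k(j)}$ and $\mathcal C_k$ is convex, every $\zeta$ on the segment and $\overline\theta_{k(j)}$ both lie in $\mathcal C_{k(j)}$. There the recorded certificate $\rho\,\omega_{k(j)}(\phi_j)\le1$ together with the slope comparison $\mu'(z)\ge\mathrm e^{-\rho|z-z'|}\mu'(z')$ applies. That, not a union bound over episodes, is what nesting is for; the likelihood-ratio event is already uniform in time.

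This yields $\|\theta-\theta'\|_{\overline H_k}^2\le 8\mathrm e\,vc_T(\delta)+4\lambda_2S^2$. With $m_j=1/\kappa$ it gives the $\overline V_k$ bound you use to count zero-weight rounds. After that, your remaining steps go through. On accepted rounds you can even drop the second-order Taylor term: $\mu'(\phi_t^\top\theta)\le\mathrm e\,w_t$ on all of $\mathcal C_k$, so the mean value theorem gives $u_k(s_t,a_t)-r(s_t,a_t)\le\mathrm e\,w_t\,\omega_k(\phi_t)$ directly.
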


The first is the memory-induced error: over $\widetilde O(d)$ episodes, the $M$-dependent transient cost is at
most $M\Delta_\mu$ per episode, with an initial exploration cost of \(\widetilde O(d\Delta_\mu)\). The second
term accounts for parameter estimation error, where \(\sqrt{vU_\mu}\) bounds the conditional standard
deviation of the observation noise, denoted by $\sigma$ in Table~\ref{tab:comparisons}. These two terms have
the similar interpretation as their counterparts in Theorem~\ref{th_upper_bound}. The third term captures the
additional cost of learning under parameter-dependent curvature. Here, \(\kappa\) reflects how small the
derivative of the link function can be, while \(\rho\) controls its relative variation as the predicted reward
parameter changes. Our analysis bounds the number of rounds with large prediction uncertainty or insufficient
information by \(\widetilde O(d^2\kappa v\rho^2+d)\), and charges at most \(\Delta_\mu\) estimation error to
each such round. When \(\rho=0\), the link function is affine and the curvature dependence vanishes, and this
bound reduces to Theorem~\ref{th_upper_bound} under linear reward.

\begin{theorem}[Lower bound of \gmodel{}]
\label{thm:glbm_lower}
Let $d\geq3$, $M\geq 0$, and
\(T\geq\max\big\{dM,d,\frac{v(d-1)^2}{U_\mu}\max\left\{\frac{1}{4R^2S^2},\rho^2\right\}\big\}\). Then there
exists an admissible action set such that
\begin{equation}\label{eq:glbm_lower_bound}
    \inf_{\pi}\sup_{\theta^\star,A} \mathbb E\left[\mathrm{Reg}_T(\pi)\right] \geq \widetilde\Omega\Big(
    d(M+1)\Delta_\mu +d\sqrt{vU_\mu T} \Big).
\end{equation}
\end{theorem}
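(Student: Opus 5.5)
We prove the two parts of the lower bound in \eqref{eq:glbm_lower_bound} separately and then combine them. Since $\max\{x,y\}\ge (x+y)/2$, it suffices to exhibit, for each term, a family of instances (same admissible action set, or a union of action sets handled by a shared construction) on which every policy incurs that regret. Throughout we fix a link $\mu$ in the canonical exponential family with the stated $U_\mu,\kappa,\rho,v$, and we use that on $[-RS,RS]$ the mean map $\mu$ is Lipschitz with constant at most $U_\mu$ and satisfies $\mu(RS)-\mu(-RS)=\Delta_\mu$.

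\textbf{Memory term $\Omega(d(M+1)\Delta_\mu)$.} We adapt the hard instance behind Theorem~\ref{th_lower_bound}. Take $\mathcal A=\{e_1,\dots,e_d\}$ and a memory map $A(s)=R\,I$ when $s=(e_i,\dots,e_i)$ for some $i$ (the last $M$ actions all equal), and $A(s)=-R\,I$ otherwise. Set $\theta^\star=S e_{i^\star}-S\sum_{j\ne i^\star}e_j/\sqrt{d}$ suitably normalized, or more simply choose the map so that the mean reward is $\mu(RS)$ exactly when $a=e_{i^\star}$ is played with a matching full memory, and $\mu(-RS)$ otherwise. In this construction no observation is informative about $i^\star$ until some coordinate has been played $M+1$ consecutive times: before that, all rewards equal $\mu(-RS)$ in distribution regardless of $i^\star$. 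Drawing $i^\star$ uniformly, any policy must, in expectation, test $\Omega(d)$ coordinates, each test costing at least $M+1$ rounds at gap $\Delta_\mu$. This is a Yao-type argument: the number of coordinates tested before hitting $i^\star$ is uniform on $\{1,\dots,d\}$. The condition $T\ge dM$ guarantees the horizon is long enough for this cost to be realized, while the optimum (commit to $e_{i^\star}$) loses only $M$ rounds at the start, so the regret is $\Omega(d(M+1)\Delta_\mu)$.

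\textbf{Statistical term $\Omega(d\sqrt{vU_\mu T})$.} Take $M=0$ effectively, i.e.\ $A\equiv R I$, so the problem is a generalized linear bandit on the hypercube $\mathcal A=\{\pm1/\sqrt{d-1}\}^{d-1}\times\{0\}$, or a product of $d-1$ coordinates with a spare coordinate used for centering. We follow the standard hypercube argument of \citet{lattimore2020bandit}. Let $\theta^\star=\epsilon\xi$ with $\xi\in\{\pm1\}^{d-1}$, and localize $z$ near a point $z_0$ where $\mu'(z_0)\approx U_\mu$; the spare coordinate shifts the mean to $z_0$. Two steps need care:
\begin{itemize}
\item the KL divergence between exponential-family observations at $z,z'$ equals $(z-z')^2\mu'(\tilde z)/(2v)$, which is at most $(z-z')^2U_\mu e^{\rho|z-z'|}/(2v)$;
\item the mean gap is at least $\mu'(\tilde z)|z-z'|$, which is at least $U_\mu e^{-\rho|\cdot|}|z-z'|$.
\end{itemize}
With $\epsilon\asymp\sqrt{v/(U_\mu T)}$ per coordinate, both exponential factors are $O(1)$ exactly when $\rho\epsilon\sqrt d\lesssim1$. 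This is where the condition $T\ge v(d-1)^2\rho^2/U_\mu$ enters, and $T\ge v(d-1)^2/(4U_\mu R^2S^2)$ ensures $\|\theta^\star\|\le S$. The per-coordinate Bretagnolle--Huber and flip argument then gives regret $\Omega\big((d-1)\cdot T\cdot U_\mu\epsilon/\sqrt{d-1}\cdot\sqrt{d-1}\big)$, i.e.\ $\Omega(d\sqrt{vU_\mu T})$.

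\textbf{Main obstacle.} The delicate step is the second term: keeping the link's curvature from destroying the KL-versus-gap tradeoff. In particular, $U_\mu$ must be essentially attained at the operating point $z_0$, not merely as a supremum. To handle this, we would choose $z_0$ as a near-maximizer of $\mu'$ and use $|\mu''|\le\rho\mu'$, which yields $\mu'(z)\ge \mu'(z_0)e^{-\rho|z-z_0|}$ on the small window, with the window size controlled by the $T$ condition. Combining the two parts via the max gives the stated bound, with $d\ge3$ leaving $d-1\ge2$ coordinates for the hypercube plus one centering coordinate.
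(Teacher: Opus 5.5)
Your statistical term follows the paper's hypercube route, but the memory term has a genuine gap: neither of your constructions works.

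\textbf{The explicit instance leaks the hypothesis.} Off full runs you have $A(s)=-RI$, so playing $e_j$ gives natural parameter $-R\theta^\star_j$. This is negative for $j=i^\star$ and positive for $j\neq i^\star$. Rewards collected while never completing a run therefore already reveal $i^\star$, and they are not all equal to $\mu(-RS)$. A learner that simply alternates actions can identify $i^\star$ in a number of rounds that does not grow with $M$, and then pays the $M$-round build-up once. This instance cannot force $\Omega(dM\Delta_\mu)$ regret.

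\textbf{The ``simpler'' variant is not realizable.} This is the variant with mean $\mu(RS)$ at $(e_{i^\star}^M,e_{i^\star})$ and $\mu(-RS)$ everywhere else. Because $A$ is known to the learner, $i^\star$ must be encoded in $\theta^\star$. Write $w_j=A(e_j^M)^\top e_j$, so $\|w_j\|_2\le R$. You need $w_j^\top\theta^{(j)}=RS$ and $w_j^\top\theta^{(i)}=-RS$ for every $i\neq j$. The equality case of Cauchy--Schwarz forces $\theta^{(j)}=(S/R)w_j$ and $\theta^{(i)}=-\theta^{(j)}$ for all $i\neq j$. With three hypotheses, which $d\ge3$ gives you, this yields $\theta^{(1)}=-\theta^{(1)}$, a contradiction. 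So the per-round exploration gap cannot be $\Delta_\mu$ in an instance of this kind.

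\textbf{What is missing} is the construction in Lemma~\ref{lem:glbm_memory_lower}. It uses a shared coordinate $e_0$ with $e_0^\top a=\alpha<1$ for every action, and sets $A(s)^\top=(\,\cdot\,)e_0^\top$. Then the natural parameter depends only on the memory state, not on the action. It equals a common \emph{interior} level $z_0$ for every action, every hypothesis and the null, except at $a_i^M$ under $\theta^{(i)}$, where it equals $z_1$. The norm budget then forces $|z_0|+(z_1-z_0)=\alpha RS<RS$, which rules out the endpoints. Consequently $\mu(z_1)-\mu(z_0)$ can be much smaller than $\Delta_\mu$ when $\mu'$ varies over $[-RS,RS]$.

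To recover a useful gap the paper needs a separate multiscale step. It covers the range by intervals $[-RS+3h_j/4,\,-RS+h_j]$ with $h_j=2RS(3/4)^j$, and shows that some admissible pair satisfies $\mu(z_1)-\mu(z_0)\ge\Delta_\mu/O(1+\log(1+\rho RS))$. This loss is why the theorem is stated with $\widetilde\Omega$ and why $\rho$ enters the memory term. Your proposal implicitly claims the stronger $\Omega(d(M+1)\Delta_\mu)$ without an instance that attains it. With the gap $g=\mu(z_1)-\mu(z_0)$ in place of $\Delta_\mu$, your hitting-time and Yao argument then goes through as in the paper.

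\textbf{Statistical term: two fixes.} This part matches Lemma~\ref{lem:glbm_statistical_lower}, modulo bookkeeping.
\begin{enumerate}
\item With $\theta^\star=\epsilon\xi$ and action entries $\pm1/\sqrt{d-1}$, one sign flip moves the natural parameter by $2\epsilon/\sqrt{d-1}$. You must take $\epsilon\asymp\sqrt{dv/(U_\mu T)}$. Your choice $\epsilon\asymp\sqrt{v/(U_\mu T)}$ yields only $\sqrt d\sqrt{vU_\mu T}$, and the extra $\sqrt{d-1}$ in your final product is unjustified. The corrected scaling is also what makes the $\rho^2(d-1)^2$ condition appear consistently.
\item The actions need a nonzero component on the centering coordinate, as in $a_u=(\sqrt{1-\eta^2},\eta u/\sqrt p)$. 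Otherwise the spare coordinate of $\theta^\star$ cannot shift the operating point to $z_\star$.
\end{enumerate}
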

This lower bound matches the first two terms of the upper bound in Theorem~\ref{thm:glbm_upper} up to
logarithmic factors. Thus, both the memory-induced error and the estimation error are unavoidable in
\gmodel{}, consistent with our findings for \model{}. The remaining gap is the curvature-induced
identification cost: our upper bound controls it through worst-case curvature, while the lower bound does not
establish whether this dependence is necessary. Closing this gap remains an interesting direction. Compared
with ~\citep{anngeneralized}, our upper bound offers three improvements. It avoids a separate warm-up phase
whose regret can be linear in $T$; isolates the dependence on $M$ in an additive memory term while retaining
the optimal $\sqrt{T}$ estimation rate; and removes the additional $M$- and $T$-dependence from the
curvature-induced error while improving its dimension dependence (see Table~\ref{tab:comparisons}).

\section{Numerical Experiments}\label{sec:exp}

We evaluate our algorithms on synthetic data and semi-synthetic datasets for KV- and semantic caching. For the
LBM experiments, including the synthetic and KV-cache experiments, we compare \algo{} (ours) with OFUL
(original)~\citep{clerici2023linear}, its refined-block variant (OFUL (refined)), GLBM-SCB adapted to linear
rewards~\citep{anngeneralized}, and Myopic LinUCB~\citep{AbbasiYadkori2011ImprovedAF}. For the GLBM experiment
on semantic caching, we compare \galgo{} (ours) with GLBM-SCB and the per-round GLB-OMD
algorithm~\citep{zhang2026generalized}. We measure cumulative pseudo-regret against the exact finite horizon
optimum under the same initial memory. More implementation details and parameter choices are provided in
Appendix~\ref{app_exp}.

\paragraph{Synthetic data experiments.} We consider \model{} instances with $d=10$,
$\mathcal{A}=\{e_1,\ldots,e_{10}\}$, and independent Gaussian noise of standard deviation $\sigma=0.25$. The
memory matrix is $R e_i e_i^\top$ if the preceding $M$ actions are all $e_i$, and zero otherwise, so switching
actions delays informative rewards. We set $\theta^\star=Su$ for a fixed unit direction $u$ and use
$\gamma=0.9999$. We record a continuous trajectory up to $T=10^6$ and separately vary
$M\in\{100,200,\ldots,1000\}$ and $R,S\in\{10,20,\ldots,100\}$, one parameter at a time. Results average 30
noise seeds, with ten balanced initial histories averaged within each seed, and are presented in
Figure~\ref{fig:placeholder} (A) - (D). After initial exploration, \algo{} achieves the lowest regret among
all algorithms across the tested values of $M$, $R$, and $S$.

\paragraph{KV cache.} We use Mooncake conversation dataset \citep{qin2025mooncake}, containing 12,031 LLM
requests whose anonymized hashes identify shared 512-token prefix blocks. Each action refreshes one candidate
prefix, whose KV blocks remain available for the current and following $M$ rounds. The reward is the
normalized average number of reused prefix tokens across 36 held-out requests, a proxy for avoided prefill
computation; its conditional expectation is linear in the post-refresh cache-availability feature, yielding an
LBM instance. We simulate refresh decisions under a $3\times10^6$-token cache budget, where each round
represents 10 seconds and $M=180$ corresponds approximately to Mooncake Store's default 30-minute soft-pin
lifetime~\citep{mooncakeStore}. To reduce communication overhead and limit exposure of individual access
patterns~\citep{lmcacheMetrics,wu2025know}, we consider a lightweight refresh controller that receives
aggregate cache-reuse statistics rather than request-level prefix logs. We set $T=10^5$, $M=180$, and choose
$M\in\{30,60,90,120,150,180\}$, using paired request streams and three seeds, see Figure~\ref{fig:placeholder}
(E) - (F). \algo{} achieves lower regret in the time comparison and the lowest mean regret throughout the
memory sweep. We also note that at large $M$, regret increases slower, which is consistent with the fact that
longer retention keeps more useful prefixes resident and reduces losses from missed reuse.

\paragraph{Semantic cache.} We use \texttt{vCache/SemBenchmarkLmArena} ~\citep{schroeder2025vcache}, which
contains real LMArena queries, generated paraphrases, responses, embeddings, and semantic labels. Unlike exact
prefix reuse in KV caching, semantic caching reuses complete answers and may return an unsuitable response.
Effective semantic caching must therefore reduce model calls while maintaining reuse
correctness~\citep{bang-2023-gptcache,schroeder2025vcache}. We hence measure how long the cache sustains
correct, cache-only service before the first model fallback or incorrect reuse. Each round, the controller
refreshes one bundle without observing incoming queries, which then remains available for the current and
following $M$ rounds. We simulate unit-rate Poisson arrivals with independently sampled queries and use
benchmark labels to assess correctness. For each simulated waiting-time measurement, we hold the resulting
cache configuration fixed. If $z=\phi(s,a)^\top\theta^\star\in[0,1)$ denotes the rate of requests correctly
served by the cache, this duration is exponentially distributed with mean $\mu(z)=1/(1-z)$, yielding a
generalized linear reward model. With a fixed 64 MiB payload budget, we report a $T=10^6$ run at $M=51$ and a
sweep over $M\in\{10,20,24,30,40,50\}$, using three seeds, see Figure~\ref{fig:placeholder} (G) - (H).
\galgo{} outperforms both baselines and achieves lower mean regret at every tested value of $M$. At larger
$M$, its regret grows more slowly, consistent with longer retention improving cache coverage.

\begin{figure}[htbp]
    \centering
    \includegraphics[width=\linewidth]{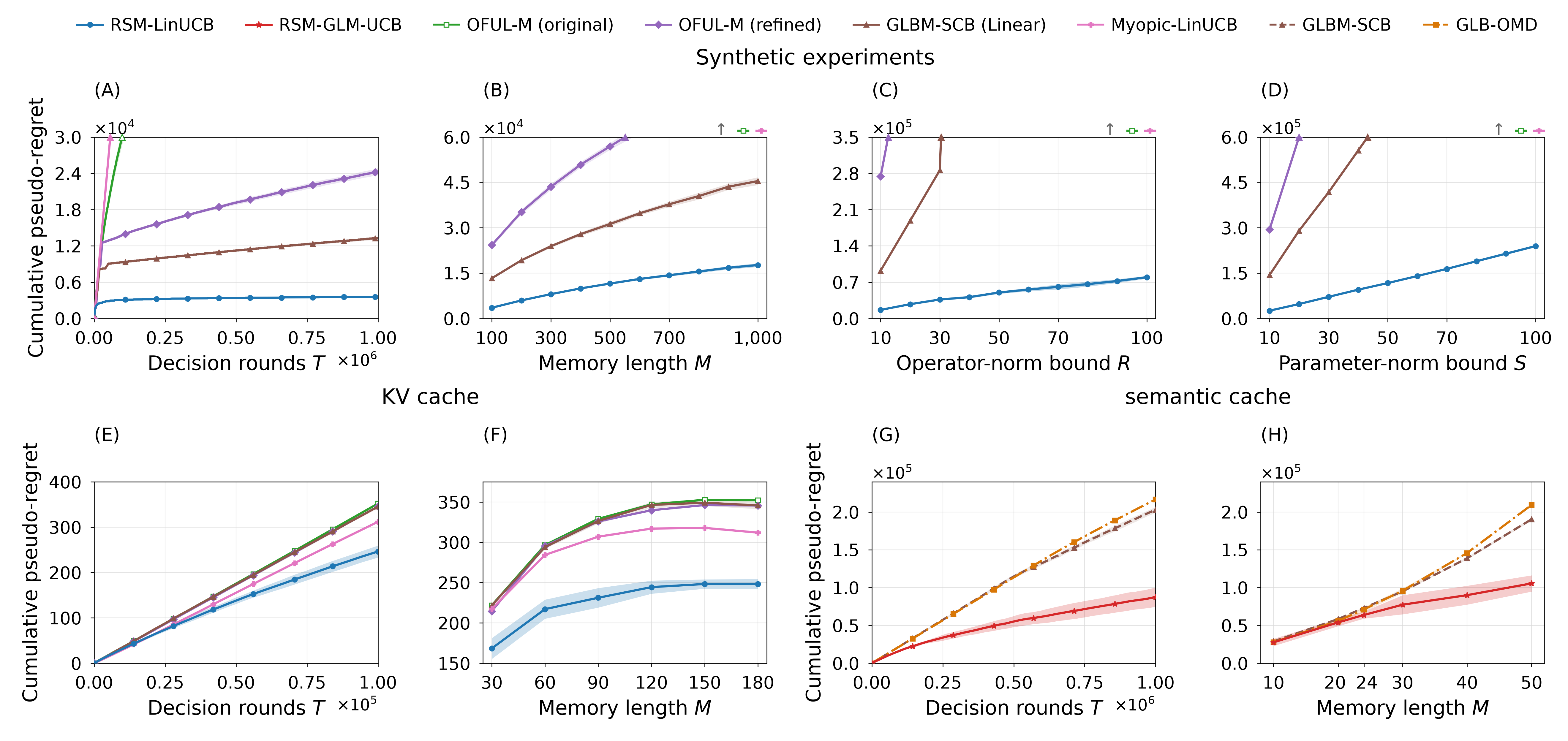}
    \caption{Cumulative pseudo-regret for synthetic (A--D), KV-cache (E--F), and semantic-cache (G--H).
    Shading denotes one standard deviation and some baseline curves exceed the displayed range.}
    \label{fig:placeholder}
\end{figure}

\section{Conclusion}
We developed \algo{} and \galgo{} for \model{} and \gmodel{}, combining linear bandit reward estimation and
determinant-triggered rare switching with Bellman planning over memory states. Our analysis separates
memory-induced error from estimation error, with leading regret terms matching the lower bounds up to
logarithmic factors. Future work includes closing the gap in the curvature-induced term for \gmodel{} by
sharpening the upper bound or proving a matching lower bound. Another direction is to explore whether our
action-dependent memory framework can help develop theoretical models of memory management in LLM-based
systems.

%\clearpage
% Print endnotes if future edits to the shared sections introduce footnotes.
\ifnum\value{endnote}>0
  \theendnotes
\fi

\bibliographystyle{informs2014}
\bibliography{lr}

@inproceedings{AbbasiYadkori2011ImprovedAF,
  title={Improved Algorithms for Linear Stochastic Bandits},
  author={Yasin Abbasi-Yadkori and D{\'a}vid P{\'a}l and Csaba Szepesvari},
  booktitle={Neural Information Processing Systems},
  year={2011}
}

@book{lattimore2020bandit,
  title={Bandit algorithms},
  author={Lattimore, Tor and Szepesv{\'a}ri, Csaba},
  year={2020},
  publisher={Cambridge University Press}
}

@article{clerici2023linear,
  title={Linear bandits with memory: from rotting to rising},
  author={Clerici, Giulia and Laforgue, Pierre and Cesa-Bianchi, Nicolo},
  journal={arXiv preprint arXiv:2302.08345},
  year={2023}
}

@inproceedings{anngeneralized,
  title={Generalized Linear Bandits with Memory},
  author={Ann, Heesang and Choi, Hyunjun and Hwang, Taehyun and Shin, Younghoon and Cheong, Haeju and Oh, Min-hwan},
  booktitle={Forty-third International Conference on Machine Learning},
  year={2026}
}

@inproceedings{garivier2011upper,
  title={On upper-confidence bound policies for switching bandit problems},
  author={Garivier, Aur{\'e}lien and Moulines, Eric},
  booktitle={International conference on algorithmic learning theory},
  pages={174--188},
  year={2011},
  organization={Springer}
}

@article{besbes2014stochastic,
  title={Stochastic multi-armed-bandit problem with non-stationary rewards},
  author={Besbes, Omar and Gur, Yonatan and Zeevi, Assaf},
  journal={Advances in neural information processing systems},
  volume={27},
  year={2014}
}

@InProceedings{pmlr-v99-auer19a,
  title = 	 {Adaptively Tracking the Best Bandit Arm with an Unknown Number of Distribution Changes},
  author =       {Auer, Peter and Gajane, Pratik and Ortner, Ronald},
  booktitle = 	 {Proceedings of the Thirty-Second Conference on Learning Theory},
  pages = 	 {138--158},
  year = 	 {2019},
  volume = 	 {99},
  series = 	 {Proceedings of Machine Learning Research},
  month = 	 {25--28 Jun},
  publisher =    {PMLR}
}

@inproceedings{wu2018learning,
  title={Learning contextual bandits in a non-stationary environment},
  author={Wu, Qingyun and Iyer, Naveen and Wang, Hongning},
  booktitle={The 41st International ACM SIGIR Conference on Research \& Development in Information Retrieval},
  pages={495--504},
  year={2018}
}

@inproceedings{chen2019new,
  title={A new algorithm for non-stationary contextual bandits: Efficient, optimal and parameter-free},
  author={Chen, Yifang and Lee, Chung-Wei and Luo, Haipeng and Wei, Chen-Yu},
  booktitle={Conference on Learning Theory},
  pages={696--726},
  year={2019},
  organization={PMLR}
}

@article{di2020linear,
  title={A linear bandit for seasonal environments},
  author={Di Benedetto, Giuseppe and Bellini, Vito and Zappella, Giovanni},
  journal={arXiv preprint arXiv:2004.13576},
  year={2020}
}

@inproceedings{xu2020contextual,
  title={Contextual-bandit based personalized recommendation with time-varying user interests},
  author={Xu, Xiao and Dong, Fang and Li, Yanghua and He, Shaojian and Li, Xin},
  booktitle={Proceedings of the AAAI Conference on Artificial Intelligence},
  volume={34},
  number={04},
  pages={6518--6525},
  year={2020}
}

@inproceedings{li2021unifying,
  title={Unifying clustered and non-stationary bandits},
  author={Li, Chuanhao and Wu, Qingyun and Wang, Hongning},
  booktitle={International Conference on Artificial Intelligence and Statistics},
  pages={1063--1071},
  year={2021},
  organization={PMLR}
}

@article{karnin2016multi,
  title={Multi-armed bandits: Competing with optimal sequences},
  author={Karnin, Zohar S and Anava, Oren},
  journal={Advances in Neural Information Processing Systems},
  volume={29},
  year={2016}
}

@inproceedings{luo2018efficient,
  title={Efficient contextual bandits in non-stationary worlds},
  author={Luo, Haipeng and Wei, Chen-Yu and Agarwal, Alekh and Langford, John},
  booktitle={Conference On Learning Theory},
  pages={1739--1776},
  year={2018},
  organization={PMLR}
}

@inproceedings{cheung2019learning,
  title={Learning to optimize under non-stationarity},
  author={Cheung, Wang Chi and Simchi-Levi, David and Zhu, Ruihao},
  booktitle={The 22nd International Conference on Artificial Intelligence and Statistics},
  pages={1079--1087},
  year={2019},
  organization={PMLR}
}

@article{russac2019weighted,
  title={Weighted linear bandits for non-stationary environments},
  author={Russac, Yoan and Vernade, Claire and Capp{\'e}, Olivier},
  journal={Advances in Neural Information Processing Systems},
  volume={32},
  year={2019}
}

@article{russac2020algorithms,
  title={Algorithms for non-stationary generalized linear bandits},
  author={Russac, Yoan and Capp{\'e}, Olivier and Garivier, Aur{\'e}lien},
  journal={arXiv preprint arXiv:2003.10113},
  year={2020}
}

@inproceedings{kim2020randomized,
  title={Randomized exploration for non-stationary stochastic linear bandits},
  author={Kim, Baekjin and Tewari, Ambuj},
  booktitle={Conference on Uncertainty in Artificial Intelligence},
  pages={71--80},
  year={2020},
  organization={PMLR}
}

@inproceedings{zhao2020simple,
  title={A simple approach for non-stationary linear bandits},
  author={Zhao, Peng and Zhang, Lijun and Jiang, Yuan and Zhou, Zhi-Hua},
  booktitle={International Conference on Artificial Intelligence and Statistics},
  pages={746--755},
  year={2020},
  organization={PMLR}
}

@InProceedings{pmlr-v206-hong23b,
  title = 	 {An Optimization-based Algorithm for Non-stationary Kernel Bandits without Prior Knowledge},
  author =       {Hong, Kihyuk and Li, Yuhang and Tewari, Ambuj},
  booktitle = 	 {Proceedings of The 26th International Conference on Artificial Intelligence and Statistics},
  pages = 	 {3048--3085},
  year = 	 {2023},
  volume = 	 {206},
  series = 	 {Proceedings of Machine Learning Research},
  month = 	 {25--27 Apr},
  publisher =    {PMLR}
}

@InProceedings{pmlr-v206-wang23k,
  title = 	 {Revisiting Weighted Strategy for Non-stationary Parametric Bandits},
  author =       {Wang, Jing and Zhao, Peng and Zhou, Zhi-Hua},
  booktitle = 	 {Proceedings of The 26th International Conference on Artificial Intelligence and Statistics},
  pages = 	 {7913--7942},
  year = 	 {2023},
  volume = 	 {206},
  series = 	 {Proceedings of Machine Learning Research},
  month = 	 {25--27 Apr},
  publisher =    {PMLR}
}

@article{JMLR:v11:jaksch10a,
  author  = {Thomas Jaksch and Ronald Ortner and Peter Auer},
  title   = {Near-optimal Regret Bounds for Reinforcement Learning},
  journal = {Journal of Machine Learning Research},
  year    = {2010},
  volume  = {11},
  number  = {51},
  pages   = {1563--1600},
}

@article{warlop2018fighting,
  title={Fighting boredom in recommender systems with linear reinforcement learning},
  author={Warlop, Romain and Lazaric, Alessandro and Mary, J{\'e}r{\'e}mie},
  journal={Advances in Neural Information Processing Systems},
  volume={31},
  year={2018}
}

@inproceedings{awasthi2022congested,
  title={Congested bandits: Optimal routing via short-term resets},
  author={Awasthi, Pranjal and Bhatia, Kush and Gollapudi, Sreenivas and Kollias, Kostas},
  booktitle={International Conference on Machine Learning},
  pages={1078--1100},
  year={2022},
  organization={PMLR}
}

@inproceedings{malik2022complete,
  title={Complete policy regret bounds for tallying bandits},
  author={Malik, Dhruv and Li, Yuanzhi and Singh, Aarti},
  booktitle={Conference on Learning Theory},
  pages={5146--5174},
  year={2022},
  organization={PMLR}
}

@inproceedings{malik2023weighted,
  title={Weighted tallying bandits: overcoming intractability via repeated exposure optimality},
  author={Malik, Dhruv and Igoe, Conor and Li, Yuanzhi and Singh, Aarti},
  booktitle={International Conference on Machine Learning},
  pages={23590--23609},
  year={2023},
  organization={PMLR}
}

@article{tang2021bandit,
  title={Bandit learning with delayed impact of actions},
  author={Tang, Wei and Ho, Chien-Ju and Liu, Yang},
  journal={Advances in Neural Information Processing Systems},
  volume={34},
  pages={26804--26817},
  year={2021}
}

@inproceedings{agrawal2020learning,
  title={Learning by repetition: Stochastic multi-armed bandits under priming effect},
  author={Agrawal, Priyank and Tulabandula, Theja},
  booktitle={Conference on Uncertainty in Artificial Intelligence},
  pages={470--479},
  year={2020},
  organization={PMLR}
}

@article{leqi2021rebounding,
  title={Rebounding bandits for modeling satiation effects},
  author={Leqi, Liu and Kilinc Karzan, Fatma and Lipton, Zachary and Montgomery, Alan},
  journal={Advances in Neural Information Processing Systems},
  volume={34},
  pages={4003--4014},
  year={2021}
}

@inproceedings{laforgue2022last,
  title={A last switch dependent analysis of satiation and seasonality in bandits},
  author={Laforgue, Pierre and Clerici, Giulia and Cesa-Bianchi, Nicol{\`o} and Gilad-Bachrach, Ran},
  booktitle={International Conference on Artificial Intelligence and Statistics},
  pages={971--990},
  year={2022},
  organization={PMLR}
}

@inproceedings{foussoul2023last,
  title={Last switch dependent bandits with monotone payoff functions},
  author={Foussoul, Ayoub and Goyal, Vineet and Papadigenopoulos, Orestis and Zeevi, Assaf},
  booktitle={International Conference on Machine Learning},
  pages={10265--10284},
  year={2023},
  organization={PMLR}
}

@article{khosravi2023preferences,
  title={Preferences Evolve And So Should Your Bandits: Bandits with Evolving States for Online Platforms},
  author={Khosravi, Khashayar and Leme, Renato Paes and Podimata, Chara and Tsorvantzis, Apostolis},
  journal={arXiv preprint arXiv:2307.11655},
  year={2023}
}

@inproceedings {qin2025mooncake,
author = {Ruoyu Qin and Zheming Li and Weiran He and Jialei Cui and Feng Ren and Mingxing Zhang and Yongwei Wu and Weimin Zheng and Xinran Xu},
title = {Mooncake: Trading More Storage for Less Computation {\textemdash} A {KVCache-centric} Architecture for Serving {LLM} Chatbot},
booktitle = {23rd USENIX Conference on File and Storage Technologies (FAST 25)},
year = {2025},
isbn = {978-1-939133-45-8},
address = {Santa Clara, CA},
pages = {155--170},
url = {https://www.usenix.org/conference/fast25/presentation/qin},
publisher = {USENIX Association},
month = feb
}

@misc{mooncakeStore,
  author       = {{Mooncake Team}},
  title        = {{Mooncake Store}: Design and Configuration Documentation},
  howpublished = {\url{https://kvcache-ai.github.io/Mooncake/design/mooncake-store.html}},
  note         = {Accessed: August 14, 2026},
  year         = {2026}
}

@article{zhang2026generalized,
  title={Generalized linear bandits: Almost optimal regret with one-pass update},
  author={Zhang, Yu-Jie and Xu, Sheng-An and Zhao, Peng and Sugiyama, Masashi},
  journal={Advances in Neural Information Processing Systems},
  volume={38},
  pages={69244--69277},
  year={2026}
}

@inproceedings{lee2024glm,
  title = {A Unified Confidence Sequence for Generalized Linear Models,
           with Applications to Bandits},
  author = {Lee, Junghyun and Yun, Se-Young and Jun, Kwang-Sung},
  booktitle = {Advances in Neural Information Processing Systems},
  volume = {37},
  year = {2024},
  publisher = {Curran Associates, Inc.},
}

@article{zhang2025linear,
  title={Linear bandits on ellipsoids: Minimax optimal algorithms},
  author={Zhang, Raymond and Hadiji, H{\'e}di and Combes, Richard},
  journal={arXiv preprint arXiv:2502.17175},
  year={2025}
}

@article{scherrer2013improved,
  title={Improved and generalized upper bounds on the complexity of policy iteration},
  author={Scherrer, Bruno},
  journal={Advances in Neural Information Processing Systems},
  volume={26},
  year={2013}
}

@article{schroeder2025vcache,
  title={vcache: Verified semantic prompt caching},
  author={Schroeder, Luis Gaspar and Desai, Aditya and Cuadron, Alejandro and Chu, Kyle and Liu, Shu and Zhao, Mark and Krusche, Stephan and Kemper, Alfons and Zaharia, Matei and Gonzalez, Joseph E},
  journal={arXiv preprint arXiv:2502.03771},
  year={2025}
}

@inproceedings{dani2008stochastic,
  title={Stochastic linear optimization under bandit feedback},
  author={Dani, Varsha and Hayes, Thomas P and Kakade, Sham M},
  booktitle={21st Annual Conference on Learning Theory},
  number={101},
  pages={355--366},
  year={2008}
}

@misc{vllmMetrics,
  author       = {{vLLM Contributors}},
  title        = {{vLLM} Documentation: Metrics},
  howpublished = {\url{https://docs.vllm.ai/en/v0.12.0/design/metrics/}},
  note         = {Version 0.12.0. Accessed September 15, 2026}
}

@InProceedings{pmlr-v54-abeille17a,
  title = 	 {{Linear Thompson Sampling Revisited}},
  author = 	 {Abeille, Marc and Lazaric, Alessandro},
  booktitle = 	 {Proceedings of the 20th International Conference on Artificial Intelligence and Statistics},
  pages = 	 {176--184},
  year = 	 {2017},
  volume = 	 {54},
  series = 	 {Proceedings of Machine Learning Research},
  month = 	 {20--22 Apr},
  publisher =    {PMLR}
}

@article{filippi2010parametric,
  title={Parametric bandits: The generalized linear case},
  author={Filippi, Sarah and Cappe, Olivier and Garivier, Aur{\'e}lien and Szepesv{\'a}ri, Csaba},
  journal={Advances in neural information processing systems},
  volume={23},
  year={2010}
}

@InProceedings{pmlr-v70-li17c,
  title = 	 {Provably Optimal Algorithms for Generalized Linear Contextual Bandits},
  author =       {Lihong Li and Yu Lu and Dengyong Zhou},
  booktitle = 	 {Proceedings of the 34th International Conference on Machine Learning},
  pages = 	 {2071--2080},
  year = 	 {2017},
  volume = 	 {70},
  series = 	 {Proceedings of Machine Learning Research},
  month = 	 {06--11 Aug},
  publisher =    {PMLR}
}

@InProceedings{pmlr-v119-faury20a,
  title = 	 {Improved Optimistic Algorithms for Logistic Bandits},
  author =       {Faury, Louis and Abeille, Marc and Calauzenes, Clement and Fercoq, Olivier},
  booktitle = 	 {Proceedings of the 37th International Conference on Machine Learning},
  pages = 	 {3052--3060},
  year = 	 {2020},
  volume = 	 {119},
  series = 	 {Proceedings of Machine Learning Research},
  month = 	 {13--18 Jul},
  publisher =    {PMLR}
}

@InProceedings{pmlr-v130-abeille21a,
  title = 	 { Instance-Wise Minimax-Optimal Algorithms for Logistic Bandits },
  author =       {Abeille, Marc and Faury, Louis and Calauzenes, Clement},
  booktitle = 	 {Proceedings of The 24th International Conference on Artificial Intelligence and Statistics},
  pages = 	 {3691--3699},
  year = 	 {2021},
  volume = 	 {130},
  series = 	 {Proceedings of Machine Learning Research},
  month = 	 {13--15 Apr},
  publisher =    {PMLR}
}

@article{liu2024almost,
  title={Almost Free: Self-concordance in Natural Exponential Families and an Application to Bandits},
  author={Liu, Shuai and Ayoub, Alex and Sentenac, Flore and Tan, Xiaoqi and Szepesv{\'a}ri, Csaba},
  journal={Advances in Neural Information Processing Systems},
  volume={37},
  pages={70828--70874},
  year={2024}
}

@article{sawarni2024generalized,
  title={Generalized linear bandits with limited adaptivity},
  author={Sawarni, Ayush and Das, Nirjhar and Barman, Siddharth and Sinha, Gaurav},
  journal={Advances in Neural Information Processing Systems},
  volume={37},
  pages={8329--8369},
  year={2024}
}

@article{jun2017scalable,
  title={Scalable generalized linear bandits: Online computation and hashing},
  author={Jun, Kwang-Sung and Bhargava, Aniruddha and Nowak, Robert and Willett, Rebecca},
  journal={Advances in Neural Information Processing Systems},
  volume={30},
  year={2017}
}

@InProceedings{pmlr-v151-faury22a,
  title = 	 { Jointly Efficient and Optimal Algorithms for Logistic Bandits },
  author =       {Faury, Louis and Abeille, Marc and Jun, Kwang-Sung and Calauzenes, Clement},
  booktitle = 	 {Proceedings of The 25th International Conference on Artificial Intelligence and Statistics},
  pages = 	 {546--580},
  year = 	 {2022},
  volume = 	 {151},
  series = 	 {Proceedings of Machine Learning Research},
  month = 	 {28--30 Mar},
  publisher =    {PMLR}
}

@inproceedings{hao2020adaptive,
  title={Adaptive exploration in linear contextual bandit},
  author={Hao, Botao and Lattimore, Tor and Szepesvari, Csaba},
  booktitle={International Conference on Artificial Intelligence and Statistics},
  pages={3536--3545},
  year={2020},
  organization={PMLR}
}

@article{bubeck2012regret,
  title={Regret analysis of stochastic and nonstochastic multi-armed bandit problems},
  author={Bubeck, S{\'e}bastien and Cesa-Bianchi, Nicolo},
  journal={Foundations and Trends{\textregistered} in Machine Learning},
  volume={5},
  number={1},
  pages={1--122},
  year={2012},
  publisher={Emerald Publishing Limited}
}

@article{auer2002finite,
  title={Finite-time analysis of the multiarmed bandit problem},
  author={Auer, Peter and Cesa-Bianchi, Nicolo and Fischer, Paul},
  journal={Machine learning},
  volume={47},
  number={2},
  pages={235--256},
  year={2002},
  publisher={Springer}
}

@article{taberner2026paretobandit,
  title={ParetoBandit: Budget-Paced Adaptive Routing for Non-Stationary LLM Serving},
  author={Taberner-Miller, Annette},
  journal={arXiv preprint arXiv:2604.00136},
  year={2026}
}

@article{lee2026drift,
  title={Drift-Aware LLM Routing with Sparse Contexts and Shared Budgets},
  author={Lee, Cheung Hao and Wong, Patrick},
  journal={arXiv preprint arXiv:2609.00662},
  year={2026}
}

@ARTICLE{10494875,
  author={Ghoorchian, Saeed and Kortukov, Evgenii and Maghsudi, Setareh},
  journal={IEEE Open Journal of Signal Processing}, 
  title={Non-Stationary Linear Bandits With Dimensionality Reduction for Large-Scale Recommender Systems}, 
  year={2024},
  volume={5},
  number={},
  pages={548-558},
  doi={10.1109/OJSP.2024.3386490}}

@article{zhao2026high,
  title={High-dimensional dynamic pricing under nonstationarity: Learning and earning with change-point detection},
  author={Zhao, Zifeng and Jiang, Feiyu and Yu, Yi and Chen, Xi},
  journal={Management Science},
  year={2026},
  publisher={INFORMS}
}

@inproceedings{fu2024serverlessllm,
  title={{ServerlessLLM}: {Low-Latency} serverless inference for large language models},
  author={Fu, Yao and Xue, Leyang and Huang, Yeqi and Brabete, Andrei-Octavian and Ustiugov, Dmitrii and Patel, Yuvraj and Mai, Luo},
  booktitle={18th USENIX Symposium on Operating Systems Design and Implementation (OSDI 24)},
  pages={135--153},
  year={2024}
}

@article{den2022dynamic,
  title={Dynamic pricing with demand learning and reference effects},
  author={den Boer, Arnoud V and Keskin, N Bora},
  journal={Management Science},
  volume={68},
  number={10},
  pages={7112--7130},
  year={2022},
  publisher={INFORMS}
}

@misc{lmcacheMetrics,
  author       = {{LMCache}},
  title        = {{LMCache} Documentation: Metrics Reference},
  howpublished = {\url{https://docs.lmcache.ai/production/observability/metrics.html}},
  note         = {Accessed September 18, 2026}
}

@inproceedings{wu2025know,
  title={I Know What You Asked: Prompt Leakage via KV-Cache Sharing in Multi-Tenant LLM Serving.},
  author={Wu, Guanlong and Zhang, Zheng and Zhang, Yao and Wang, Weili and Niu, Jianyu and Wu, Ye and Zhang, Yinqian},
  booktitle={NDSS},
  year={2025}
}

@inproceedings{bang-2023-gptcache,
    title = "{GPTC}ache: An Open-Source Semantic Cache for {LLM} Applications Enabling Faster Answers and Cost Savings",
    author = "Bang, Fu",
    booktitle = "Proceedings of the 3rd Workshop for Natural Language Processing Open Source Software (NLP-OSS 2023)",
    month = dec,
    year = "2023",
    address = "Singapore",
    publisher = "Association for Computational Linguistics",
    pages = "212--218"
}

@article{jia2024online,
  title={Online learning and pricing for service systems with reusable resources},
  author={Jia, Huiwen and Shi, Cong and Shen, Siqian},
  journal={Operations Research},
  volume={72},
  number={3},
  pages={1203--1241},
  year={2024},
  publisher={Informs}
}

@article{jia2022online_a,
  title={Online learning and pricing for network revenue management with reusable resources},
  author={Jia, Huiwen and Shi, Cong and Shen, Siqian},
  journal={Advances in Neural Information Processing Systems},
  volume={35},
  pages={4830--4842},
  year={2022}
}

@inproceedings{jia2022online_b,
  title={Online learning and pricing with reusable resources: Linear bandits with sub-exponential rewards},
  author={Jia, Huiwen and Shi, Cong and Shen, Siqian},
  booktitle={International Conference on Machine Learning},
  pages={10135--10160},
  year={2022},
  organization={PMLR}
}

\begin{APPENDICES}
\renewcommand{\theHsection}{appendix.\Alph{section}}
\section{Related works}

\textbf{Linear bandits with memory.} The works most closely related to ours are the linear bandits with memory
models studied by~\citet{clerici2023linear} and~\citet{anngeneralized}. \citet{clerici2023linear} introduce a
linear bandit model in which the learner's most recent actions endogenously affect the reward parameter. Their
OFUL-memory algorithm reduces the original problem to learning a cyclic policy: at the start of each episode,
the learner commits to a sequence of actions, with the first $M$ actions clearing the memory inherited from
the previous episode, and updates its parameter estimate only after executing the entire sequence. This
approach achieves a $\widetilde O(T^{3/4})$ regret bound against the optimal action sequence. Building on this
algorithmic idea, \citet{anngeneralized} develop a sharper episode-level analysis of OFUL-memory, yielding a
regret bound of $\widetilde O(RS\sqrt{dMT}+\sigma\max\{1,S\}\sqrt{d\max\{d,R^2\}T})$. However, this approach
still selects an entire action sequence at the start of each predetermined episode with fixed length, leaving
the memory length and horizon coupled in the regret bound. In contrast, our key observation is that the most
recent $M$ actions naturally form a state with known and deterministic transitions, while rewards are governed
by a shared unknown parameter $\theta^\star$, as in linear bandits. Building on this structure, \algo{}
combines optimistic reward estimation with Bellman planning and determinant-triggered rare switching. For
$M\geq1$ and a general known memory map satisfying $\|A(\cdot)\|_{\mathrm{op}}\leq R$, our algorithm achieves
$\widetilde O(dRSM+\sigma d\sqrt{T})$ regret. This separates the memory-induced error from the statistical
estimation error, making the memory cost additive and independent of the horizon up to logarithmic factors.
The bound matches our lower bound up to logarithmic factors. Compared with the refined OFUL-memory bound in
prior work, our $dRSM$ term improves on its $RS\sqrt{dMT}$ memory term, while our $\sigma d\sqrt{T}$
estimation term removes the additional polynomial dependence on $R$ and $S$. A detailed comparison is provided
in Table~\ref{tab:comparisons}.

\textbf{Non-stationary bandits.} Our work is related to the broader literature on non-stationary bandits.
Classical non-stationary bandit models characterize environmental changes through a bounded number of
breakpoints~\citep{garivier2011upper,pmlr-v99-auer19a}, with algorithms based on discounting, sliding windows,
restarts, or adaptive change detection. Other formulations control non-stationarity through a variation budget
or related regularity conditions~\citep{besbes2014stochastic,karnin2016multi}. These ideas have subsequently
been extended to linear and contextual
bandits~\citep{wu2018learning,chen2019new,di2020linear,xu2020contextual,li2021unifying,luo2018efficient,cheung2019learning,russac2019weighted,kim2020randomized,pmlr-v206-hong23b},
as well as to generalized linear reward models~\citep{russac2020algorithms,zhao2020simple,pmlr-v206-wang23k}.
In these models, the time-varying parameter $\theta_t$ is typically generated by an exogenous process that is
not controlled by the learner. By contrast, the non-stationarity in our setting is endogenous: the learner's
recent actions change the memory state and thereby affect future reward functions.

In addition to \model{}, several models allow past actions to influence current rewards. Recency-dependent
recommendation models~\citep{warlop2018fighting} express each arm's reward as a polynomial of recency-weighted
play counts over a finite action set. Congested bandits~\citep{awasthi2022congested} model each arm's reward
as depending on how often it was selected within a fixed window. Bandit-based pricing
models~\citep{jia2022online_a,jia2022online_b} study queueing systems with reusable resources, where price
changes induce transient effects before the system reaches a new steady state. Furthermore, related models
include tallying and weighted-tallying bandits~\citep{malik2022complete,malik2023weighted},delayed-impact
bandits~\citep{tang2021bandit}, priming and wear-in/wear-out effects~\citep{agrawal2020learning}, rebounding
or satiation dynamics~\citep{leqi2021rebounding}, last-switch-dependent
rewards~\citep{laforgue2022last,foussoul2023last}, and bandits with deterministically evolving
preferences~\citep{khosravi2023preferences}. These works typically impose specific forms on how past actions
affect rewards, using arm-wise history statistics or application-specific dynamics, such as recency-weighted
play counts, congestion effects, or queueing transients. In contrast, we consider a general matrix-valued
memory map $A(a_{t-M},\ldots,a_{t-1})$ that accommodates order-sensitive and cross-action interactions without
prescribing a particular memory mechanism, with regret bounds independent of the number of arms.

\paragraph{Generalized linear bandits.} Generalized linear bandits model expected rewards through a known link
function applied to a linear predictor. Early work developed optimistic algorithms based on maximum likelihood
estimation (MLE)~\citep{filippi2010parametric,pmlr-v70-li17c}. For logistic bandits,
\citet{pmlr-v119-faury20a} removed the worst case curvature dependence from the leading $\sqrt{T}$ term, while
\citet{pmlr-v130-abeille21a} refined the role of curvature through instance-dependent upper and lower bounds.
\citet{lee2024glm} developed likelihood-ratio confidence sequences for generalized linear models and applied
them to bandits. \citet{liu2024almost} studied self-concordance in natural exponential families and derived
regret guarantees for generalized linear bandits with subexponential reward tails. Computational efficiency
has also received considerable attention~\citep{jun2017scalable,pmlr-v151-faury22a}; in particular,
\citet{zhang2026generalized} used online mirror descent in place of MLE to obtain efficient one-pass updates.
Closely related to our update mechanism, \citet{sawarni2024generalized} studied generalized linear bandits
with limited adaptivity, including rarely-switching algorithms. These works address nonlinear reward learning,
but their regret guarantees do not account for the effects of current actions on future memory states. In
contrast, the \gmodel{} setting allows rewards to depend on the preceding $M$ actions and compares the learner
with the optimal complete action sequence. Our extension combines likelihood-based confidence sets with
Bellman planning and rare switching. Here, rare switching not only reduces replanning frequency but also
controls the memory-induced error at episode boundaries, allowing the finite memory structure to separate this
cost from statistical estimation error.

\section{Detailed Algorithmic Steps in GLBM}\label{app_glbm}

In this appendix, we provide the omitted details of Algorithm~\ref{al2} in \gmodel{}. We begin with the action
selection with frozen parameters in the algorithm.

\paragraph{Action selection with frozen parameters.} Like Algorithm~\ref{al1}, \galgo{} freezes its parameter
estimate at the beginning of each episode. For generalized rewards, however, we replace ridge regression with
maximum likelihood estimation. Let $\Theta=\{\theta\in\mathbb R^d:\|\theta\|_2\le S\}$. At the beginning of
episode $k$, we compute the maximum likelihood estimate $\widehat\theta_k$, construct a nested confidence set
$\mathcal C_k$ of possible values of $\theta^\star$, and select the point $\overline\theta_k$ in
$\mathcal C_k$ closest to $\widehat\theta_k$:
\begin{equation}\label{eq:glbm_confidence_set}
    \begin{aligned}
    \widehat\theta_k &\in\arg\min_{\theta\in\Theta}\mathcal L_{\tau_k-1}(\theta),\\
    \mathcal C_k
    &=\mathcal C_{k-1}\cap\left\{\theta\in\Theta:\mathcal L_{\tau_k-1}(\theta)-\mathcal L_{\tau_k-1}(\widehat\theta_k)\le vc_T(\delta)
    \right\},
    \\\overline\theta_k&= \arg\min_{\theta\in\mathcal C_k}\|\theta-\widehat\theta_k\|_2^2,
    \end{aligned}
\end{equation}
where $\mathcal L_t(\theta)$ measures how well $\theta$ explains the rewards observed in the first $t$ rounds.
It is obtained by multiplying the negative log-likelihood by $v$ and omitting terms independent of $\theta$,
so minimizing it yields the maximum likelihood estimate. The confidence threshold $c_T(\delta)$ determines the
allowed increase $v c_T(\delta)$ above this minimum when constructing the confidence set:
\begin{align}
    \mathcal L_t(\theta) &=\sum_{j=1}^{t}
    \left[b(\phi(s_j,a_j)^\top\theta)-y_j\phi(s_j,a_j)^\top\theta\right],
    \label{eq:glbm_likelihood}\\
    c_T(\delta) &=\log\left(\frac{2}{\delta}\right)+d+
    d\log\left(1+\frac{4RST}{vd}\left[U_\mu RS+\frac{\sqrt{vU_\mu}}{\delta}\right]\right).
    \label{eq:glbm_confidence_threshold}
\end{align}

If the intersection in Equation~(\ref{eq:glbm_confidence_set}) is empty, the algorithm sets
$\mathcal C_k=\mathcal C_{k-1}$, so the confidence sets remain nested and nonempty across episodes. After
constructing $\mathcal C_k$ and $\overline\theta_k$, at each round $t$ of episode $k$ the algorithm selects an
action that maximizes the optimistic action value, as in Algorithm~\ref{al1}:
\begin{equation}\label{eq:glbm_policy}
    a_t\in\arg\max_{a\in\mathcal A}\left\{u_k(s_t,a)+\gamma J_k(f(s_t,a))\right\},
\end{equation}
where $u_k(s,a)$ is the optimistic reward estimate and $J_k(s)$ is the corresponding optimal discounted value,
defined as
\begin{align}\label{eq:glbm_optimistic_reward}
    u_k(s,a) &= \max_{\theta\in\mathcal C_k}\mu\left(\phi(s,a)^\top\theta\right),
    \quad J_k(s)=\sup_{\overline a_0,\overline a_1,\ldots}\sum_{j=0}^{\infty}\gamma^j
    u_k(\overline s_j,\overline a_j).
\end{align}
Here, $\overline s_0=s$ and $\overline s_{j+1}=f(\overline s_j,\overline a_j)$. As in the linear model, this
planning step accounts for both the immediate optimistic reward and the effect of current actions on future
memory states. The algorithm plays $a_t$ and observes the reward $y_t$. As discussed in
Section~\ref{sec:algo}, $J_k$ can be approximated using discounted Howard policy iteration for small-scale
problems or numerical planning methods for larger problems. In our semantic cache experiment
(Section~\ref{sec:exp} and Appendix~\ref{app_exp}), we further exploit its disjoint-bundle structure,
combining a finite-vertex confidence approximation with sorting-based cache optimization and cyclic refreshes
to implement the optimistic planning step more efficiently (Appendix~\ref{app+semantic_cache}).

\paragraph{Determinant-triggered policy updates.} Unlike Algorithm~\ref{al1}, \galgo{} also uses the local
slope of the mean reward function when updating its information matrices. For the selected feature
$\phi(s_t,a_t)$, it computes
\begin{equation}\label{eq:glbm_curvature_weight}
    w_t=\mu'(\phi(s_t,a_t)^\top\overline\theta_k)\mathbf 1\left\{\rho\omega_k(\phi(s_t,a_t))\le 1 \right\},
\end{equation}
where
$\omega_k(\phi(s,a))=\max_{\theta\in\mathcal C_k}\phi(s,a)^\top\theta-\min_{\theta\in\mathcal C_k}\phi(s,a)^\top\theta$
denotes the prediction width for the feature $\phi(s,a)$. Under the property that $|\mu''(z)|\leq\rho\mu'(z)$,
the width condition ensures that the reward slopes associated with parameters in $\mathcal C_k$ differ by at
most a constant factor. This allows the estimated slope to be used as a weight.

Finally, the algorithm updates $V_t$ and $H_t$ by adding $\phi(s_t,a_t)\phi(s_t,a_t)^\top$ and
$w_t\phi(s_t,a_t)\phi(s_t,a_t)^\top$, respectively. The unweighted matrix $V_t$ records all selected features,
whereas $H_t$ incorporates local slope information only when the prediction width is sufficiently small. In
particular, when $w_t=0$, $H_t$ remains unchanged, whereas $V_t$ continues to accumulate information and may
still trigger a policy update. While Algorithm~\ref{al1} monitors only $V_t$, \galgo{} starts a new episode
when either $\det(V_t)\geq 2\det(\overline V_k)$ or $\det(H_t)\geq 2\det(\overline H_k)$. At the start of a
new episode, the algorithm updates the confidence set, estimator, and policy, and resets both reference
matrices.

\section{Experimental Details}\label{app_exp}

This appendix provides additional details of our experiments. We first describe the planning algorithm used to
approximate the value function $J_k(s)$ under small-scale settings.

\subsection{Discounted Howard Policy Iteration}\label{app_howard_policy_iteration}

At the start of episode $k$, we use discounted Howard policy iteration to approximate $J_k(s)$ in
Equation~(\ref{align_calculate_value_function}). The subroutine takes the frozen optimistic reward function
$u=u_k$, discount factor $\gamma\in(0,1)$, and iteration budget $I_{\max}$. Since the memory transition $f$ is
known and deterministic, planning requires no additional interaction with the environment. The initial policy
is myopic: it selects an action that maximizes the immediate optimistic reward,
\begin{align}\label{align_howard_policy_initialize}
    \pi_0(s)\in\arg\max_{a\in\mathcal A}u(s,a).
\end{align}

\begin{algorithm}[htbp]
\caption{Discounted Howard Policy Iteration at Episode $k$}
\label{alg:discounted_policy_iteration}
\begin{algorithmic}[1]
    \STATE \textbf{Input:} Reward function $u$, discount factor $\gamma\in(0,1)$, and iteration budget
    $I_{\max}$.

    \STATE \textbf{Initialize:} Set $\pi_0$ according to \Cref{align_howard_policy_initialize}.

    \FOR{$i=0,\ldots,I_{\max}-1$}
        \STATE Update $\pi_{i+1}$ according to \Cref{align_howard_policy_update} with $\widetilde Q_i(s,a)$
        and $\widetilde J_i(s)$ defined as in \Cref{align_howard_calculate_value}.
    \ENDFOR

    \STATE \textbf{return} $\widetilde J_{I_{\max}}$ defined as in \Cref{align_howard_calculate_value}.
\end{algorithmic}
\end{algorithm}

At iteration $i$, the algorithm updates the policy as
\begin{align}\label{align_howard_policy_update}
    \pi_{i+1}(s) \in \arg\max_{a\in\mathcal A}\widetilde Q_i(s,a),
\end{align}
where $\widetilde Q_i(s,a)$ and $\widetilde J_i(s)$ denote the action value and the discounted value of policy
$\pi_i$, respectively, defined as
\begin{align}
    & \widetilde Q_i(s,a) = u(s,a)+\gamma\widetilde J_i(f(s,a)), \quad \widetilde J_i(s) = u(s,\pi_i(s))+
    \gamma\widetilde J_i(f(s,\pi_i(s))).
    \label{align_howard_calculate_value}
\end{align}
Since $\gamma\in(0,1)$, $\widetilde J_i$ is uniquely determined by this Bellman equation. Unlike the myopic
initialization, this update accounts for both the immediate optimistic reward and the value of the next memory
state. The algorithm performs $I_{\max}$ policy improvements. It returns $\widetilde J_{I_{\max}}$ as an
approximation to $J_k$, which is then used in the action-selection rules.

\begin{theorem}\label{th_howard_regret}
Under the conditions of Theorem~\ref{th_upper_bound}, consider \algo{} with $J_k$ in
Equation~(\ref{align_maximize_action_value_function}) replaced at each episode by $\widetilde J_{I_{\max}}$
returned by Algorithm~\ref{alg:discounted_policy_iteration}. Let $\pi$ denote the resulting policy. Then,
\begin{align*}
    \operatorname{Reg}_T(\pi) & \leq \left( 4d\log\left(1+\frac{TR^2}{\lambda d}\right)+6 \right)RSM +
    4RSd\log\left(1+\frac{TR^2}{\lambda d}\right)
    \\& + 4\sqrt{ Td\log\left(1+\frac{TR^2}{\lambda d}\right) } \left[ \sigma\sqrt{
    d\log\left(1+\frac{TR^2}{\lambda d}\right) +2\log\left(\frac{1}{\delta}\right) } +\sqrt{\lambda}S \right]
    \\& +2RST^2\left(1-\frac1T\right)^{I_{\max}+1},
\end{align*}
holds with probability at least $1-\delta$. Furthermore, if $I_{\max}\geq\lceil 2T\log T\rceil$, the
additional term is at most $2RS$, and hence
\[
    \operatorname{Reg}_T(\pi)= \widetilde O\left(dRS(M+1)+\sigma d\sqrt T\right).
\]
\end{theorem}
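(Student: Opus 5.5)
The plan is to rerun the proof of Theorem~\ref{th_upper_bound} with one change: the exact value $J_k$ is replaced by the approximation $\widetilde J_k\coloneqq\widetilde J_{I_{\max}}$. Everything other than Lemma~\ref{lemma:regret_decomposition} is unaffected by this change. That includes the confidence-set event, the optimism $u_k\ge r_{\theta^\star}$, the bound $K\le d\log_2(1+TR^2/(\lambda d))+1$ from determinant doubling, and the elliptical-potential control of $\sum_t[u_{k(t)}-r_{\theta^\star}]$, since none of them depend on how actions are chosen. So the only new work is a perturbed version of the Bellman decomposition. The additive term it should produce is $2RST^2(1-1/T)^{I_{\max}+1}$.

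\textbf{Step 1 (policy-iteration accuracy).} Howard policy iteration for a $\gamma$-discounted deterministic MDP with rewards in $[-RS,RS]$ contracts at least as fast as value iteration. Standard analysis (e.g.\ \citet{scherrer2013improved}) gives
\[
\|J_k-\widetilde J_i\|_\infty\le\gamma^{i}\|J_k-\widetilde J_0\|_\infty\le \gamma^i\frac{2RS}{1-\gamma}.
\]
Policy iteration is also monotone, so $\widetilde J_i\le J_k$. With $\gamma=1-1/T$ this gives $\epsilon\coloneqq\|J_k-\widetilde J_{I_{\max}}\|_\infty\le 2RST\gamma^{I_{\max}}$; I would track one extra factor of $\gamma$, for example through the one-step greedy improvement in the action-selection rule.

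\textbf{Step 2 (perturbed decomposition).} In the proof of Lemma~\ref{lemma:regret_decomposition}, the played action satisfies the Bellman equality $J_k(s_t)=u_k(s_t,a_t)+\gamma J_k(s_{t+1})$. Now $a_t$ maximizes $u_k(s_t,a)+\gamma\widetilde J_k(f(s_t,a))$ instead. Replacing $\widetilde J_k$ by $J_k$ on both sides costs at most $2\gamma\epsilon$, so
\[
J_k(s_t)\le u_k(s_t,a_t)+\gamma J_k(s_{t+1})+2\gamma\epsilon .
\]
Telescoping within each episode, and using Lemma~\ref{lemma:value_span} (span $\le 2RSM$) for the episode-boundary terms exactly as before, the per-round slacks add at most $2\gamma\epsilon T$ overall. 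Combined with Step 1, this is at most $4RST^2\gamma^{I_{\max}+1}$. I would then tighten this to the stated constant $2RST^2\gamma^{I_{\max}+1}$ by using monotonicity $\widetilde J_k\le J_k$, which means only one side of the two-sided replacement is lost. The remaining terms are bounded verbatim as in Theorem~\ref{th_upper_bound}.

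\textbf{Step 3 (choosing $I_{\max}$).} If $I_{\max}\ge\lceil 2T\log T\rceil$, then
\[
(1-1/T)^{I_{\max}+1}\le e^{-I_{\max}/T}\le T^{-2},
\]
so the extra term is at most $2RS$, which is absorbed into the $\widetilde O(dRS(M+1))$ term.

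The main obstacle is getting the exact constant and power in Step 1 and Step 2. That requires a careful statement of Howard policy iteration's geometric convergence from the myopic initial policy, and of how the greedy step with an inexact value function enters the telescoping sum. If the sharp constant resists, I would first prove the bound with constant $4$ and then refine it using monotonicity.
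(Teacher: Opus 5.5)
Your proposal is correct and follows essentially the same route as the paper. The paper also uses three steps: the Scherrer contraction bound $\|J_k-\widetilde J_{I_{\max}}\|_\infty\le 2RS\gamma^{I_{\max}}/(1-\gamma)$; a one-sided per-round Bellman slack of $2RS\gamma^{I_{\max}+1}/(1-\gamma)$, obtained by replacing $J_k$ with $\widetilde J_{I_{\max}}$ inside the max and then passing back via $\widetilde J_{I_{\max}}\le J_k$; and telescoping with the $2RSM$ span bound to get the extra term $2RST^2\gamma^{I_{\max}+1}$. The constant-$4$ detour is unnecessary because the one-sided argument gives the sharp constant directly, and $\widetilde J_i\le J_k$ holds simply because $\widetilde J_i$ is the value of one particular policy and $J_k$ is the optimal value, not because policy iteration is monotone.
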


\subsection{Synthetic Experimental Setup}\label{app:synthetic_setup}

\paragraph{Data processing and parameter selections.} For the instance construction, we use the coordinate
action set $\mathcal{A}=\{e_1,\ldots,e_{10}\}$ with a fixed ordering. For a history
$s=(a_{t-M},\ldots,a_{t-1})$, the memory map is
\[
    A(s)=
    \begin{cases}
    R e_i e_i^\top, & \text{if } a_{t-M}=\cdots=a_{t-1}=e_i,\\
    0_{d\times d}, & \text{otherwise}.
    \end{cases}
\]
Thus, after switching to an action, the learner must repeat it for $M$ rounds before receiving a nonzero mean
reward. The parameter is $\theta^\star=Su$, where $u$ is a fixed nonnegative unit vector.

In the implementation, \algo{} uses regularization $\lambda=0.625$ and confidence parameter $\delta=0.05$,
together with the discount factor specified in the main text. Optimistic rewards are clipped to $[-RS,RS]$.
For the block baselines, the two OFUL variants use their respective episode-length rules; when the nominal
informative length of OFUL (original) is nonpositive, we set it to $1$.

\paragraph{Comparator.} We use ten balanced initial histories constructed from cyclic shifts of a repeated,
fixed random permutation of the actions. Each history contains all actions equally often, and each action
appears last in one of the ten histories. Results are averaged over these histories within each noise seed.
All methods share the same initial histories and paired noise sequences, and the parameter rules are unchanged
across the four sweeps. The comparator $\mathrm{OPT}_T(\theta^\star)$ is thus the exact finite-horizon optimum
from the same initial history. Let $\ell_i$ denote the length of the trailing run of $e_i$ in this history,
with $\ell_i=0$ if the history does not end in $e_i$. Then
\[
    \mathrm{OPT}_T(\theta^\star) = RS \max_{i\in[d]} u_i (T-M+\ell_i)_+,
\]
where $(x)_+=\max\{x,0\}$. Each term is the cumulative mean reward obtained by repeating $e_i$ throughout the
horizon, including the initial zero-reward rounds. These constant action sequences suffice: active rewards are
constant and nonnegative, and each switch requires $M$ zero-reward rounds, so delaying a switch or switching
repeatedly cannot improve upon either staying with the initial trailing action or switching immediately to
another action.

Moreover, the same structure also allows us to compute $J_k$ exactly by comparing the discounted returns of
repeatedly selecting each action under $u_k$. We only need to track the last action and how many consecutive
times it has been selected, capped at $M$, giving $O(dM)$ effective states on which the value function and its
greedy policy are computed exactly.

\subsection{Additional Synthetic Experiments}

To complement the synthetic experiments in the main text, we consider an additive memory matrix of the form
studied by \citet{clerici2023linear, anngeneralized}. In this setting, the preceding actions contribute
jointly to the memory matrix, and the learner selects actions from a continuous action set.

\paragraph{Data processing and parameter selections.} We use $d=10$,
$\mathcal{A}=\{a\in\mathbb{R}^d:\|a\|_2\leq 1\}$, and independent Gaussian noise with standard deviation
$\sigma=0.25$. For a history $s_t=(a_{t-M},\ldots,a_{t-1})$, the memory map is
\[
    A(s_t)=\frac{R}{M+1} \left(I_d+\sum_{j=1}^{M}a_{t-j}a_{t-j}^{\top}\right).
\]
The normalization ensures $\|A(s_t)\|_{\mathrm{op}}\leq R$, allowing us to vary the memory length and
operator-norm bound separately. We set $\theta^\star=Su$, using the same fixed nonnegative unit direction $u$
as in the synthetic experiments in the main text. All methods start from zero initial memory and share paired
noise sequences. Unless varied, $T=10^6$, $M=100$, and $R=S=1$. We record a continuous trajectory up to
$T=10^6$ and separately vary $M\in\{100,200,\ldots,1000\}$ and $R,S\in\{10,20,\ldots,100\}$, one parameter at
a time. RSM-LinUCB uses $\lambda=0.625$, $\delta=0.05$, and discount factor $\gamma=0.999$, with optimistic
rewards clipped to $[-RS,RS]$. We compare against the same four baselines as in the synthetic experiments in
the main text. Results average 24 seeds; shading denotes one sample standard deviation.

\paragraph{Comparator.} We measure cumulative pseudo-regret against the exact finite-horizon optimum from the
same zero initial memory. At round $t$, the mean reward is at most $RS(1+\min\{t-1,M\})/(M+1)$, and the
constant action sequence $a_t=u$ attains this bound at every round. Thus,
\[
    \mathrm{OPT}_T(\theta^\star) =\frac{RS}{M+1}\sum_{t=1}^{T} \bigl(1+\min\{t-1,M\}\bigr),
\]
which equals $RS(T-M/2)$ for $T\geq M$, including the initial memory build-up. For the implementation of
RSM-LinUCB, we follow the gradient-based optimization approach of \citet{clerici2023linear, anngeneralized}
and approximate optimistic planning by projected gradient ascent over periodic action sequences of length $M$.
We evaluate each sequence's discounted return from the current memory, including its infinite periodic tail,
and use at most 60 iterations per initialization, with four initializations and a warm start from the previous
sequence. The selected sequence is repeated until the next determinant trigger.

\paragraph{Experimental Results.} As shown in Figure~\ref{fig:additional_synthetic}, RSM-LinUCB achieves the
lowest mean regret at every tested value of $M$, $R$, and $S$. At $T=10^6$, its mean regret is approximately
$52.7\%$ lower than that of the best baseline. These results complement the main experiments by demonstrating
the advantage of our memory-aware planning approach under additive memory dynamics and a continuous action
set.

\begin{figure}[htbp]
    \centering
    \includegraphics[width=\linewidth]{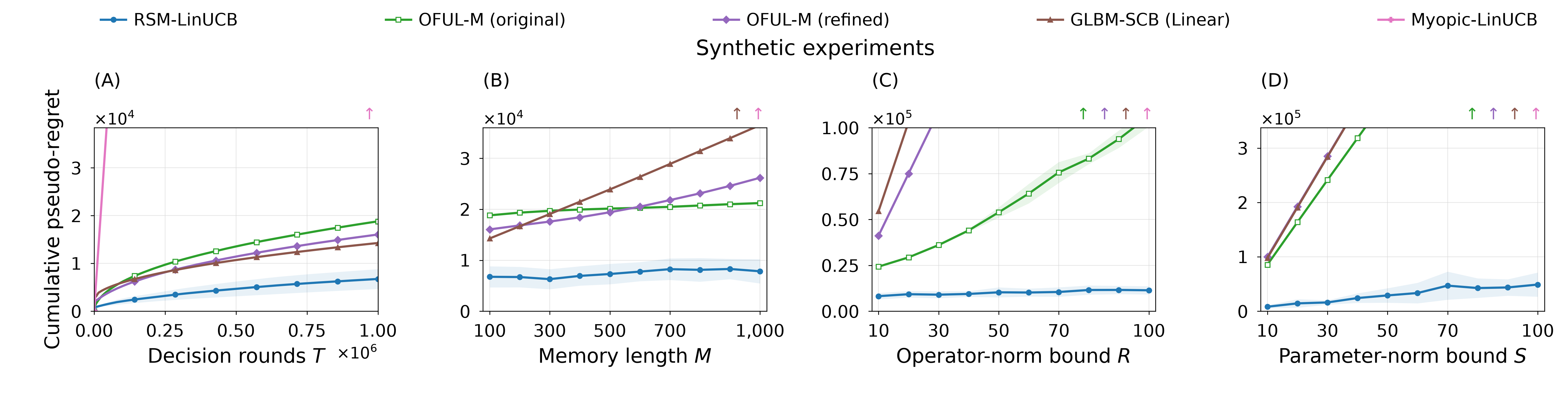}
    \caption{Cumulative pseudo-regret for additional synthetic experiments with additive memory. Shading
    denotes one sample standard deviation. Some baseline curves exceed the displayed range.}
    \label{fig:additional_synthetic}
\end{figure}

\subsection{KV Cache Experimental Setup}\label{app:kv_cache_setup}

\paragraph{Data processing and parameter selections.}

We partition the Mooncake conversation trace~\citep{qin2025mooncake} into construction, calibration, and
evaluation sets using a fixed, content-independent hash of its 10-second timestamp bins. Entire bins are
assigned in proportions $40\%/20\%/40\%$, yielding 4,978, 2,230, and 4,823 requests, respectively. We allocate
a larger construction share than in a preliminary $20\%/20\%/60\%$ split to identify more reusable prefixes
and obtain a richer action catalogue. This is quite useful when studying longer memory horizons: a small
catalogue can become largely covered by the retained actions, leaving little scope for comparing different
cache management policies. The split determines the available data, whereas the tested memory lengths are
chosen separately from the retention timescale. Using only construction requests, we retain prefixes observed
at least twice and project each request onto its deepest retained prefix. Distinct projected endpoints define
candidate refresh actions. Shared blocks count only once toward capacity and reward. We then contract
nonbranching chains whose blocks always have identical availability, preserving the permitted endpoints and
branch points. The resulting catalogue contains $K=855$ actions and $d=864$ informative segment coordinates.
The catalogue and demand representation remain fixed across all tested values of $M$; blocks outside the
retained catalogue contribute no reusable-token reward.

Rewards are divided by a fixed token scale $Z=131{,}072$ and averaged over 36 independently sampled requests
per round. For the reported experiments, we use the conservative request-level range $[0,1]$, giving the
sub-Gaussian noise parameter $\sigma=1/(2\sqrt{36})=1/12$. The parameter norm bound is set to
$S=1.25\|\widehat{\theta}_{\mathrm{cal}}\|_2\approx0.1452$, where $\widehat{\theta}_{\mathrm{cal}}$ is the
calibration mean expressed in the scaled feature coordinates. Our method uses
$\lambda=d\sigma^2/S^2\approx284.72$ and confidence level $10^{-5}$. The theoretical guarantee requires
$\|\theta^\star\|_2\leq S$; we verify this condition for the fixed empirical evaluation distribution without
exposing its demand weights to the learner. The baselines (OFUL-M, GLBM-SCB) retain their respective
block-length rules, confidence interval, and estimation updates. Nonpositive prescribed informative lengths
$L$ are replaced by one. GLBM-SCB uses its ordered online mirror-descent updates, with the implemented
initialization $\lambda_{\mathrm{GLBM}}=L/d$. For scalability, the baseline methods use the same directional
tree-planning machinery described below. Myopic LinUCB maximizes the confidence score for current state
without planning future cache effects.

\paragraph{Simulation protocol.} Each round represents a 10-second cache control epoch. At the start of each
round, the controller refreshes one catalogue prefix, making its blocks available for current and the
following $M$ rounds. Refreshing an already resident block renews its retention. The $M=180$ setting therefore
represents approximately 30 minutes of retention, motivated by Mooncake Store's documented soft-pin
TTL~\citep{mooncakeStore}. To keep cache management lightweight, reduce communication overhead, and limit
exposure of individual access patterns, we model a management controller separate from the request-serving
engine, motivated by aggregate cache monitoring and privacy concerns in KV-cache
sharing~\citep{lmcacheMetrics,wu2025know}. The controller makes refresh decisions at the epoch level, while
the engine handles per-request matching to KV blocks and exports only epoch-level reuse statistics rather than
prompt contents, request prefix identifiers, or tensor payloads. This interface is motivated by the aggregate
prefix-cache query and hit counters available in serving systems such as vLLM~\citep{vllmMetrics}. The
controller knows the fixed action catalogue and its own refresh history, which suffice to compute the cache
feature; aggregate rewards then allow it to estimate unknown demand without identifying each arriving
request's KV blocks. This separation avoids maintaining request-level access logs at the controller while
retaining the feedback needed for learning.

Let $p_i\in\{0,1\}^d$ indicate the segments included in refresh action $i$, and let $g(s_t)\in\{0,1\}^d$
indicate those still resident from the preceding $M$ rounds. After action $i_t$, the available segments are
indicated by $w_t=g(s_t)+(1-g(s_t))\odot p_{i_t}$: previously resident segments remain available, and the
current action adds the missing segments. We use the scaled cache feature $x_t=w_t/\sqrt d$. Both $w_t$ and
$x_t$ are known to the controller because they depend only on its refresh history and the fixed action
catalogue. For each request $Q$, let $z_j(Q)$ be the number of matching prefix tokens in retained segment $j$.
The number of reusable tokens is then $w_t^\top z(Q)$. Each round draws 36 requests and returns their
normalized average reuse, $$Y_t=(36Z)^{-1}\sum_{j=1}^{36}w_t^\top z(Q_{t,j}),$$ where $Z$ is the fixed
normalization constant. Defining $\theta^\star=(\sqrt d/Z)\mathbb E_{\mathrm{eval}}[z(Q)]$ gives
$Y_t=x_t^\top\theta^\star+\varepsilon_t$, where $\varepsilon_t$ has zero mean conditional on the past and the
chosen action. The controller observes only $Y_t$; the individual demand vectors are used by the simulator to
generate this feedback.

To express this model in the paper's \model{} notation, we add one coordinate. Let $h=\max_i\|p_i\|_2^2$ and
define
\[
    a_i=\frac{1}{\sqrt{1+h}}
    \begin{pmatrix}1\\p_i\end{pmatrix},
    \quad A(s)=\sqrt{\frac{1+h}{d}}
    \begin{pmatrix}
    0 & g(s)^\top\\
    0 & \operatorname{diag}(1-g(s))
    \end{pmatrix}.
\]
These definitions ensure $\|a_i\|_2\leq1$ and $A(s_t)^\top a_{i_t}=(0,x_t)$. With augmented parameter
$\bar\theta^\star=(0,\theta^\star)$, the conditional mean reward is exactly
$\langle a_{i_t},A(s_t)\bar\theta^\star\rangle$. The added feature coordinate is always zero and is omitted
from regression, leaving $d$ informative coordinates.

\paragraph{Comparator.} The comparator $\operatorname{OPT}_T(\theta^\star)$ is the maximum cumulative mean
reward over all $T$-round refresh sequences in the fixed catalogue, starting from the same empty cache as the
learner. We compute it offline using mean token demands under the empirical evaluation distribution.

Let $q=M+1$, and let $F_j$ denote the largest per-round mean reward obtainable from the union of at most $j$
catalogue paths. These values are computed exactly by tree dynamic programming using the mean demand weights.
To account for the initial cache build-up, we construct an ordered list by repeatedly adding the path with the
largest additional demand coverage. For the reported catalogue, we verify against the exact dynamic program
that the first $j$ paths attain $F_j$ for every $j\leq q$, using integer token counts before normalization.
Refreshing these paths in order and then cycling through them attains $F_{\min\{t,q\}}$ at every round. No
refresh sequence can exceed this value at round $t$, since its cache contains the union of at most
$\min\{t,q\}$ paths. Therefore, we set \( \operatorname{OPT}_T(\theta^\star) = \sum_{t=1}^{T}
F_{\min\{t,M+1\}} \) to denote the optimal cumulative reward over $T$ rounds in the experiment. Given the mean
demand weights, this offline computation takes $O(dq^3+Kdq)$ time, including the tree-DP calls for all
$j\leq q$ and verification of the ordered paths.

\paragraph{Accelerated planning.} In the KV cache experiment, we utilize the special prefix-cache structure to
obtain a computationally efficient approximation to the optimistic planning step in \algo{}. Specifically, we
approximate the ellipsoidal confidence bonus in $u_k$ using a finite set of Gaussian directions to use tree
dynamic programming for the linear objectives:
\begin{align}\label{align_new_confidence_bound}
    \widetilde u_k(x) = \operatorname{clip}_{[0,B]} \left( x^\top\overline{\theta}_k +
    \frac{\overline{\beta}_k}{c} \max_{1\leq j\leq N} \left|g_{k,j}^\top L_k^\top x\right| \right),
\end{align}
where $x$ is the informative cache feature, $L_kL_k^\top=\overline V_k^{-1}$, $g_{k,j}\sim\mathcal N(0,I_d)$
are independent directions drawn at each planning episode, and $B=RS$ is the known expected reward bound. We
use $c=1.5$ and choose $N=O(\log(T/\delta_{\mathrm{plan}}))$ to control the additional failure probability.

The resulting score is the clipped maximum of $2N$ linear objectives, each assigning an additive weight to
every tree segment. For each objective, a tree-knapsack dynamic program finds the best union of at most
$q=M+1$ prefix paths exactly, counting shared segments only once. We then approximate the planning policy by
selecting the union with the highest score and cyclically refreshing its constituent paths until the next
determinant trigger. This implementation leaves the estimator and update schedule unchanged, and the resident
cache equals the selected union after at most $M$ rounds. Each tree optimization takes $O(dq^2)$ time.
Including matrix factorization and direction transformations, each replanning call takes $O(d^3+Nd^2+Ndq^2)$
time and $O(d^2+Ndq)$ planning memory in the batched implementation.

As in polyhedral confidence set methods~\citep{dani2008stochastic} and randomized linear
bandits~\citep{pmlr-v54-abeille17a}, our approximation introduces only an additional $O(\sqrt d)$ factor in
the estimation error (the second term in Theorem~\ref{th_upper_bound}), up to logarithmic terms. The ridge
estimator and its confidence guarantee remain unchanged, and the additional cost only comes from the
confidence bound used in the optimistic reward estimate $\widetilde u_k$ in
Equation~(\ref{align_new_confidence_bound}). Let $x^\star$ denote an optimal steady cache feature. Each
Gaussian direction has a constant probability of producing a confidence bound at least as large as the
original one at $x^\star$. Taking the maximum over the prescribed number of independent directions therefore
ensures $\widetilde u_k(x^\star)\geq (x^\star)^\top\theta^\star$ across episodes with high probability.
Furthermore, Gaussian concentration also controls the enlargement of the confidence bound uniformly over cache
features, including the feature $x_k$ selected by maximizing $\widetilde u_k$. Together with optimism at
$x^\star$, this gives
\[
    (x^\star-x_k)^\top\theta^\star \leq \widetilde u_k(x_k)-x_k^\top\theta^\star \leq \widetilde O(\sqrt d)\,
    \overline\beta_k \|x_k\|_{\overline V_k^{-1}}.
\]
Thus, the reward gap is still controlled by the original confidence width, enlarged by only a $\sqrt d$
factor. Summing these widths introduces only an additional $\widetilde O(\sqrt d)$ factor in the estimation
error. The memory transients under cyclic refreshes remain bounded by $BM$ per episode, so the additive memory
dependence is preserved.

\subsection{Semantic Cache Experimental Setup}\label{app+semantic_cache}

\paragraph{Data processing and parameter selections.} We use the released GTE embeddings and GPT-4o-mini
responses from vCache/SemBenchmarkLmArena. Exact-query deduplication leaves 56,438 queries, which we randomly
divide into disjoint 20\%/20\%/60\% construction, calibration, and evaluation splits. After excluding
construction entries larger than 64 KiB, we partition the remaining 11,276 entries into $K=64$ disjoint cache
bundles by recursively splitting along the embedding coordinate with the largest variance at its median,
giving $d=65$ features. Each query is assigned to its nearest construction entry using cosine similarity, with
routing fixed independently of which bundles are resident. Reuse is counted as successful only if the
similarity is at least $0.9$ and the two queries share the benchmark's semantic-equivalence label. These
labels are available only to the evaluator and do not certify factual correctness. Bundle sizes include the
query and response text and float32 embeddings. For the 64 MiB payload budget $C$, we require $(M+1)\max_i
\mathrm{size}(i)\le C$ for the memory length $M$, ensuring feasibility for every refresh history. The budget
excludes search index and runtime overheads. For GLBM-SCB, we use blocks of length $M+L$, where
$L=\max\{1,\lfloor\sqrt{MT}\rfloor-M\}$ as used in their paper~\citep{anngeneralized}.

\paragraph{Cache dynamics and generalized rewards.} An action refreshes one bundle, which remains available in
the current round and the following $M$ rounds. Read hits do not extend retention, and the simulator performs
no additional admissions or evictions. Let $g(s)\in\{0,1\}^K$ indicate the bundles refreshed in the preceding
$M$ rounds. For $1\le M<K$, using the standard basis $e_0,\ldots,e_K$ of $\mathbb R^{K+1}$, we define action
set $\mathcal A=\{a_1,\ldots,a_K\}$, with action $a_i=\sqrt{\frac{M}{M+1}}e_0+\frac{1}{\sqrt{M+1}}e_i$, and
memory matrix
\[
    A(s)^\top=
    \begin{pmatrix}
    0 & 0_{1\times K}\\
    \sqrt{\frac{M+1}{M}}g(s) & \sqrt{M+1}\operatorname{diag}(\mathbf 1-g(s))
    \end{pmatrix}.
\]
Thus, every action has unit norm, $\|A(s)\|_{\mathrm{op}}\le R=\sqrt{M+1}$, and
\[
    \phi(s,a_i) = A(s)^\top a_i =
    \begin{pmatrix}
    0\\
    g(s)+\operatorname{diag}(\mathbf 1-g(s))e_i^{(K)}
    \end{pmatrix},
\]
where $e_i^{(K)}$ is the $i$-th standard basis vector of $\mathbb R^K$. The nonzero coordinates therefore
record exactly which bundles are available after the refresh, giving $d=K+1=65$.

The calibration split fixes the bundle arrival proportions $\ell_i$, while the evaluation split determines the
unknown conditional success probabilities $p_i$. Simulated queries first select a bundle according to $\ell_i$
and then sample a held-out query assigned to that bundle. Accordingly,
\[
    \theta^\star=(0,\ell_1p_1,\ldots,\ell_Kp_K)^\top,
\]
and thus $S=\|(\ell_1,\ldots,\ell_K)\|_2\approx0.1256$. The learner observes only the resulting service time
feedback. With unit rate Poisson arrivals and independently sampled queries, successful cache service has rate
$z=\phi(s,a)^\top\theta^\star$, so interruptions have rate $1-z$. Keeping the cache fixed during each service
cycle gives $Y_t\sim\operatorname{Exp}(1-z)$, with $\mu(z)=1/(1-z)$ and $v=1$. All tested settings satisfy
$RS<1$, with $M$ counting completed service cycles.

\paragraph{Comparator.} The comparator $\mathrm{OPT}_T(\theta^\star)$ is the maximum cumulative mean reward
over all $T$-round refresh sequences in the fixed catalogue, starting from the same empty cache as the learner
and following the same retention rule. We compute it offline using the simulator's true bundle contributions
$\ell_i p_i$, which are not revealed to the learner. Let $q=\min\{K,M+1\}$. An optimal sequence refreshes the
$q$ highest-contribution bundles in decreasing order and then cycles through them. At round $t$, any feasible
sequence can keep at most $\min\{t,q\}$ bundles resident, and this sequence retains exactly the bundles with
the largest contributions. Since $\mu$ is increasing, it therefore maximizes the mean reward at every round.
Summing these rewards gives the exact finite-horizon optimum, including the initial cache build-up. Sorting
the contributions and computing prefix sums takes $O(K\log K+q)$ time, without enumerating refresh sequences.

\paragraph{Accelerated planning.} We exploit the disjoint-bundle structure to efficiently approximate the
optimistic planning step in Algorithm~\ref{al2}. Specifically, we use a finite-vertex outer approximation for
the confidence maximization in $u_k$ and implement the resulting cache policy through cyclic refreshes.
Following \citet{dani2008stochastic}, we approximate the optimistic reward $u_k(s,a)$ as
\[
    \widetilde u_k(s,a) = \mu\left( \operatorname{clip}_{[-RS,RS]} \left( \phi(s,a)^{\top}\overline\theta_k +
    \sqrt d\max_{0\leq j\leq K} \left|\phi(s,a)^\top F_ke_j\right| \right) \right),
\]
where $F_k=\sqrt{8evc_T(\delta)+4\lambda_2S^2}\,\overline H_k^{-1/2}$ captures parameter uncertainty, $d=K+1$,
and $e_j$ denotes the $j$-th coordinate vector.

This approximation reduces optimistic cache optimization to $2d$ linear objectives. For each objective, the
disjoint-bundle structure allows us to find its best $q$-bundle cache by selecting the $q=\min\{K,M+1\}$
largest bundle coordinates. Comparing these candidates solves the approximate optimistic optimization over
$q$-bundle caches exactly. We then cyclically refresh the selected bundles until the next determinant trigger,
establishing the selected cache after at most $M$ initial rounds. Each replanning call takes $O(d^3)$ time for
matrix factorization and $O(dK\log K)$ time for cache optimization, giving a total complexity of
$O(d^3+dK\log K)$.

As in the polyhedral confidence approximation of~\citet{dani2008stochastic} (Theorem 2), our implementation
also introduces only an additional $O(\sqrt d)$ factor in the estimation error (the second term in
Theorem~\ref{thm:glbm_upper}). The finite-vertex approximation contains the same $\overline H_k$-based
confidence ellipsoid, so the resulting reward estimate remains optimistic, while its prediction confidence
bound increases by at most $\sqrt d$ relative to that ellipsoid. Since the approximate cache optimization is
solved exactly, the reward gap is controlled by this enlarged confidence bound without further error from
numerical optimization. For the reciprocal link $\mu(z)=1/(1-z)$, the relation
\[
    \mu(z')-\mu(z) \leq \sqrt{U_\mu\mu'(z)}\,(z'-z),
\]
for $-RS\leq z\leq z'\leq RS$ allows the prediction error to be translated into a mean-reward difference using
the local slope. The estimation error argument in Theorem~\ref{thm:glbm_upper} therefore applies with only the
additional $O(\sqrt d)$ factor. Importantly, curvature weighting still uses the original confidence-width
bounds rather than the enlarged planning bound, so the curvature-induced error does not acquire this factor.
The determinant triggers are unchanged, and the initial cache transitions still contribute at most
$M\Delta_\mu$ per episode. Thus, the approximation enlarges only the estimation error term, while preserving
the original orders of the memory-induced and curvature-induced errors.

\section{Detailed Proofs}

\begin{proof}[Proof of Lemma~\ref{lemma:value_span}]
Fix any two memory states $s,s'\in\mathcal S$ and any action sequence $(a_0,a_1,\ldots)$. Let $(s_j)_{j\geq0}$
and $(s_j')_{j\geq0}$ denote the resulting state trajectories, where $s_0=s$, $s_0'=s'$, and
\[
    s_{j+1}=f(s_j,a_j), \quad s_{j+1}'=f(s_j',a_j).
\]
Applying the same sequence of $M$ actions causes the two memory states to coincide. Therefore, $s_j=s_j'$ for
every $j\geq M$, and the rewards along the two trajectories may differ only during the first $M$ rounds. Since
every reward lies in $[-RS,RS]$, we obtain
\begin{align*}
    \sum_{j=0}^{\infty}\gamma^j r_q(s_j,a_j) &= \sum_{j=0}^{\infty}\gamma^j r_q(s_j',a_j) +
    \sum_{j=0}^{M-1}\gamma^j \left[r_q(s_j,a_j)-r_q(s_j',a_j)\right] \\
    &\leq J_q(s')+2RS\sum_{j=0}^{M-1}\gamma^j.
\end{align*}
Taking the supremum over all action sequences on the left-hand side gives
\[
    J_q(s)-J_q(s') \leq 2RS\sum_{j=0}^{M-1}\gamma^j \leq 2RSM.
\]
Since $s$ and $s'$ are arbitrary, it follows that
\[
    \operatorname{span}(J_q) = \sup_{s\in\mathcal S}J_q(s) - \inf_{s\in\mathcal S}J_q(s) \leq 2RSM.
\]
This proves the claimed bound.
\end{proof}

\begin{proof}[Proof of Lemma~\ref{lemma:regret_decomposition}]
Recall that
\[
    \beta_t = \sigma \sqrt{ d\log\left(1+\frac{tR^2}{\lambda d}\right) +2\log\left(\frac{1}{\delta}\right) }
    +\sqrt{\lambda}S, \quad \widehat\theta_t=V_t^{-1}b_t.
\]
By the standard self-normalized concentration inequality, the event
\[
    \mathcal E \coloneqq \bigcap_{t=0}^{T} \left\{ \mnorm{\widehat\theta_t-\theta^\star}{V_t} \leq \beta_t
    \right\}
\]
satisfies $\mathbb P(\mathcal E)\geq1-\delta$. Throughout the proof, we work on the event $\mathcal E$. For
every episode $k$ and state--action pair $(s,a)$, the Cauchy--Schwarz inequality gives
\[
    \left| \phi(s,a)^\top \left(\overline\theta_k-\theta^\star\right) \right| \leq \overline\beta_k
    \mnorm{\phi(s,a)}{\overline V_k^{-1}}.
\]
Since $r(s,a)\in[-RS,RS]$, clipping the optimistic estimate to $[-RS,RS]$ preserves optimism. Therefore,
\(u_k(s,a)\geq r(s,a)\) holds for every $k$ and $(s,a)$. For any reward function
$r_q:\mathcal S\times\mathcal A\to[-RS,RS]$, let $J_q$ denote its optimal discounted value function and define
\[
    c_q\coloneqq\inf_{s\in\mathcal S}J_q(s).
\]
By Lemma~\ref{lemma:value_span},
\[
    J_q(s)-c_q\in[0,2RSM]
\]
holds for every memory state $s\in\mathcal S$. After centering the Bellman optimality equation by $c_q$, we
have
\[
    (1-\gamma)c_q+J_q(s)-c_q = \sup_{a\in\mathcal A} \left\{ r_q(s,a) +\gamma\bigl(J_q(f(s,a))-c_q\bigr)
    \right\}.
\]
Now take $r_q=r$. Consider any comparator action sequence $(a_1^\star,\ldots,a_T^\star)$ in
Equation~(\ref{align_opt}), and let $(s_t^\star)_{t=1}^{T+1}$ be its induced state trajectory. Evaluating the
Bellman equation at $(s_t^\star,a_t^\star)$ gives
\begin{align*}
    r(s_t^\star,a_t^\star) \leq\; (1-\gamma)c_{\theta^\star} + \left(
    J_{\theta^\star}(s_t^\star)-c_{\theta^\star} \right)- \gamma \left(
    J_{\theta^\star}(s_{t+1}^\star)-c_{\theta^\star} \right).
\end{align*}
Summing over $t\in[T]$, we obtain
\begin{align*}
    & \sum_{t=1}^{T} r(s_t^\star,a_t^\star)
    \\\leq& T(1-\gamma)c_{\theta^\star} + \left( J_{\theta^\star}(s_1^\star)-c_{\theta^\star} \right) +
    (1-\gamma) \sum_{t=2}^{T} \left( J_{\theta^\star}(s_t^\star)-c_{\theta^\star} \right) - \gamma \left(
    J_{\theta^\star}(s_{T+1}^\star)-c_{\theta^\star} \right) \\
    \leq& T(1-\gamma)c_{\theta^\star} + 2RSM\left[1+(T-1)(1-\gamma)\right].
\end{align*}
Since $\gamma=1-1/T$, taking the supremum over comparator action sequences yields
\begin{align}
    \operatorname{OPT}_T(\theta^\star) \leq c_{\theta^\star}+4RSM.
    \label{align_decomposition_comparator}
\end{align}

We now analyze the trajectory generated by \algo{}. Let
\[
    E_k \coloneqq \{\tau_k,\ldots,\tau_{k+1}-1\}
\]
denote the set of rounds in episode $k$, where $\tau_{K+1}=T+1$, and let
\[
    L_k\coloneqq|E_k|=\tau_{k+1}-\tau_k.
\]
Define \( c_k\coloneqq\inf_{s\in\mathcal S}J_k(s)\). Since $u_k(s,a)\in[-RS,RS]$, Lemma~\ref{lemma:value_span}
also gives
\[
    J_k(s)-c_k\in[0,2RSM]
\]
for every $s\in\mathcal S$. By the action-selection rule in Algorithm~\ref{al1} and the Bellman optimality
equation, for every $t\in E_k$,
\[
    J_k(s_t^\pi) = u_k(s_t^\pi,a_t^\pi) +\gamma J_k(s_{t+1}^\pi).
\]
Centering this equality by $c_k$ gives
\[
    u_k(s_t^\pi,a_t^\pi) + \gamma\left(J_k(s_{t+1}^\pi)-c_k\right) = (1-\gamma)c_k+J_k(s_t^\pi)-c_k.
\]
Summing over $t\in E_k$ yields
\begin{align*}
    &L_k(1-\gamma)c_k - \sum_{t\in E_k}u_k(s_t^\pi,a_t^\pi) \\
    =\;& -\left(J_k(s_{\tau_k}^\pi)-c_k\right) - (1-\gamma) \sum_{t=\tau_k+1}^{\tau_{k+1}-1}
    \left(J_k(s_t^\pi)-c_k\right)+ \gamma \left(J_k(s_{\tau_{k+1}}^\pi)-c_k\right) \\
    \leq\;&2RSM.
\end{align*}

On the event $\mathcal E$, optimism implies \(J_k(s)\geq J_{\theta^\star}(s)\) for every $s\in\mathcal S$, and
hence $c_k\geq c_{\theta^\star}$. Combining these observations, we obtain
\begin{align*}
    &L_k(1-\gamma)c_{\theta^\star} - \sum_{t\in E_k} r(s_t^\pi,a_t^\pi) \\
    \leq\;& L_k(1-\gamma)c_k - \sum_{t\in E_k}u_k(s_t^\pi,a_t^\pi)+ \sum_{t\in E_k} \left[
    u_k(s_t^\pi,a_t^\pi) - r(s_t^\pi,a_t^\pi) \right] \\
    \leq\;& 2RSM + \sum_{t\in E_k} \left[ u_k(s_t^\pi,a_t^\pi) - r(s_t^\pi,a_t^\pi) \right].
\end{align*}

Finally, combining Equation~(\ref{align_decomposition_comparator}) with the preceding inequality, summing over
all episodes, and recalling that $\sum_{k=1}^{K}L_k=T$ and $1-\gamma=1/T$, we have
\begin{align*}
    \operatorname{Reg}_T(\pi) =\;& \operatorname{OPT}_T(\theta^\star) - \sum_{t=1}^{T} r(s_t^\pi,a_t^\pi) \\
    \leq\;& c_{\theta^\star}+4RSM - \sum_{k=1}^{K}\frac{L_k}{T}c_{\theta^\star} +2KRSM+ \sum_{t=1}^{T} \left[
    u_{k(t)}(s_t^\pi,a_t^\pi) - r(s_t^\pi,a_t^\pi) \right] \\
    =\;& 2(K+2)RSM + \sum_{t=1}^{T} \left[ u_{k(t)}(s_t^\pi,a_t^\pi) - r(s_t^\pi,a_t^\pi) \right].
\end{align*}
Since $\mathbb P(\mathcal E)\geq1-\delta$, this proves the bound.
\end{proof}

\begin{proof}[Proof of Theorem~\ref{th_upper_bound}]
Let $\mathcal E$ and $\beta_t$ be as defined in the proof of Lemma~\ref{lemma:regret_decomposition}, and
recall that $\mathbb P(\mathcal E)\geq1-\delta$. Throughout the proof, we work on the same event $\mathcal E$
and write $B=RS$.

By Lemma~\ref{lemma:regret_decomposition}, on $\mathcal E$,
\begin{align*}
    \operatorname{Reg}_T(\pi) \leq\;& 2(K+2)BM + \sum_{t=1}^{T} \left[ u_{k(t)}(s_t^\pi,a_t^\pi) -
    r(s_t^\pi,a_t^\pi) \right].
\end{align*}
It therefore remains to bound the number of episodes and the cumulative reward-estimation error. The
confidence bound used in the proof of Lemma~\ref{lemma:regret_decomposition}, together with clipping to
$[-B,B]$, gives
\[
    0 \leq u_k(s,a)-r(s,a) \leq 2\min \left\{ B,\, \overline\beta_k \mnorm{\phi(s,a)}{\overline V_k^{-1}}
    \right\}
\]
for every episode $k$ and state--action pair $(s,a)$. We first bound the total number of episodes. For every
completed episode $k\in[K-1]$, the determinant-doubling condition implies
\[
    \det(\overline V_{k+1}) = \det(V_{\tau_{k+1}-1}) \geq 2\det(\overline V_k).
\]
Since $\overline V_1=V_0=\lambda I_d$, it follows that
\[
    2^{K-1}\det(V_0) \leq \det(\overline V_K) \leq \det(V_T).
\]
Therefore,
\[
    K \leq 1+ \frac{1}{\log 2} \log\left( \frac{\det(V_T)}{\det(\lambda I_d)} \right).
\]
Define
\[
    \Gamma_T \coloneqq \log\left( \frac{\det(V_T)}{\det(\lambda I_d)} \right) \leq d\log\left(
    1+\frac{TR^2}{\lambda d} \right) \eqqcolon \overline\Gamma_T,
\]
where the inequality follows from $\|\phi(s,a)\|_2\leq R$ and the arithmetic--geometric mean inequality
applied to the eigenvalues of $V_T$. Thus,
\begin{align}
    \label{align_number_of_episodes}
    K \leq 1+\frac{\overline\Gamma_T}{\log 2} \leq 1+2\overline\Gamma_T.
\end{align}

We next compare the frozen information matrix with the current one. For any round $t$ in episode $k$, the
episode termination rule ensures that
\[
    \det(V_{t-1}) < 2\det(\overline V_k).
\]
Since $V_{t-1}\succeq\overline V_k$, define
\[
    Q \coloneqq \overline V_k^{-1/2} V_{t-1} \overline V_k^{-1/2}.
\]
Every eigenvalue of $Q$ is at least one, while
\[
    \det(Q) = \frac{\det(V_{t-1})}{\det(\overline V_k)} <2.
\]
It follows that $Q\preceq2I_d$, and therefore
\[
    V_{t-1}\preceq2\overline V_k.
\]
Equivalently, $\overline V_k^{-1}\preceq2V_{t-1}^{-1}$, which implies
\begin{align}
    \label{align_episode_norm_comparison}
    \mnorm{\phi(s_t^\pi,a_t^\pi)}{\overline V_k^{-1}} \leq \sqrt{2}\,
    \mnorm{\phi(s_t^\pi,a_t^\pi)}{V_{t-1}^{-1}}.
\end{align}

Combining Lemma~\ref{lemma:regret_decomposition}, the reward-estimation error bound,
Equation~(\ref{align_number_of_episodes}) and (\ref{align_episode_norm_comparison}), we obtain
\begin{align}
    \operatorname{Reg}_T(\pi) \leq\;& 2(K+2)BM + 2\sum_{t=1}^{T} \min \left\{ B,\,
    \sqrt{2}\,\overline\beta_{k(t)} \mnorm{\phi(s_t^\pi,a_t^\pi)}{V_{t-1}^{-1}} \right\} \notag\\
    \leq& \left( 4d\log\left(1+\frac{TR^2}{\lambda d}\right)+6 \right)BM+ 2\sum_{t=1}^{T} \min \left\{ B,\,
    \sqrt{2}\,\overline\beta_{k(t)} \mnorm{\phi(s_t^\pi,a_t^\pi)}{V_{t-1}^{-1}} \right\}.
    \label{align_analysis_of_regret}
\end{align}

It remains to control the cumulative confidence widths. By the elliptical potential lemma
\citep[Lemma~11]{AbbasiYadkori2011ImprovedAF},
\begin{align}
    \label{align_elliptical_potential}
    \sum_{t=1}^{T} \min \left\{ 1,\, \mnorm{\phi(s_t^\pi,a_t^\pi)}{V_{t-1}^{-1}}^2 \right\} \leq 2d\log\left(
    1+\frac{TR^2}{\lambda d} \right) = 2\overline\Gamma_T.
\end{align}
For notational convenience, let
\[
    q_t \coloneqq \mnorm{\phi(s_t^\pi,a_t^\pi)}{V_{t-1}^{-1}},
\]
and define
\[
    \mathcal I_{>} \coloneqq \{t\in[T]:q_t>1\}, \quad \mathcal I_{\leq} \coloneqq \{t\in[T]:q_t\leq1\}.
\]
By Equation~(\ref{align_elliptical_potential}), $|\mathcal I_{>}|\leq2\overline\Gamma_T$. Moreover, since
$\overline\beta_{k(t)}\leq\beta_T$, the Cauchy--Schwarz inequality gives
\begin{align*}
    \sum_{t=1}^{T} \min \left\{ B,\, \sqrt{2}\,\overline\beta_{k(t)}q_t \right\} &\leq B|\mathcal I_{>}| +
    \sqrt{2}\,\beta_T \sum_{t\in\mathcal I_{\leq}}q_t \\
    &\leq 2B\overline\Gamma_T + \sqrt{2}\,\beta_T \sqrt{ T\sum_{t\in\mathcal I_{\leq}}q_t^2 } \\
    &\leq 2B\overline\Gamma_T + 2\beta_T\sqrt{T\overline\Gamma_T}.
\end{align*}
Substituting this bound into Equation~(\ref{align_analysis_of_regret}) yields
\begin{align*}
    \operatorname{Reg}_T(\pi) & \leq \left( 4d\log\left(1+\frac{TR^2}{\lambda d}\right)+6 \right)BM +
    4Bd\log\left(1+\frac{TR^2}{\lambda d}\right) \\
    &+ 4\sqrt{ Td\log\left(1+\frac{TR^2}{\lambda d}\right) } \left[ \sigma \sqrt{
    d\log\left(1+\frac{TR^2}{\lambda d}\right) +2\log\left(\frac{1}{\delta}\right) } +\sqrt{\lambda}S \right].
\end{align*}
Finally, recalling that $B=RS$, taking $\delta=T^{-2}$, and using $\sqrt{\lambda}S\leq\sigma\sqrt d$, we
obtain
\[
    \operatorname{Reg}_T(\pi) = \widetilde O\left(dRS(M+1)+\sigma d\sqrt T\right).
\]
Since all the bounds hold on the same event $\mathcal E$, the conclusion holds with probability at least
$1-\delta$. This proves the claimed regret bound.
\end{proof}

\begin{proof}[Proof of Theorem~\ref{th_lower_bound}]
First, we consider the case when $M\geq 1$, and establish the memory-dependent lower bound. Let
\(e_1,\ldots,e_d\) denote the standard basis vectors in \(\mathbb R^d\), and define the memory map
\[
    A(s) =
    \begin{cases}
    R e_i e_i^\top, & s=e_i^M\coloneqq(e_i,\ldots,e_i) \text{ for some }i\in[d],\\
    0, & \text{otherwise}.
    \end{cases}
\]
Clearly, \(\|A(s)\|_{\mathrm{op}}\leq R\) for every \(s\). For each \(i\in[d]\), let \( \theta^{(i)}=Se_i,\)
and take the action set to be \(\mathcal A=\mathbb B_2^d.\) All environments, including the null environment,
share the initial memory $s_1=(0_d,\ldots,0_d)\in\mathcal A^M$, which is also used by the comparator. For the
first $M$ rounds, the memory therefore contains at least one zero action, so $s_t\neq e_i^M$ for every
$i\in[d]$ and all mean rewards are zero. The expected reward under parameter \(\theta^{(i)}\) is then
\[
    r_{\theta^{(i)}}(s_t,a_t) =
    \begin{cases}
    RS(a_t)_i, & s_t=e_i^M,\\
    0, & \text{otherwise}.
    \end{cases}
\]
For the environment \(\theta^{(i)}\), an optimal comparator repeatedly selects \(e_i\). Since the first \(M\)
rounds yield zero reward, whereas every subsequent round yields reward \(RS\), we have
\[
    \operatorname{OPT}_T\left(\theta^{(i)}\right) = RS(T-M).
\]

Let \(\operatorname{Reg}_T^{(i)}(\pi)\) denote the regret of policy \(\pi\) under parameter \(\theta^{(i)}\),
and let \(\theta^{(0)}=0\) denote the null environment. We couple all environments by using the same
realization of the learner's internal randomness and the same noise sequence. Consider the trajectory
generated by \(\pi\) in the null environment, and define
\[
    \tau_i = \inf \left\{ t\in[T]: s_t=e_i^M \right\}.
\]
If the set is empty, we set \(\tau_i=\infty\). We further define the truncated hitting time
\[
    \overline\tau_i =
    \begin{cases}
    \tau_i, & \tau_i\leq T,\\
    T+1, & \tau_i=\infty.
    \end{cases}
\]

Before round \(\overline\tau_i\), all conditional reward means are zero under both \(\theta^{(0)}\) and
\(\theta^{(i)}\). Hence, under the above coupling, the two trajectories coincide up to the action selected at
round \(\overline\tau_i\). Since the expected reward in each subsequent round is at most \(RS\), it follows
pathwise that
\[
    \sum_{t=1}^{T} r_{\theta^{(i)}}(s_t^\pi,a_t^\pi) \leq RS\left(T-\overline\tau_i+1\right).
\]
Consequently,
\[
    \operatorname{Reg}_T^{(i)}(\pi) \geq RS\left(\overline\tau_i-(M+1)\right),
\]
and therefore
\[
    \mathbb E\left[ \operatorname{Reg}_T^{(i)}(\pi) \right] \geq RS\, \mathbb E\left[ \overline\tau_i-(M+1)
    \right].
\]

Now consider two distinct indices \(i,j\in[d]\). If \(\tau_i<\tau_j<\infty\), then the \(M\) actions
immediately preceding \(\tau_i\) are all equal to \(e_i\), whereas those immediately preceding \(\tau_j\) are
all equal to \(e_j\). Since \(i\neq j\), these two blocks cannot overlap, and thus \( \tau_j\geq\tau_i+M.\)
Let \( \overline\tau_{(1)} \leq\cdots\leq \overline\tau_{(d)} \) denote the order statistics of
\(\overline\tau_1,\ldots,\overline\tau_d\). Since \(\overline\tau_{(1)}\geq M+1\), the preceding separation
property implies
\[
    \overline\tau_{(q)}-(M+1) \geq \min\left\{ (q-1)M,\, T-M \right\}
\]
for every \(q\in[d]\). Therefore,
\begin{align}
    \label{align_lower_bounding_average_d}
    \frac{1}{d} \sum_{i=1}^{d} \mathbb E\left[ \operatorname{Reg}_T^{(i)}(\pi) \right] &\geq \frac{RS}{d}
    \sum_{q=1}^{d} \min\left\{ (q-1)M,\, T-M \right\} \notag\\
    &= \frac{RS}{d} \sum_{q=0}^{d-1} \min\left\{ qM,\, T-M \right\}.
\end{align}
Under the assumption \(T\geq dM\), we have \(qM\leq T-M\) for every \(q\in\{0,\ldots,d-1\}\). Hence,
\[
    \frac{1}{d} \sum_{i=1}^{d} \mathbb E\left[ \operatorname{Reg}_T^{(i)}(\pi) \right]
    \geq \frac{(d-1)RSM}{2}.
\]
It follows that
\begin{align}
    \label{align_lower_bounding_sup}
    \sup_{\theta^\star,A} \mathbb E\left[ \operatorname{Reg}_T(\pi) \right] \geq \frac{1}{d} \sum_{i=1}^{d}
    \mathbb E\left[ \operatorname{Reg}_T^{(i)}(\pi) \right] \geq \frac{(d-1)RSM}{2}.
\end{align}

For \(M=0\), take \(A=RI_d\). The problem then reduces to a standard linear bandit over \(\mathbb B_2^d\),
with parameter \(R\theta^\star\) satisfying \(\|R\theta^\star\|_2\leq RS\). Applying Proposition~2
of~\citet{zhang2025linear} with their scale parameter set to \(RS/4\), we obtain
\[
    \sup_{\theta^\star,A} \mathbb E\left[\operatorname{Reg}_T(\pi)\right] \geq cdRS
\]
for some universal constant \(c>0\), provided that \(T\geq d\). Combining the two cases, we then obtain
\begin{align}
    \label{align_lower_bounding_sup_mplusone}
    \sup_{\theta^\star,A} \mathbb E\left[\operatorname{Reg}_T(\pi)\right] \geq \Omega\left (dRS(M+1) \right),
\end{align}
for every \(M\geq0\). We next establish the standard linear-bandit lower bound. Let the noise variables be
independent Gaussian random variables,
\[
    \varepsilon_t \sim \mathcal N(0,\sigma^2),
\]
and consider the memory-independent map \( A(s)=RI_d \) for every $s\in\mathcal A^M$. This reduces the problem
to a stochastic linear bandit over the unit Euclidean ball with effective parameter \(R\theta^\star\).
Applying Theorem~24.2 of~\citet{lattimore2020bandit} after rescaling the observations by \(\sigma\), there
exists a parameter \(\theta^\star\) satisfying \( \|\theta^\star\|_2^2 = \frac{\sigma^2d^2}{48TR^2} \) such
that
\[
    \mathbb E\left[ \operatorname{Reg}_T(\pi) \right] \geq \frac{\sigma d\sqrt T}{16\sqrt 3}.
\]
The parameter is admissible whenever \(RS\geq\sigma d/\sqrt{48T}.\) Therefore,
\begin{align}
    \label{align_lower_bounding_sup2}
    \sup_{\theta^\star,A} \mathbb E\left[ \operatorname{Reg}_T(\pi) \right]
    \geq \frac{\sigma d\sqrt T}{16\sqrt 3}.
\end{align}

Combining Equation~(\ref{align_lower_bounding_sup_mplusone}), we conclude that, for every policy \(\pi\),
\begin{align*}
    \sup_{\theta^\star,A} \mathbb E\left[ \operatorname{Reg}_T(\pi) \right] &\geq \Omega\left( dRS(M+1) +
    \sigma d\sqrt{T} \right).
\end{align*}
Taking the infimum over all policies proves Theorem~\ref{th_lower_bound}.
\end{proof}

\begin{proof}[Proof of Theorem~\ref{thm:glbm_upper}]
We work on the event $\mathcal E$ from Lemma~\ref{lem:glbm_likelihood_confidence} which happens with
probability at least $1-\delta$. According to Lemma~\ref{lem:glbm_episodes_regret}, it remains to bound the
reward estimation error.

Consider first a round $t$ with $w_t>0$, and let $k=k(t)$ denote its episode. The definition of $w_t$ and
Equation (\ref{eq:glbm_width_curvature}) in Appendix~\ref{app_lemma} imply that \(
\mu'(\phi_t^\top\theta)\le\mathrm ew_t \) for every $\theta\in\mathcal C_k$. Since $\theta^\star$ and the
maximizer defining $u_k(s_t,a_t)$ belong to the convex set $\mathcal C_k$, the mean value theorem and
Lemma~\ref{lem:glbm_prediction_widths} give that
\begin{align*}
    u_k(s_t,a_t)-r(s_t,a_t) &\le\mathrm e\,w_t\omega_k(\phi_t)\le\sqrt2\,\mathrm e\,\beta_2
    w_t\|\phi_t\|_{H_{t-1}^{-1}}.
\end{align*}
Further denote that $\ell_t=w_t\|\phi_t\|_{H_{t-1}^{-1}}^2$. Applying the elliptical potential lemma (Lemma 11
in~\citet{AbbasiYadkori2011ImprovedAF}) to the vectors $\sqrt{w_t}\phi_t$ yields
\[
    \sum_{t=1}^T\min\{1,\ell_t\} \le2\log\left(\frac{\det H_T}{\det H_0}\right) \le2\Gamma_2.
\]
Therefore, by the Cauchy--Schwarz inequality and $\sum_{t=1}^T w_t\le TU_\mu$,
\begin{align*}
    \sum_{\substack{t:w_t>0\\ \ell_t\le1}} \left[u_{k(t)}(s_t,a_t)-r(s_t,a_t)\right]
    &\le \sqrt2\,\mathrm e\,\beta_2 \sqrt{ \left(\sum_{t=1}^T w_t\right) \left(\sum_{\substack{t:
    \ell_t\le1}}\ell_t\right) }\\
    &\le2\mathrm e\,\beta_2\sqrt{U_\mu T\Gamma_2}.
\end{align*}
The determinant identity $\sum_{t=1}^T\log(1+\ell_t) =\log(\det H_T/\det H_0)$ also implies that at most
$\Gamma_2/\log2$ rounds have $\ell_t>1$. Bounding their estimation error by $\Delta_\mu$ gives
\begin{equation}\label{eq:glbm_positive_weight_error}
    \sum_{\substack{t:w_t>0}} \left[u_{k(t)}(s_t,a_t)-r(s_t,a_t)\right]
    \le 2\mathrm e\,\beta_2\sqrt{U_\mu T\Gamma_2} +\frac{\Delta_\mu\Gamma_2}{\log2}.
\end{equation}

We next consider rounds with $w_t=0$. There are no such rounds if $\rho=0$. Otherwise,
Lemma~\ref{lem:glbm_prediction_widths} gives
\[
    1<\rho\omega_{k(t)}(\phi_t) \le\sqrt2\rho\beta_1\|\phi_t\|_{V_{t-1}^{-1}} \text{ and }
    \|\phi_t\|_{V_{t-1}^{-1}}^2 >\frac{1}{2\rho^2\beta_1^2}.
\]
Using the determinant identity for $V_t$,
\begin{align*}
    \left|\{t:w_t=0\}\right| \le \frac{ \displaystyle\sum_{t=1}^T
    \log\left(1+\|\phi_t\|_{V_{t-1}^{-1}}^2\right)
    }{\log\left(1+1/(2\rho^2\beta_1^2)\right)}\le \frac{\Gamma_1} {\log\left(1+1/(2\rho^2\beta_1^2)\right)}
    \le(1+2\rho^2\beta_1^2)\Gamma_1,
\end{align*}
where the last inequality follows from $\log(1+x)\ge x/(1+x)$ for $x\ge0$. Note that the final bound also
holds when $\rho=0$. Hence,
\begin{equation}\label{eq:glbm_zero_weight_error}
    \sum_{\substack{t:\,w_t=0}} \left[u_{k(t)}(s_t,a_t)-r(s_t,a_t)\right]
    \le \Delta_\mu(1+2\rho^2\beta_1^2)\Gamma_1.
\end{equation}

Combining Lemma~\ref{lem:glbm_episodes_regret} with Equation~(\ref{eq:glbm_positive_weight_error}) and
(\ref{eq:glbm_zero_weight_error}), we obtain
\begin{align}\label{eq:glbm_lemmas_finite_upper}
    & \operatorname{Reg}_T(\textnormal{\galgo{}})\notag
    \\\le&
    \left(3+\frac{\Gamma_1+\Gamma_2}{\log2}\right)M\Delta_\mu +2\mathrm e\,\beta_2\sqrt{U_\mu T\Gamma_2}+
    \frac{\Delta_\mu\Gamma_2}{\log2} +\Delta_\mu(1+2\rho^2\beta_1^2)\Gamma_1
    \\ = & \widetilde O\left( d(M+1)\Delta_\mu +d\sqrt{vU_\mu T} +d^2\kappa v\rho^2\Delta_\mu \right),
\end{align}
which completes the proof.
\end{proof}

\begin{proof}[Proof of Theorem~\ref{thm:glbm_lower}]
By Lemma~\ref{lem:glbm_memory_lower}, there exists an admissible action set and a common initial memory state
for which every policy incurs expected regret at least \(c_1\frac{d(M+1)\Delta_\mu}{1+\log(1+\rho RS)}\) for
some universal constant $c_1>0$. On the other hand, Lemma~\ref{lem:glbm_statistical_lower} gives an admissible
action set for which every policy incurs expected regret at least $c_2d\sqrt{vU_\mu T}$, with a universal
constant $c_2>0$.

For the fixed problem parameters, choose the construction with the larger of the two lower bounds, and we
obtain that
\begin{align*}
    \inf_\pi\sup_{\theta^\star,A} \mathbb E\left[\mathrm{Reg}_T(\pi)\right] &\geq \max\left\{
    c_1\frac{d(M+1)\Delta_\mu}{1+\log(1+\rho RS)}, \ c_2d\sqrt{vU_\mu T} \right\} \\
    &\geq \frac{\min\{c_1,c_2\}}2 \left( \frac{d(M+1)\Delta_\mu}{1+\log(1+\rho RS)} +d\sqrt{vU_\mu T} \right),
\end{align*}
which proves the theorem.
\end{proof}

\begin{proof}[Proof of Theorem~\ref{th_howard_regret}]
We first control the error introduced by Algorithm~\ref{alg:discounted_policy_iteration}. Fix an episode $k$
and use $\widetilde J_i$ to denote the value function computed at iteration $i$ within this episode. Recall
that $J_k$ is the optimal discounted value function under the frozen optimistic reward $u_k$. Since
$\widetilde J_i$ is the value of policy $\pi_i$ under the same reward function, it holds that
\(\widetilde J_i(s)\leq J_k(s)\) for every memory state $s\in\mathcal S$. By the contraction property of
discounted Howard policy iteration (Lemma 2 in~\citet{scherrer2013improved}),
\[
    \left\|J_k-\widetilde J_i\right\|_\infty \leq \gamma^i \left\|J_k-\widetilde J_0\right\|_\infty.
\]
Since $u_k(s,a)\in[-RS,RS]$, both $J_k$ and $\widetilde J_0$ take values in $[-RS/(1-\gamma),RS/(1-\gamma)]$.
Therefore,
\begin{align}
    \left\|J_k-\widetilde J_i\right\|_\infty \leq \frac{2RS\gamma^i}{1-\gamma}.
    \label{align_howard_value_error}
\end{align}

Within episode $k$, the implemented algorithm selects
\[
    a_t^\pi \in \arg\max_{a\in\mathcal A} \left\{ u_k(s_t^\pi,a) + \gamma\widetilde J_{I_{\max}}(f(s_t^\pi,a))
    \right\}.
\]
By the Bellman optimality equation for $J_k$ and Equation~(\ref{align_howard_value_error}),
\begin{align*}
    J_k(s_t^\pi) &= \max_{a\in\mathcal A} \left\{ u_k(s_t^\pi,a) + \gamma J_k(f(s_t^\pi,a)) \right\} \\
    &\leq \max_{a\in\mathcal A} \left\{ u_k(s_t^\pi,a) + \gamma\widetilde J_{I_{\max}}(f(s_t^\pi,a)) \right\}
    + \frac{2RS\gamma^{I_{\max}+1}}{1-\gamma} \\
    &= u_k(s_t^\pi,a_t^\pi) + \gamma\widetilde J_{I_{\max}}(s_{t+1}^\pi) +
    \frac{2RS\gamma^{I_{\max}+1}}{1-\gamma} \\
    &\leq u_k(s_t^\pi,a_t^\pi) + \gamma J_k(s_{t+1}^\pi) + \frac{2RS\gamma^{I_{\max}+1}}{1-\gamma},
\end{align*}
where the last inequality uses $\widetilde J_{I_{\max}}\leq J_k$. Consequently,
\begin{align}
    0 \leq J_k(s_t^\pi) - u_k(s_t^\pi,a_t^\pi) - \gamma J_k(s_{t+1}^\pi)
    \leq \frac{2RS\gamma^{I_{\max}+1}}{1-\gamma}.
    \label{align_howard_bellman_error}
\end{align}
The first inequality follows from Bellman optimality. Thus, approximate planning introduces only the error on
the right-hand side into the Bellman equality used in the proof of Lemma~\ref{lemma:regret_decomposition}.

We now work on the confidence event $\mathcal E$ defined in the proof of
Lemma~\ref{lemma:regret_decomposition}. The same concentration argument applies to the trajectory of the
implemented algorithm, so $\mathbb P(\mathcal E)\geq1-\delta$. Let $E_k$, $L_k$, and
$c_k=\inf_{s\in\mathcal S}J_k(s)$ be as defined in that proof. By Lemma~\ref{lemma:value_span},
\[
    J_k(s)-c_k\in[0,2RSM].
\]
holds for every $s\in\mathcal S$. Centering Equation~(\ref{align_howard_bellman_error}) by $c_k$ and summing
over $t\in E_k$ yields
\begin{align*}
    &L_k(1-\gamma)c_k - \sum_{t\in E_k}u_k(s_t^\pi,a_t^\pi) \\
    \leq& -\left(J_k(s_{\tau_k}^\pi)-c_k\right) - (1-\gamma) \sum_{t=\tau_k+1}^{\tau_{k+1}-1}
    \left(J_k(s_t^\pi)-c_k\right)+ \gamma \left(J_k(s_{\tau_{k+1}}^\pi)-c_k\right) +
    \frac{2RSL_k\gamma^{I_{\max}+1}}{1-\gamma} \\
    \leq& 2RSM + \frac{2RSL_k\gamma^{I_{\max}+1}}{1-\gamma}.
\end{align*}

On $\mathcal E$, optimism still gives $u_k(s,a)\geq r(s,a)$, and hence $J_k(s)\geq J_{\theta^\star}(s)$ and
$c_k\geq c_{\theta^\star}$. Therefore,
\begin{align*}
    L_k(1-\gamma)c_{\theta^\star} - \sum_{t\in E_k}r(s_t^\pi,a_t^\pi) \leq 2RSM +
    \frac{2RSL_k\gamma^{I_{\max}+1}}{1-\gamma} + \sum_{t\in E_k} \left[ u_k(s_t^\pi,a_t^\pi) -
    r(s_t^\pi,a_t^\pi) \right].
\end{align*}
Combining this inequality with Equation~(\ref{align_decomposition_comparator}), summing over episodes, and
recalling that $\sum_{k=1}^{K}L_k=T$ and $T(1-\gamma)=1$, we obtain
\begin{align}
    \operatorname{Reg}_T(\pi) \leq\;& 2(K+2)RSM + \sum_{t=1}^{T} \left[ u_{k(t)}(s_t^\pi,a_t^\pi) -
    r(s_t^\pi,a_t^\pi) \right]+ \frac{2RST\gamma^{I_{\max}+1}}{1-\gamma}.
    \label{align_howard_regret_decomposition}
\end{align}

The determinant and confidence width arguments in the proof of Theorem~\ref{th_upper_bound} remain unchanged.
Using $\overline\Gamma_T$ and $\beta_T$ defined there, Equation~(\ref{align_howard_regret_decomposition})
gives
\begin{align*}
    \operatorname{Reg}_T(\pi) \leq (4\overline\Gamma_T+6)RSM + 4RS\overline\Gamma_T +
    4\beta_T\sqrt{T\overline\Gamma_T}+ \frac{2RST\gamma^{I_{\max}+1}}{1-\gamma}.
\end{align*}
Substituting the definitions of $\overline\Gamma_T$ and $\beta_T$, and using $\gamma=1-1/T$, proves the first
bound.

Finally, if $I_{\max}\geq\lceil2T\log T\rceil$, then
\[
    2RST^2 \left(1-\frac1T\right)^{I_{\max}+1} \leq 2RST^2 \exp\left(-\frac{I_{\max}+1}{T}\right) \leq 2RS.
\]
This additional term is absorbed into $\widetilde O(dRS(M+1))$. The claimed regret order therefore follows
from Theorem~\ref{th_upper_bound}. All bounds hold on $\mathcal E$, completing the proof.
\end{proof}

\section{Technical Lemmas}\label{app_lemma}

In this appendix, we introduce the technical lemmas and their proofs used in our paper.

\begin{lemma}[Likelihood confidence for Algorithm~\ref{al2}]
\label{lem:glbm_likelihood_confidence}
Under the generalized reward model, the event
\[
    \mathcal E:=\bigcap_{t=0}^T
    \left\{\mathcal L_t(\theta^\star)-\min_{\theta\in\Theta}\mathcal L_t(\theta)\le vc_T(\delta)\right\}
\]
satisfies $\mathbb P(\mathcal E)\ge1-\delta$. On $\mathcal E$, the confidence sets in
Equation~(\ref{eq:glbm_confidence_set}) contain $\theta^\star$ at every episode, and
\(0\le u_k(s,a)-r(s,a)\le\Delta_\mu\) for every $k$ and $(s,a)\in\mathcal S\times\mathcal A$.
\end{lemma}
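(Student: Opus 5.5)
The proof has three parts: a likelihood-ratio concentration bound for $\theta^\star$, an induction over the nested confidence sets, and an optimism argument for $u_k$.

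\textbf{Concentration.} The core claim is a time-uniform likelihood-ratio bound for the canonical exponential family. For each fixed $t$, I would control $\mathcal L_t(\theta^\star)-\min_{\theta\in\Theta}\mathcal L_t(\theta)$ via the unified GLM confidence sequence of \citet{lee2024glm}. That argument runs as follows.
\begin{itemize}
\item Fix any $\theta\in\Theta$. The process
\[
\exp\Big(\tfrac{1}{v}\big[\mathcal L_t(\theta^\star)-\mathcal L_t(\theta)\big]\Big),
\]
the likelihood ratio of $\theta$ against $\theta^\star$, is a nonnegative martingale with mean one under the true model. This uses that $b$ is the log-partition function, so $\mathbb E\exp\big((y z-b(z))/v-(y z^\star-b(z^\star))/v\big)=1$.
\item Average this martingale over a prior, uniform on a small ball around the minimizer. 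Alternatively, take a covering of $\Theta$ at resolution $\epsilon\asymp 1/T$ and union bound.
\item Apply Ville's inequality for uniformity in $t\le T$. This gives
\[
\mathcal L_t(\theta^\star)-\mathcal L_t(\theta)\le v\log(2/\delta)+v\,d\log(\text{covering ratio})
\]
for all covered $\theta$ simultaneously.
\item Transfer from a grid point $\theta'$ to the exact minimizer by a Lipschitz bound on $\mathcal L_t$: $|\mathcal L_t(\theta)-\mathcal L_t(\theta')|\le \sum_j (|\mu(\cdot)|+|y_j|)\,R\,\|\theta-\theta'\|$. The $y_j$ are unbounded, so I would control $\sum_j|y_j-\mu(z_j)|$ by Markov's inequality using $\operatorname{Var}\le vU_\mu$. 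That yields the term $T\sqrt{vU_\mu}/\delta$, and the mean part gives $U_\mu RS$; this matches the shape of $c_T(\delta)$ in \eqref{eq:glbm_confidence_threshold}.
\end{itemize}
Splitting $\delta$ between the martingale event and the Markov event produces $\log(2/\delta)$.

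\textbf{Containment.} On $\mathcal E$, I would show $\theta^\star\in\mathcal C_k$ for every $k$ by induction. For the base case, $\theta^\star\in\mathcal C_0=\Theta$. For the inductive step, $\widehat\theta_k$ minimizes $\mathcal L_{\tau_k-1}$ over $\Theta$, so the event at $t=\tau_k-1$ says exactly that $\theta^\star$ lies in the new likelihood set. It also lies in $\mathcal C_{k-1}$ by hypothesis, hence in the intersection. The intersection is therefore nonempty, and the fallback rule $\mathcal C_k=\mathcal C_{k-1}$ is never triggered on $\mathcal E$.

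\textbf{Optimism and range.} Since $\theta^\star\in\mathcal C_k$, we get $u_k(s,a)=\max_{\theta\in\mathcal C_k}\mu(\phi^\top\theta)\ge \mu(\phi^\top\theta^\star)=r(s,a)$. Every $\theta\in\mathcal C_k\subseteq\Theta$ satisfies $|\phi^\top\theta|\le RS$, and $\mu$ is increasing. Both $u_k$ and $r$ therefore lie in $[\mu(-RS),\mu(RS)]$, so their difference is at most $\Delta_\mu$.

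The main obstacle is the first part: making the covering or discretization step rigorous with unbounded responses, so that the exact form of $c_T(\delta)$ comes out. If this proves delicate, I would instead cite \citet{lee2024glm} directly and verify only the Lipschitz constant. The containment and optimism steps are routine.
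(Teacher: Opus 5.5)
Your proposal is correct and follows the paper's proof. The paper takes exactly your fallback route: it applies Theorem~3.1 of \citet{lee2024glm} to $\mathcal L_t/v$ (after rescaling $\phi_t\mapsto\phi_t/R$, $\theta\mapsto R\theta$), bounds the gradient over $\Theta$ using the same Markov argument on $\sum_t|y_t-\mu(\phi_t^\top\theta^\star)|$ with a $\delta/2+\delta/2$ split, and then uses the same induction for $\theta^\star\in\mathcal C_k$ and the same optimism and range argument.

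If you rederive the concentration step yourself, there are two small precisions to fix. First, in the mixture version the prior must be data-independent (uniform on $\Theta$); the small ball around the minimizer should appear only when lower-bounding the mixture. Second, the Lipschitz bound should be centered, $|y_j-\mu(\phi_j^\top\theta)|\le|y_j-\mu(\phi_j^\top\theta^\star)|+2U_\mu RS$, rather than split into $|\mu(\cdot)|+|y_j|$. With that centering, a net of size $(1+2S/\epsilon)^d$ and $\epsilon$ chosen so that the transfer cost equals $vd$ reproduces $c_T(\delta)$ exactly.
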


\begin{proof}
Firstly, we want to apply Theorem 3.1 in~\citet{lee2024glm} to the normalized negative log-likelihood
$\mathcal L_t/v$.

To use this theorem, we first note that the unit-norm condition is satisfied by rescaling $\phi_t$ to
$\phi_t/R$ and $\theta$ to $R\theta$. Hence, we only need to verify that $c_T(\delta)$ is a sufficient
threshold. The conditional variance formula and the Cauchy--Schwarz inequality give
\(\mathbb E\left[\left|y_t-\mu(\phi_t^\top\theta^\star)\right|\right]\le\sqrt{vU_\mu}.\) Hence, by Markov's
inequality, with probability at least $1-\delta/2$,
\(\sum_{t=1}^T\left|y_t-\mu(\phi_t^\top\theta^\star)\right|\le\frac{2T\sqrt{vU_\mu}}{\delta}\) holds. On this
event, for every $t\le T$,
\begin{align*}
    \sup_{\theta\in\Theta}\left\|\nabla\left(\mathcal L_t/v\right)(\theta)\right\|_2&\le\frac{R}{v}\left(\sum_{j=1}^t\left|y_j-\mu(\phi_j^\top\theta^\star)\right|+2tU_\mu RS
    \right)
    \\&\le\frac{2RT}{v}\left(U_\mu RS+\frac{\sqrt{vU_\mu}}{\delta}\right)=:G_T.
\end{align*}
Applying Theorem 3.1 in~\citet{lee2024glm} with confidence level $\delta/2$ and intersecting the two events
yields, for all $t\le T$, with probability at least $1-\delta$,
\begin{align*}
    \frac{\mathcal L_t(\theta^\star)-\min_{\theta\in\Theta}\mathcal L_t(\theta)}{v}
    &\le\log\left(\frac{2}{\delta}\right)+d\log\left(\max\left\{\mathrm e,\frac{2\mathrm e
    SG_T}{d}\right\}\right)\\
    &\le\log\left(\frac{2}{\delta}\right)+d+d\log\left(1+\frac{2SG_T}{d}\right)=c_T(\delta).
\end{align*}
This proves $\mathbb P(\mathcal E)\ge1-\delta$. Since $\theta^\star\in\mathcal C_0$, the confidence set update
then gives $\theta^\star\in\mathcal C_k$ by induction, and thus no intersection is empty. The remaining claim
follows from the definition of $u_k$ and the reward range $[\mu(-RS),\mu(RS)]$.
\end{proof}

\begin{lemma}[Prediction widths]
\label{lem:glbm_prediction_widths}
Define \(\beta_1^2=8\kappa vc_T(\delta)+4\lambda_1S^2,\) and
\(\beta_2^2=8\mathrm e\,vc_T(\delta)+4\lambda_2S^2.\) Then on $\mathcal E$, for every round $t$ in episode
$k$, it holds that
\[\omega_k(\phi_t)\le\sqrt{2}\beta_1\|\phi_t\|_{V_{t-1}^{-1}} \text{ and
}\omega_k(\phi_t)\le\sqrt{2}\beta_2\|\phi_t\|_{H_{t-1}^{-1}}.
\]
\end{lemma}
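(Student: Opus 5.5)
The plan is to bound the diameter of $\mathcal C_k$ in the frozen matrices $\overline V_k$ and $\overline H_k$ using the likelihood threshold. The bound then transfers to $V_{t-1}$ and $H_{t-1}$ through the determinant-doubling rule, exactly as in the proof of Theorem~\ref{th_upper_bound}.

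\emph{Step 1: width via a diameter bound.} Since $\omega_k(\phi)=\max_{\theta,\theta'\in\mathcal C_k}\phi^\top(\theta-\theta')$, Cauchy--Schwarz gives
\[
\omega_k(\phi)\le \|\phi\|_{\overline V_k^{-1}}\sup_{\theta,\theta'\in\mathcal C_k}\|\theta-\theta'\|_{\overline V_k},
\]
and the same holds with $\overline H_k$. It therefore suffices to show the two diameter bounds
\[
\|\theta-\theta'\|_{\overline V_k}^2\le\beta_1^2,\qquad \|\theta-\theta'\|_{\overline H_k}^2\le\beta_2^2\qquad\text{for all }\theta,\theta'\in\mathcal C_k .
\]
Given these, the argument in the proof of Theorem~\ref{th_upper_bound} applies within episode $k$: the stopping rule gives $V_{t-1}\preceq 2\overline V_k$ and $H_{t-1}\preceq 2\overline H_k$. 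Hence $\|\phi_t\|_{\overline V_k^{-1}}\le\sqrt2\|\phi_t\|_{V_{t-1}^{-1}}$ and likewise for $H$, which yields the stated factor $\sqrt2$.

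\emph{Step 2: midpoint curvature inequality.} Fix $\theta,\theta'\in\mathcal C_k$ and let $m=(\theta+\theta')/2$. Note that $\mathcal C_k$ is convex, being an intersection of $\Theta$ with sublevel sets of the convex functions $\mathcal L_{\tau_j-1}$. Hence the whole segment $[\theta,\theta']$ lies in $\mathcal C_k$. By the definition of $\mathcal C_k$ and since $\widehat\theta_k$ minimizes $\mathcal L_{\tau_k-1}$ over $\Theta\ni m$,
\[
\mathcal L_{\tau_k-1}(\theta)+\mathcal L_{\tau_k-1}(\theta')-2\mathcal L_{\tau_k-1}(m)\le \mathcal L_{\tau_k-1}(\theta)+\mathcal L_{\tau_k-1}(\theta')-2\mathcal L_{\tau_k-1}(\widehat\theta_k)\le 2vc_T(\delta).
\]
The linear terms $-y_j\phi_j^\top\theta$ cancel in this second difference. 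Taylor's theorem along the segment, with $b''=\mu'$, then writes the left-hand side as $\sum_{j<\tau_k}\alpha_j(\phi_j^\top(\theta-\theta'))^2$, where $\alpha_j$ is a weighted average of $\mu'$ over points $\phi_j^\top\tilde\theta$ with $\tilde\theta\in[\theta,\theta']\subseteq\Theta$. The total Taylor weight is $1/4$, so $\alpha_j\ge\frac14\inf_{\tilde\theta\in[\theta,\theta']}\mu'(\phi_j^\top\tilde\theta)$.

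\emph{Step 3a: the $V$ bound.} Using $\mu'\ge1/\kappa$ on $[-RS,RS]$ gives $\sum_j(\phi_j^\top(\theta-\theta'))^2\le 8\kappa vc_T(\delta)$. Adding $\lambda_1\|\theta-\theta'\|_2^2\le4\lambda_1S^2$ gives $\|\theta-\theta'\|_{\overline V_k}^2\le\beta_1^2$.

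\emph{Step 3b: the $H$ bound, which is the main obstacle.} Here we must show $\mu'(\phi_j^\top\tilde\theta)\ge w_j/\mathrm e$ for every past round $j<\tau_k$ and every $\tilde\theta$ on the segment. If $w_j=0$ there is nothing to prove. Otherwise $j$ lies in an earlier episode $k(j)\le k$ with $\rho\,\omega_{k(j)}(\phi_j)\le1$. Nestedness gives $\mathcal C_k\subseteq\mathcal C_{k(j)}$, so both $\tilde\theta$ and $\overline\theta_{k(j)}$ lie in $\mathcal C_{k(j)}$, and therefore $|\phi_j^\top\tilde\theta-\phi_j^\top\overline\theta_{k(j)}|\le\omega_{k(j)}(\phi_j)\le1/\rho$. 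Integrating $|(\log\mu')'|=|\mu''/\mu'|\le\rho$ over this interval, which stays in $[-RS,RS]$ because $\Theta$ is the $S$-ball, gives
\[
\mu'(\phi_j^\top\tilde\theta)\ge \mathrm e^{-1}\mu'(\phi_j^\top\overline\theta_{k(j)})=w_j/\mathrm e .
\]
I expect this to be the delicate step. It relies on two facts: the weights were frozen using the confidence set of their own episode, and the confidence sets are nested, so the curvature control remains valid for all later episodes. This is also the same mechanism behind Equation~(\ref{eq:glbm_width_curvature}). With this curvature bound in hand, $\sum_j w_j(\phi_j^\top(\theta-\theta'))^2\le 8\mathrm e\,vc_T(\delta)$. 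Adding $\lambda_2\|\theta-\theta'\|_2^2\le4\lambda_2S^2$ gives $\|\theta-\theta'\|_{\overline H_k}^2\le\beta_2^2$, completing the proof on $\mathcal E$.
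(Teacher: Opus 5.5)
Your proposal is correct and follows essentially the same route as the paper. Both arguments use the midpoint bound obtained from the likelihood sublevel set and the optimality of $\widehat\theta_k$, then a Hessian lower bound ($1/\kappa$ for $\overline V_k$; frozen weights, nestedness and the $\mathrm e^{\pm\rho|z-z'|}$ slope comparison for $\overline H_k$), and finally the determinant-doubling comparison $\overline V_k^{-1}\preceq 2V_{t-1}^{-1}$, $\overline H_k^{-1}\preceq 2H_{t-1}^{-1}$. Your explicit Taylor computation of the second difference, with total weight $1/4$, simply spells out the paper's ``midpoint inequality''; the one place $\mathcal E$ is actually needed is to ensure the intersection defining $\mathcal C_k$ is nonempty, so that $\mathcal C_k$ lies in the current sublevel set.
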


\begin{proof}
Fix an episode $k$ and any $\theta,\theta'\in\mathcal C_k$. By Equation~(\ref{eq:glbm_confidence_set}) and the
optimality of $\widehat\theta_k$,
\begin{align}
    \frac{\mathcal L_{\tau_k-1}(\theta)+\mathcal L_{\tau_k-1}(\theta')}{2}-\mathcal L_{\tau_k-1}\left(\frac{\theta+\theta'}{2}\right)\le vc_T(\delta).\label{eq:glbm_midpoint_bound}
\end{align}
For every $\zeta\in\Theta$, we have
\[
    \nabla^2\mathcal L_{\tau_k-1}(\zeta)=\sum_{j<\tau_k}\mu'(\phi_j^\top\zeta)\phi_j\phi_j^\top\succeq\frac{1}{\kappa}\sum_{j<\tau_k}\phi_j\phi_j^\top.
\]
The midpoint inequality for a convex function with this Hessian lower bound therefore gives
\[\frac{1}{8\kappa}\sum_{j<\tau_k}\left(\phi_j^\top(\theta-\theta')\right)^2\le vc_T(\delta).
\]
Since $\|\theta-\theta'\|_2\le2S$, it holds that
\begin{equation}\label{eq:glbm_V_diameter}
    \|\theta-\theta'\|_{\overline V_k}^2\le8\kappa vc_T(\delta)+4\lambda_1S^2=\beta_1^2.
\end{equation}

For the weighted matrix, integrating $|(\log\mu')'|\le\rho$ gives
\begin{equation}\label{eq:glbm_width_curvature}
    \exp(-\rho|z-z'|)\le\frac{\mu'(z)}{\mu'(z')}\le\exp(\rho|z-z'|),
\end{equation}
for $z,z'\in[-RS,RS]$. The sets $\mathcal C_k$ are convex and nested. Thus, for every $j<\tau_k$, both
$\zeta\in\mathcal C_k$ and $\overline\theta_{k(j)}$ belong to $\mathcal C_{k(j)}$. Whenever $w_j>0$,
\[
    \rho\left|\phi_j^\top(\zeta-\overline\theta_{k(j)})\right|\le\rho\omega_{k(j)}(\phi_j)\le1.
\]
By Equation~(\ref{eq:glbm_width_curvature}), $\mu'(\phi_j^\top\zeta)\ge\mathrm e^{-1}w_j$. The same inequality
holds when $w_j=0$. Therefore, along the segment joining $\theta$ and $\theta'$,
\[\nabla^2\mathcal L_{\tau_k-1}(\zeta)\succeq\mathrm e^{-1}\sum_{j<\tau_k}w_j\phi_j\phi_j^\top.
\]
Applying the midpoint inequality to \Cref{eq:glbm_midpoint_bound} again leads to
\[\frac{1}{8\mathrm e}\sum_{j<\tau_k}w_j\left(\phi_j^\top(\theta-\theta')\right)^2\le vc_T(\delta),\]
and hence
\begin{equation}\label{eq:glbm_H_diameter}
    \|\theta-\theta'\|_{\overline H_k}^2\le8\mathrm e\,vc_T(\delta)+4\lambda_2S^2=\beta_2^2.
\end{equation}
This bound uses the fixed historical weights, rather than weights recomputed from the current estimate.

At any round $t$ in episode $k$, both determinant ratios are less than two. By Lemma 12
in~\citet{AbbasiYadkori2011ImprovedAF}, it holds that
\[\overline V_k^{-1}\preceq2V_{t-1}^{-1},
    \text{ and }\overline H_k^{-1}\preceq2H_{t-1}^{-1}.
\]
Combining these inequalities with Equation~(\ref{eq:glbm_V_diameter}) and (\ref{eq:glbm_H_diameter}) and
applying the Cauchy--Schwarz inequality proves the given width bounds.
\end{proof}

\begin{lemma}[Episodes and regret]
\label{lem:glbm_episodes_regret}
Let $\gamma=1-1/T$. Further define that \(\Gamma_1=d\log\left(1+\frac{TR^2}{\lambda_1d}\right)\) and \(
\Gamma_2=d\log\left(1+\frac{TU_\mu R^2}{\lambda_2d}\right).\) Then the number of episodes $K$ satisfies
\(K\le1+(\Gamma_1+\Gamma_2)/\log2.\) Moreover, conditioned on $\mathcal E$, it holds that
\[\operatorname{Reg}_T\le(K+2)M\Delta_\mu+\sum_{t=1}^T\left[u_{k(t)}(s_t,a_t)-r(s_t,a_t)\right].\]
\end{lemma}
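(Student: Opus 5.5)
The proof has two parts: counting episodes, and a Bellman-centric regret decomposition that mirrors the proof of Lemma~\ref{lemma:regret_decomposition} with rewards now taking values in $[\mu(-RS),\mu(RS)]$.

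\textbf{Episode count.} Every episode except the last ends because at least one of $\det(V)$ or $\det(H)$ has doubled relative to its frozen reference. So if $K-1$ episodes have ended, then
\[
2^{n_V}\le \det(V_T)/\det(\lambda_1 I_d), \qquad 2^{n_H}\le \det(H_T)/\det(\lambda_2 I_d), \qquad n_V+n_H\ge K-1 .
\]
Both determinants only grow, so each doubling can be charged to the matrix that triggered it. The bounds $\|\phi\|_2\le R$ and $w_t\le U_\mu$, combined with AM--GM on the eigenvalues, give
\[
\log\frac{\det V_T}{\det \lambda_1 I_d}\le\Gamma_1, \qquad \log\frac{\det H_T}{\det \lambda_2 I_d}\le\Gamma_2 ,
\]
and hence $K\le 1+(\Gamma_1+\Gamma_2)/\log 2$.

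\textbf{Value span.} Lemma~\ref{lemma:value_span} uses only the reward range. For any reward $r_q$ taking values in an interval of length $\Delta_\mu$, the same coupling argument shows that two trajectories driven by the same action sequence differ only in their first $M$ rewards. This gives $\operatorname{span}(J_q)\le M\Delta_\mu$. There is no factor $2$ here, because the range has length $\Delta_\mu$ rather than $2RS$. Applying this with $r_q=r$ and with $r_q=u_k$ requires $u_k\in[\mu(-RS),\mu(RS)]$, which holds since $\mathcal C_k\subseteq\Theta$ and $\mu$ is increasing.

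\textbf{Comparator side.} Set $c_\star=\inf_s J_{\theta^\star}(s)$ and center the Bellman optimality equation at $c_\star$. Summing along any comparator sequence, as in the derivation of Equation~(\ref{align_decomposition_comparator}), gives
\[
\operatorname{OPT}_T\le T(1-\gamma)c_\star+M\Delta_\mu\bigl[1+(T-1)(1-\gamma)\bigr]\le c_\star+2M\Delta_\mu .
\]

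\textbf{Learner side.} On each episode $E_k$ the action-selection rule (\ref{eq:glbm_policy}) achieves Bellman equality for $J_k$. Telescoping the centered equation over $E_k$ gives
\[
L_k(1-\gamma)c_k-\sum_{t\in E_k}u_k(s_t,a_t)\le M\Delta_\mu .
\]
On $\mathcal E$, Lemma~\ref{lem:glbm_likelihood_confidence} gives $\theta^\star\in\mathcal C_k$, so $u_k\ge r$, hence $J_k\ge J_{\theta^\star}$ and $c_k\ge c_\star$. Replacing $c_k$ by $c_\star$ and adding and subtracting $u_k$ yields, for each episode,
\[
L_k(1-\gamma)c_\star-\sum_{t\in E_k}r(s_t,a_t)\le M\Delta_\mu+\sum_{t\in E_k}\bigl[u_k(s_t,a_t)-r(s_t,a_t)\bigr].
\]

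\textbf{Combining.} Sum over the $K$ episodes, use $\sum_k L_k=T$ and $T(1-\gamma)=1$, and add the comparator bound. The $c_\star$ terms cancel and we obtain
\[
\operatorname{Reg}_T\le (K+2)M\Delta_\mu+\sum_{t=1}^T\bigl[u_{k(t)}(s_t,a_t)-r(s_t,a_t)\bigr].
\]

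\textbf{Main obstacle.} The only delicate point is the episode count with two triggers. An episode may end on the $H$ trigger while $V$ has barely grown, or the reverse. The count must therefore split episode endings by which trigger fired and charge each to the determinant growth of that matrix. Because both matrices are monotone, the doublings charged to one matrix compound along that matrix's own sequence of frozen references. Everything else transfers verbatim from Lemma~\ref{lemma:regret_decomposition}.
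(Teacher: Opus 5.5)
Your proposal is correct and follows the paper's proof. The episode count charges each non-final episode to a determinant doubling of $V$ or $H$ and uses the determinant--trace bounds $\Gamma_1,\Gamma_2$, and the regret bound is the same Bellman-centric decomposition as in Lemma~\ref{lemma:regret_decomposition}: value span, centered Bellman telescoping, and optimism giving $c_k\ge c_\star$. The only cosmetic difference is that the paper obtains $(K+2)M\Delta_\mu$ by affinely rescaling the rewards from $[\mu(-RS),\mu(RS)]$ to $[-RS,RS]$ and dividing the linear-case bound $2(K+2)RSM$ by $2RS/\Delta_\mu$, whereas you re-run the argument directly with the span bound $M\Delta_\mu$.
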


\begin{proof}
Since $\|\phi_t\|_2\le R$ and $0\le w_t\le U_\mu$, the determinant--trace inequality gives
\[\log\left( \frac{\det V_T}{\det V_0} \right)\le\Gamma_1 \text{ and }\log\left( \frac{\det H_T}{\det H_0}
\right)\le\Gamma_2.\]
Note that both matrices are nondecreasing in the positive semidefinite order. Each episode before the last
ends with at least one determinant doubling, thus
\[(K-1)\log2\le\log\left( \frac{\det V_T}{\det V_0} \right)+\log \left(\frac{\det H_T}{\det H_0}
\right)\le\Gamma_1+\Gamma_2.\]

For the regret bound, we use the argument in the proof of Lemma~\ref{lemma:regret_decomposition}. Once
optimism holds, that argument uses only the reward range, the finite-memory transition and exact discounted
planning within each episode. Furthermore, both $r$ and $u_k$ lie in $[\mu(-RS),\mu(RS)]$. Applying the
positive affine transformation
\[h\rightarrow\frac{2RS}{\Delta_\mu}\left(h-\frac{\mu(RS)+\mu(-RS)}{2}\right)
\]
to the rewards maps this interval to $[-RS,RS]$ and preserves the maximizing policies. Regret and
reward-estimation errors are both multiplied by $2RS/\Delta_\mu$. Dividing the bound from the proof of
Lemma~\ref{lemma:regret_decomposition} by this factor gives the decomposition in
\Cref{lem:glbm_episodes_regret}.
\end{proof}

\begin{lemma}\label{lem:glbm_memory_lower}
For $d\geq3$ and $T\geq(d-1)\max\{M,1\}$, there exists an admissible action set such that, for every policy
$\pi$,
\[\sup_{\theta^\star}\mathbb E\left[\mathrm{Reg}_T(\pi)\right]\geq\Omega\left(\frac{d(M+1)\Delta_\mu}{1+\log(1+\rho RS)}\right).\]
\end{lemma}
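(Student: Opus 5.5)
We will follow the hitting-time construction in the proof of Theorem~\ref{th_lower_bound}, now with GLM rewards. The goal is to show that the regret gap per unsuccessful exploration block is a constant fraction of $\Delta_\mu$, up to the $1/(1+\log(1+\rho RS))$ loss.

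\textbf{Construction.} Take $\mathcal A=\mathbb B_2^d$ and the common initial memory $s_1=(0_d,\ldots,0_d)$. Reserve the first coordinate as a baseline direction, which is why $d\ge3$ is needed: at least two informative coordinates remain. Define $A(s)=Re_ie_i^\top$ if $s=e_i^M$ for some $i\in\{2,\ldots,d\}$, and $A(s)=-Re_1e_1^\top$ otherwise. Set $\theta^{(i)}=S(e_i+e_1)/\sqrt2$, or a similar vector with norm at most $S$.

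The point of the baseline term is that, outside the informative memory states, every learner action can only achieve linear predictor values near $z_{\min}$. Inside state $e_i^M$, action $e_i$ gives $z$ close to $+RS$. To keep exact control, we will instead use a simpler gap: unrewarded states give $z=0$ for every action, and rewarded rounds give $z=RS$. In that variant the per-round gap is $\mu(RS)-\mu(0)$, not $\Delta_\mu$. The main obstacle is to relate $\mu(RS)-\mu(0)$ to $\Delta_\mu=\mu(RS)-\mu(-RS)$.

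\textbf{Key lemma: relating the gaps.} Using $|(\log\mu')'|\le\rho$, namely \Cref{eq:glbm_width_curvature}, we get $\mu'(-z)\le e^{2\rho z}\mu'(z)$. This alone gives only an exponential factor. To obtain the stated logarithmic loss, I would take the stronger route below.

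Split $[-RS,RS]$ into $m\approx 1+\log(1+\rho RS)$ consecutive subintervals, with lengths growing geometrically, such that on each one $\mu'$ varies by at most a factor $e$. The increments of $\mu$ over these pieces sum to $\Delta_\mu$, so some subinterval $[z_0,z_1]$ carries at least $\Delta_\mu/m$ of the total increase.

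\textbf{Shifted construction.} Now shift the construction so that unrewarded states yield $z=z_0$ for all actions and rewarded states yield $z=z_1$. Concretely, use an extra coordinate with $\theta$-component fixed so that $\langle a,A(s)\theta\rangle=z_0$ in non-informative states regardless of $a$. This requires the memory map to ignore $a$ through a constant offset, realized by a coordinate carried in the action set, as done in the KV-cache embedding. In the informative state, action $e_i$ gives $z_1$, while other actions give at most $z_1$. The hard step is engineering this offset while keeping $\|A\|_{\mathrm{op}}\le R$ and $\|\theta\|\le S$ with the full range; losing constant factors of $RS$ is acceptable, since the monotonicity argument above handles it.

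\textbf{Hitting-time argument.} Couple all environments $\theta^{(i)}$ with the null environment, in which the mean is identically $\mu(z_0)$. The observation laws coincide until the first hitting time $\tau_i$ of state $e_i^M$, so trajectories agree pathwise up to $\overline\tau_i$. This gives
\[
\mathrm{Reg}^{(i)}\ge(\mu(z_1)-\mu(z_0))\,\big(\overline\tau_i-(M+1)\big).
\]
The separation $\tau_j\ge\tau_i+M$ and averaging over the $d-1$ informative indices, exactly as in \Cref{align_lower_bounding_average_d}, yield a lower bound of $\Omega\big((d-1)M\,\Delta_\mu/m\big)$ when $T\ge(d-1)M$.

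For $M=0$, or to obtain the $+1$ in $M+1$, the same construction with $M$ replaced by $1$ gives $\Omega(d\Delta_\mu/m)$: the learner must try each coordinate once before finding the informative one. Combining the two cases via $\max\{a,b\}\ge(a+b)/2$ gives the claimed bound.
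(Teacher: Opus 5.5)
Your outer architecture matches the paper's. You pick an interval $[z_0,z_1]$ carrying a $1/(1+\log(1+\rho RS))$ fraction of $\Delta_\mu$, and make the natural parameter action-independent through a coordinate shared by all actions. This already forces you to drop $\mathcal A=\mathbb B_2^d$: with a symmetric action set, the only action-independent natural parameter is $0$. You then set the parameter to $z_0$ everywhere except at $a_i^M$ under instance $i$, and run the coupling/separation argument; that last part is fine. The gap is that you defer exactly the step that carries the content of the lemma, and both claims you use to bridge it are false.

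\emph{The partition cannot exist.} The condition $|\mu''|\le\rho\mu'$ only gives $\mu'(z)/\mu'(z')\le e^{\rho|z-z'|}$, with equality for $\mu(z)=e^{\rho z}$. So pieces on which $\mu'$ varies by at most a factor $e$ have length at most $1/\rho$, and you need $\Omega(\rho RS)$ of them, not $O(1+\log(1+\rho RS))$. Moreover, ``some piece carries $\Delta_\mu/m$'' holds for every partition, including the trivial one $\{[-RS,RS]\}$. In your argument the partition therefore does no work, which signals that the real obstruction has not been identified.

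\emph{Losing a constant factor of the range is not acceptable.} For $\mu(z)=e^{\rho z}$, $\mu(cRS)-\mu(-cRS)$ is of order $e^{-(1-c)\rho RS}\Delta_\mu$ when $\rho RS$ is large. For $\mu(z)=-e^{-\rho z}$, almost all of $\Delta_\mu$ lies within $O(1/\rho)$ of $-RS$. You must therefore realize natural parameters essentially at either end of $[-RS,RS]$.

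The missing idea is the realizability constraint. You need $\|a\|_2\le1$, $\|A(s)\|_{\mathrm{op}}\le R$ and $\|\theta\|_2\le S$, while every instance agrees with the null environment off its own memory state. Consider the shared-coordinate construction: $e_0^\top a=\alpha$ for all actions, $\theta^{(0)}=Sce_0$, $\theta^{(i)}=S(ce_0+\eta e_i)$, and $A(s)^\top=R(\xi ce_0+\eta e_i)e_0^\top$ on $a_i^M$, $R\xi ce_0e_0^\top$ elsewhere. It gives $|z_0|=\alpha RSc^2$ and $z_1-z_0=\alpha RS\eta^2$. So $[z_0,z_1]$ is realizable only if $|z_0|+(z_1-z_0)\le RS$. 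Near $-RS$, this means the interval's length may not exceed its distance from $-RS$; in particular, no interval starting at $-RS$ is allowed.

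The paper handles this as follows:
\begin{itemize}
\item It uses only intervals $[-RS+3h/4,-RS+h]$, which satisfy the constraint.
\item It covers $[-RS+3h_N/4,RS]$ by the $N+1$ adjacent pieces $h_j=2RS(3/4)^j$.
\item It invokes the slope comparison only once, on the leftover $[-RS,-RS+3h_N/4]$ where $\rho h_N\le1$, to bound that increment by $3e$ times the increment of the last piece.
\end{itemize}
This gives $\sup_{0<h<2RS}\{\mu(-RS+h)-\mu(-RS+3h/4)\}\ge\Delta_\mu/(N+1+3e)$ with $N=O(1+\log(1+\rho RS))$, and that is where the logarithm comes from.

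Finally, for $M=0$ you cannot ``replace $M$ by $1$'', since $M$ is fixed. You need a separate stationary construction (the paper uses $A=RI_d$ and actions $\alpha(\xi ce_0+\eta e_i)$), which faces the same realizability constraint. For $M\ge1$ the $+1$ comes for free from $M\ge(M+1)/2$.
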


\begin{proof}
Let $L=RS$ and $p=d-1$. We construct $p$ parameter instances that share the same action set and the same known
memory matrix. In instance $i$, the learner receives a larger mean reward only after reaching a memory state
associated with action $i$. Until that state is reached, the observations do not distinguish instance $i$ from
a common null environment. We first choose the two reward levels used in this construction and relate their
difference to $\Delta_\mu$. Define
\[\Gamma_\mu=\sup_{0<h<2L}\left\{\mu(-L+h)-\mu\left(-L+\frac{3h}{4}\right)\right\}.\]
We will show that $\Gamma_\mu$ is at least a logarithmic fraction of $\Delta_\mu$. Since $\mu'(z)>0$ and
$|\mu''(z)|\leq\rho\mu'(z)$, for any $z,z'\in[-L,L]$,
\[
    \begin{split}
    \left|\log\mu'(z)-\log\mu'(z')\right|=\left|\int_{z'}^z\frac{\mu''(u)}{\mu'(u)}\,du\right|\leq \rho|z-z'|.
    \end{split}
\]
Exponentiating both sides gives
\begin{equation}\label{eq:glbm_lower_slope_comparison}
    e^{-\rho|z-z'|}\mu'(z')\leq \mu'(z) \leq e^{\rho|z-z'|}\mu'(z').
\end{equation}

Set $h_j=2L(3/4)^j$ for $j\geq0$, and let $N$ be the smallest nonnegative integer such that $\rho h_N\leq1$
($N=0$ when $\rho=0$). Since $h_{j+1}=3h_j/4$, the intervals $[-L+3h_j/4,-L+h_j]$, $j=0,\ldots,N$, are
adjacent and together cover $[-L+3h_N/4,L]$. Thus,
\[\begin{split}
    \Delta_\mu=\sum_{j=0}^{N}\left[\mu(-L+h_j)-\mu\left(-L+\frac{3h_j}{4}\right)\right]+\mu\left(-L+\frac{3h_N}{4}\right)-\mu(-L).
    \end{split}\]
It remains to bound the change in $\mu$ over the uncovered interval $[-L,-L+3h_N/4]$. For any
$x\in[-L,-L+3h_N/4]$ and $y\in[-L+3h_N/4,-L+h_N]$, we have $|x-y|\leq h_N$. Consequently,
\Cref{eq:glbm_lower_slope_comparison} and $\rho h_N\leq1$ imply $\mu'(x)\leq e\mu'(y)$. Integrating first over
$y$ gives
\[\frac{h_N}{4}\mu'(x)\leq e\int_{-L+3h_N/4}^{-L+h_N}\mu'(y)\,dy.\]
Integrating this inequality over $x\in[-L,-L+3h_N/4]$ and dividing by $h_N/4$, we obtain
\[\int_{-L}^{-L+3h_N/4}\mu'(x)\,dx\leq
    3e\int_{-L+3h_N/4}^{-L+h_N}\mu'(y)\,dy.\] Therefore,
\[\mu\left(-L+\frac{3h_N}{4}\right)-\mu(-L)\leq
    3e\left[\mu(-L+h_N)-\mu\left(-L+\frac{3h_N}{4}\right)\right].\]

Each term in the preceding sum is at most $\Gamma_\mu$. This also holds for the term with $h_0=2L$: although
$2L$ is excluded from the supremum defining $\Gamma_\mu$, continuity gives
\[\mu(L)-\mu(L/2)=\lim_{h\rightarrow2L}\left[\mu(-L+h)-\mu\left(-L+\frac{3h}{4}\right)\right]\leq \Gamma_\mu.\]
It follows that $\Delta_\mu\leq(N+1+3e)\Gamma_\mu$. Moreover, the definition of $N$ gives
\(N=\left\lceil\frac{\log\bigl(\max\{1,2\rho L\}\bigr)}{\log(4/3)}\right\rceil.\) Hence,
\begin{equation}\label{eq:glbm_lower_range_comparison}
    \Gamma_\mu\geq\frac{\Delta_\mu}{N+1+3e},\quad N=O\left(1+\log(1+\rho L)\right).
\end{equation}

Since $\mu$ is strictly increasing, $\Gamma_\mu>0$. By the definition of the supremum, we can choose
$h\in(0,2L)$ such that
\[\mu(-L+h)-\mu\left(-L+\frac{3h}{4}\right)\geq \frac{\Gamma_\mu}{2}.\]
We further set that \(z_0=-L+\frac{3h}{4}\), \(z_1=-L+h\), and \(g=\mu(z_1)-\mu(z_0)\). Then $-L<z_0<z_1<L$
and $g\geq\Gamma_\mu/2$. We next construct parameters and matrices whose natural parameters are exactly $z_0$
and $z_1$.

Define \(\alpha=\frac{|z_0|+h/4}{L}\), \(\eta=\sqrt{\frac{h}{4\alpha L}}\), \(c=\sqrt{1-\eta^2},\) and let
$\xi=1$ if $z_0\geq0$ and $\xi=-1$ otherwise. To verify that these quantities are well defined, observe that
\[
    \alpha=
    \begin{cases}
    1-h/(2L),&0<h\leq4L/3,\\
    h/L-1,&4L/3<h<2L.
    \end{cases}
\]
Thus $0<\alpha<1$. Also, \(\eta^2=\frac{h/4}{|z_0|+h/4}\in(0,1]\) and \(c^2=\frac{|z_0|}{|z_0|+h/4}\). In
particular, $c^2+\eta^2=1$, and
\begin{equation}\label{eq:glbm_lower_parameter_identities}
    \xi\alpha Lc^2=z_0, \qquad \alpha L\eta^2=\frac h4=z_1-z_0.
\end{equation}
These identities allow us to use the $e_0$ coordinate for a common reward level and coordinate $i$ for the
additional reward in instance $i$. Using the standard basis $e_0,\ldots,e_p$ of $\mathbb R^d$, define
\[
    \theta^{(i)}=S(ce_0+\eta e_i),
\]
for $i\in[p]$, and let $\theta^{(0)}=Sc e_0$. All parameters are admissible, since $\left\| \theta^{(i)}
\right\|_2\leq S$ holds. The parameter $\theta^{(0)}$ will serve as the null environment.

We first consider $M\geq1$. Take
\[
    \mathcal A=\{a_0,a_1,\ldots,a_p\}, \qquad a_0=\alpha e_0, \qquad a_i=\alpha e_0+\sqrt{1-\alpha^2}\,e_i,
    \quad i\in[p],
\]
and initialize the memory at $s_1=a_0^M$. Here, $a_i^M$ denotes the memory state consisting of $M$ copies of
$a_i$. The action $a_0$ is distinct from every $a_i$, $i\in[p]$, because $\alpha<1$. Define the memory matrix
by
\begin{equation}\label{eq:glbm_lower_memory_matrix}
    A(s)^\top=
    \begin{cases}
    R(\xi c e_0+\eta e_j)e_0^\top, &s=a_j^M,\ j\in[p],\\
    R\xi c e_0e_0^\top, &\text{otherwise}.
    \end{cases}
\end{equation}
Furthermore, it can be verified that $\|a_i\|_2\leq 1$ holds for $i\in[p]\cup\{0\}$. The operator norm of a
rank-one matrix $uv^\top$ is $\|u\|_2\|v\|_2$. Hence the two cases in
Equation~(\ref{eq:glbm_lower_memory_matrix}) have operator norms $R$ and $Rc\leq R$, respectively. Moreover,
every action in $\mathcal A$ satisfies $e_0^\top a=\alpha$. If $s=a_j^M$, then
\[
    \begin{split}
    \phi(s,a)^\top\theta^{(i)} &= \alpha R(\xi c e_0+\eta e_j)^\top S(ce_0+\eta e_i)\\
    &= \alpha L\left(\xi c^2+\eta^2\mathbf 1\{i=j\}\right)\\
    &= z_0+(z_1-z_0)\mathbf 1\{i=j\}.
    \end{split}
\]
For every other state, the same inner product equals $\xi\alpha Lc^2=z_0$. Therefore, for all
$a\in\mathcal A$,
\[
    \phi(s,a)^\top\theta^{(i)} =
    \begin{cases}
    z_1,&s=a_i^M,\\
    z_0,&\text{otherwise}.
    \end{cases}
\]
Under $\theta^{(0)}$, the $e_i$ component is absent, so the natural parameter equals $z_0$ at every
state--action pair. Thus, the only way to distinguish instance $i$ from the null environment is to reach
$a_i^M$.

For the first $M$ rounds, the memory still contains at least one copy of the initial action $a_0$. None of the
states $a_i^M$ can therefore occur during these rounds. Every action sequence receives mean reward $\mu(z_0)$
for its first $M$ rounds and at most $\mu(z_1)$ thereafter. Repeating $a_i$ attains this upper bound in
instance $i$, since its memory is $a_i^M$ from round $M+1$ onward. Therefore,
\[
    \mathrm{OPT}_T(\theta^{(i)}) = M\mu(z_0)+(T-M)\mu(z_1) = T\mu(z_0)+(T-M)g.
\]

We now compare the learner's trajectories across environments. Use the same internal random seed for the
learner in every environment. Let $U_1,\ldots,U_T$ be independent uniform random variables, independent of
that seed, and generate each reward observation as
\[
    y_t = F_{\phi(s_t,a_t)^\top\theta}^{-1}(U_t),
\]
where $F_z$ is the cumulative distribution function of the reward with natural parameter $z$, and $F_z^{-1}$
is its inverse. This construction gives the correct reward distribution in each environment. It also ensures
that whenever two environments have the same natural parameter at a given round, they receive the same
observation at that round. On the trajectory under $\theta^{(0)}$, define
\[
    \tau_i = \inf\{t\in[T]:s_t=a_i^M\}, \quad \overline\tau_i = \min\{\tau_i,T+1\},
\]
with $\inf\varnothing=\infty$. Until round $\tau_i$, the trajectories under $\theta^{(i)}$ and $\theta^{(0)}$
coincide. Indeed, they start from the same memory and use the same learner seed. If their histories agree
before some round $t<\tau_i$, they choose the same action, and both natural parameters equal $z_0$. Their
observations are then identical under the coupling, so their histories also agree at the next round. This
proves the claim by induction. Their states and selected actions still agree at round $\tau_i$, but the
observations at that round may differ; no agreement is needed thereafter.

Write $\mathrm{Reg}_T^{(i)}(\pi)$ for the regret on the trajectory under $\theta^{(i)}$. On this trajectory,
the first $\overline\tau_i-1$ mean rewards equal $\mu(z_0)$, and every remaining mean reward is at most
$\mu(z_1)$. Thus,
\[
    \begin{split}
    \sum_{t=1}^{T}r_{\theta^{(i)}}(s_t^\pi,a_t^\pi) &\leq (\overline\tau_i-1)\mu(z_0) +
    (T-\overline\tau_i+1)\mu(z_1)\\
    &= T\mu(z_0)+(T-\overline\tau_i+1)g.
    \end{split}
\]
Subtracting the preceding upper bound from $\mathrm{OPT}_T(\theta^{(i)})$ yields
\[
    \mathrm{Reg}_T^{(i)}(\pi) \geq g(\overline\tau_i-M-1).
\]
The inequality holds for each realization of the common randomness, and hence also after taking expectations.

It remains to count how quickly the null trajectory can visit the $p$ different states. First,
$\tau_i\geq M+1$ whenever $\tau_i$ is finite. Second, if $\tau_i<\tau_j<\infty$ for $i\neq j$, then
$\tau_j-\tau_i\geq M$. Otherwise the two length-$M$ action windows defining $s_{\tau_i}=a_i^M$ and
$s_{\tau_j}=a_j^M$ would overlap, requiring an action in the overlap to equal both $a_i$ and $a_j$, which is
impossible. For each realization, order the truncated hitting times as
$\overline\tau_{(1)}\leq\cdots\leq\overline\tau_{(p)}$. If $\overline\tau_{(j)}\leq T$, the preceding
separation argument gives
\[
    \overline\tau_{(j)} \geq M+1+(j-1)M = jM+1.
\]
If it equals $T+1$, it corresponds to an unvisited state. In either case,
\[
    \overline\tau_{(j)}-(M+1) \geq \min\{(j-1)M,T-M\}.
\]
Therefore,
\begin{align}\label{eq:glbm_lower_memory_average}
    \frac1p\sum_{i=1}^{p} \mathbb E\left[\mathrm{Reg}_T^{(i)}(\pi)\right] &\geq \frac gp\, \mathbb E\left[
    \sum_{j=1}^{p} \bigl(\overline\tau_{(j)}-M-1\bigr) \right] \nonumber\\
    &\geq \frac gp\sum_{j=0}^{p-1}\min\{jM,T-M\} \nonumber\\
    &= \frac gp\sum_{j=0}^{p-1}jM = \frac{(p-1)Mg}{2}.
\end{align}
The last equality uses $T\geq pM$, which implies $T-M\geq(p-1)M$.

For $M=0$, keep the same parameter instances but instead take \(A=RI_d\), \(\mathcal A=\{a_1,\ldots,a_p\}\),
and \(a_i=\alpha(\xi c e_0+\eta e_i)\). These actions are admissible because $\|a_i\|_2=\alpha<1$. For every
$i,j\in[p]$, direct calculation gives
\[
    Ra_j^\top\theta^{(i)} = \alpha L\left(\xi c^2+\eta^2\mathbf 1\{i=j\}\right) =
    z_0+(z_1-z_0)\mathbf 1\{i=j\}.
\]
Thus, action $i$ has mean reward $\mu(z_1)$ in instance $i$, while every other action has mean reward
$\mu(z_0)$. Under $\theta^{(0)}$, all actions again have mean $\mu(z_0)$. The optimal comparator always
selects action $i$, so $\mathrm{OPT}_T(\theta^{(i)})=T\mu(z_1)$.

Use the same coupling as above, and now let $\tau_i$ be the first round in which action $i$ is selected on the
null trajectory. Until that round, the observations in instance $i$ and the null environment coincide. The
first $\overline\tau_i-1$ mean rewards in instance $i$ therefore equal $\mu(z_0)$, which gives
\[
    \begin{split}
    \mathrm{Reg}_T^{(i)}(\pi) &\geq T\mu(z_1) - \left[ (\overline\tau_i-1)\mu(z_0) +
    (T-\overline\tau_i+1)\mu(z_1) \right]\\
    &= g(\overline\tau_i-1).
    \end{split}
\]
Only one action can be selected per round, so the $j$-th distinct action cannot be selected before round $j$.
After ordering the truncated first-visit times, this gives $\overline\tau_{(j)}\geq j$ for $j\in[p]$;
unvisited actions also satisfy this inequality because $T+1>p$. Consequently, for $T\geq p$,
\begin{equation}\label{eq:glbm_lower_initial_average}
    \frac1p\sum_{i=1}^{p} \mathbb E\left[\mathrm{Reg}_T^{(i)}(\pi)\right] \geq \frac gp\sum_{j=1}^{p}(j-1) =
    \frac{(p-1)g}{2}.
\end{equation}

In both cases, the supremum over admissible parameters is at least the average over the $p$ constructed
instances. Thus, Equation~ (\ref{eq:glbm_lower_memory_average}) and (\ref{eq:glbm_lower_initial_average}) give
\[
    \begin{split}
    \sup_{\theta^\star} \mathbb E\left[\mathrm{Reg}_T(\pi)\right] &\geq \frac{(p-1)\max\{M,1\}g}{2}\\
    &\geq \frac{(d-2)(M+1)\Gamma_\mu}{8}\\
    &\geq \frac{(d-2)(M+1)\Delta_\mu}{8(N+1+3e)}.
    \end{split}
\]
Here, the second inequality uses $p=d-1$, $\max\{M,1\}\geq(M+1)/2$, and $g\geq\Gamma_\mu/2$; the last
inequality follows from Equation~(\ref{eq:glbm_lower_range_comparison}). Finally, $d-2\geq d/3$ for $d\geq3$
and $N=O(1+\log(1+\rho RS))$, proving the claimed lower bound.
\end{proof}

\begin{lemma}\label{lem:glbm_statistical_lower}
Let $d\geq2$. If \(T\geq\frac{v(d-1)^2}{U_\mu}\max\left\{\frac{1}{4R^2S^2},\rho^2\right\}\), there exists an
admissible action set such that, with $A(s)=RI_d$ for every $s$, every policy $\pi$ satisfies
\[
    \sup_{\theta^\star} \mathbb E\left[\mathrm{Reg}_T(\pi)\right]
    \geq \frac{3(d-1)}{16\sqrt e}\sqrt{vU_\mu T}.
\]
\end{lemma}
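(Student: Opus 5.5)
With $A(s)=RI_d$ the memory is irrelevant, so the problem reduces to a standard generalized linear bandit. The action set is $\mathcal A$, the effective parameter is $R\theta^\star$, and the natural parameter is $z=R\,a^\top\theta^\star$. I would follow the hypercube construction behind Theorem~24.2 of \citet{lattimore2020bandit}, adapted to the exponential family, as follows.

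\textbf{Instances.} Use the product-of-signs action set $\mathcal A=\{a: a=(\alpha e_0\text{-part},\ (d-1)^{-1/2}(\pm1,\ldots,\pm1))\}$, suitably normalized so that $\|a\|_2\le1$. The parameters are $\theta_\epsilon=S(c e_0+\Delta\,\epsilon/S)$, with $\epsilon\in\{\pm1\}^{d-1}$ and a small per-coordinate size $\Delta$. The $e_0$ component places every natural parameter near a base point $z_\star\in[-RS,RS]$ where $\mu'(z_\star)$ is close to $U_\mu$.

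\textbf{Controlling the slope.} All natural parameters lie in an interval around $z_\star$ of width $O(R\Delta\sqrt{d-1})$. By equation~(\ref{eq:glbm_lower_slope_comparison}), once $\rho R\Delta\sqrt{d-1}\le 1/2$, every slope on that interval lies within a factor $e^{1/2}$ of $\mu'(z_\star)$. This is where the $\sqrt e$ in the constant comes from.

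\textbf{Regret per coordinate.} Mean-reward gaps are then at least $e^{-1/2}\mu'(z_\star)$ times the linear gaps. This yields the per-coordinate decomposition $\mathrm{Reg}\gtrsim e^{-1/2}U_\mu R\Delta (d-1)^{-1/2}\sum_{t,i}\mathbb P(\mathrm{sign}(a_{t,i})\ne\epsilon_i)$.

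\textbf{Information.} The KL divergence between the exponential-family laws at natural parameters $z$ and $z'$ is $\frac{1}{v}\int_z^{z'}(z'-u)\mu'(u)\,du\le \frac{e^{1/2}U_\mu}{2v}(z-z')^2$. Flipping coordinate $i$ changes $z$ by at most $2R\Delta/\sqrt{d-1}$ per round. Hence the KL divergence between the flipped environments over $T$ rounds is $O(TU_\mu R^2\Delta^2/(v(d-1)))$.

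\textbf{Choice of $\Delta$.} Set $R\Delta=\sqrt{v(d-1)/(U_\mu T)}$ times a small constant. This keeps the KL divergence $O(1)$, and Bretagnolle--Huber then gives a constant probability of error per coordinate per round. Summing over $i$ and averaging over $\epsilon$ yields regret $\gtrsim (d-1)\sqrt{vU_\mu T}$.

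\textbf{Where the hypothesis on $T$ is used.} It must guarantee two things. First, $\Delta\sqrt{d-1}\le S$, so that the parameters stay admissible and the natural parameters stay inside $[-RS,RS]$; this corresponds to the $1/(4R^2S^2)$ term. Second, $\rho R\Delta\sqrt{d-1}\le1/2$, which is the $\rho^2$ term. Both reduce to $T\gtrsim v(d-1)^2/U_\mu\cdot\max\{\cdot\}$, exactly as stated.

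\textbf{Main obstacle.} The delicate step is placing the base point so that the whole perturbation interval stays in $[-RS,RS]$ while the slope remains comparable to $U_\mu$. Since $U_\mu$ is a supremum, I would pick $z_\star$ approximately attaining it, shifted inward by the perturbation width. Relative-slope control then costs only a factor $e^{\pm1/2}$. The remaining care is tracking constants to reach $3/(16\sqrt e)$.
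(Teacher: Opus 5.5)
You have the paper's architecture: a sign hypercube $\omega\in\{-1,1\}^{d-1}$ with a shared $e_0$ coordinate, natural parameters clustered near a maximizer $z_\star$ of $\mu'$, the slope comparison costing $e^{1/2}$, the Bregman form of the KL, and per-coordinate two-point testing. At the level of $\Omega\big((d-1)\sqrt{vU_\mu T}\big)$ this is the right argument. The gap is that the lemma asserts the explicit constant $\frac{3}{16\sqrt e}$, and the step you defer to ``tracking constants'' cannot succeed with Bretagnolle--Huber.

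That inequality gives only $\mathbb P_\omega(u_{t,j}\neq\omega_j)+\mathbb P_{\omega^{(j)}}(u_{t,j}\neq\omega^{(j)}_j)\geq\frac12e^{-\mathrm{KL}}$, so the averaged per-coordinate error is at most $\frac14$. Use the paper's notation ($p=d-1$): the per-coordinate natural-parameter gap is $2\eta\nu/\sqrt p$, and set $\eta\nu=c\sqrt{vp/(U_\mu T)}$. The total KL is then $2c^2$, and your route yields at best $\frac{c}{2}e^{-2c^2}\cdot\frac{p}{\sqrt e}\sqrt{vU_\mu T}$. Since $\max_c\frac c2e^{-2c^2}=\frac14e^{-1/2}\approx0.15<\frac{3}{16}$, no tuning recovers the stated constant.

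The paper uses Pinsker instead. With $\nu=\frac14\sqrt{vp/(U_\mu\eta^2T)}$ the KL is exactly $\frac18$, so $\mathrm{TV}\leq\frac14$. Each pair then errs with total probability at least $\frac34$, the average error is $\frac38$, and $\frac{2U_\mu\eta\nu}{\sqrt{ep}}\cdot\frac{3pT}{8}=\frac{3p}{16\sqrt e}\sqrt{vU_\mu T}$. This also needs the sharp bound $\mathrm{KL}(P_z,P_{z'})\leq\frac{U_\mu}{2v}(z-z')^2$, which holds because $\mu'\leq U_\mu$ on all of $[-RS,RS]$ by definition; your extra $e^{1/2}$ only loses constant. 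Also, $z_\star$ can be an exact maximizer, since $\mu'$ is continuous on a compact interval.

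The balancing you call the main obstacle is also not resolved by your accounting. Suppose $|z_\star|\approx RS$ and $\rho RS$ is large. Keeping the base within $O(1/\rho)$ of $z_\star$ forces the $e_0$ parts of both action and parameter to use almost the whole norm budget, so the sign part of the action has small weight $\eta$. Your regret bound is written with $R\Delta$, but the true per-coordinate gap is $2R\eta\Delta/\sqrt{d-1}$. With $R\Delta\propto\sqrt{v(d-1)/(U_\mu T)}$ you therefore lose a non-constant factor $\eta$; the parameter perturbation must scale like $1/\eta$. Likewise, in the paper's parametrization admissibility reads $p\nu^2\leq(RS)^2\eta^2$, not $\Delta\sqrt{d-1}\leq S$, because the base coordinate already consumes $(RS)^2(1-\eta^2)$.

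The paper takes $a_u=(\sqrt{1-\eta^2},\frac{\eta}{\sqrt p}u)$ and $\theta^{(\omega)}=\frac1R(z_\star\sqrt{1-\eta^2},\nu\omega)$, with $\eta^2=\epsilon/(RS)$ and $\epsilon=\min\{RS/2,1/(4\rho)\}$. Then the inward shrink $\eta^2|z_\star|$ and the perturbation width $\eta\nu\sqrt p$ are each at most $\epsilon$; the second bound is exactly the hypothesis on $T$. So every natural parameter lies within $2\epsilon\leq\frac1{2\rho}$ of $z_\star$, giving slope at least $e^{-1/2}U_\mu$. Your condition $\rho R\Delta\sqrt{d-1}\leq\frac12$ budgets only the perturbation and not the shift, so it would give only $e^{-1}$.
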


\begin{proof}
Let $L=RS$ and $p=d-1$. We construct a family of instances whose natural parameters lie near a point where
$\mu'$ attains $U_\mu$. In this region, differences in the natural parameters translate into mean-reward
differences proportional to $U_\mu$. At the same time, we choose the instances sufficiently close to one
another so that the learner cannot reliably distinguish their optimal actions within $T$ rounds. We define
\(z_\star\in\arg\max_{|z|\leq L}\mu'(z)\), \(\epsilon=\min\left\{\frac L2,\frac{1}{4\rho}\right\},\) and
\(\eta=\sqrt{\frac{\epsilon}{L}},\) where $1/(4\rho)=\infty$ when $\rho=0$. Such a $z_\star$ exists because
$\mu'$ is continuous on $[-L,L]$. Also, $0<\eta^2\leq1/2$. Set $A(s)=RI_d$ for every $s$, and take
\[
    \mathcal A= \left\{ a_u=\left(\sqrt{1-\eta^2},\frac{\eta}{\sqrt p}u\right): u\in\{-1,1\}^p \right\}.
\]
The first coordinate is shared by all actions, while the remaining $p$ coordinates can independently take
either sign. Every action has unit norm, since
\[
    \|a_u\|_2^2 =1-\eta^2+\frac{\eta^2}{p}\sum_{j=1}^{p}u_j^2 =1.
\]
For each $\omega\in\{-1,1\}^p$, define
\[
    \theta^{(\omega)} =\frac1R\left(z_\star\sqrt{1-\eta^2},\nu\omega\right),
\]
where $\nu=\frac14\sqrt{\frac{vp}{U_\mu\eta^2T}}$. All instances use the same action set and the same known
memory matrix; only the unknown sign vector $\omega$ changes. We first verify that these parameters satisfy
the norm bound. By the definition of $\epsilon$,
\[
    \frac{1}{16\epsilon^2} =\max\left\{\frac{1}{4L^2},\rho^2\right\}.
\]
Thus, the condition on $T$ is equivalent to $T\geq vp^2/(16U_\mu\epsilon^2)$, which gives
\[
    p\nu^2 =\frac{vp^2}{16U_\mu\eta^2T} \leq\frac{\epsilon^2}{\eta^2} =L^2\eta^2.
\]
Consequently,
\[
    \begin{aligned}
    R^2\|\theta^{(\omega)}\|_2^2 &=z_\star^2(1-\eta^2)+p\nu^2\leq L^2(1-\eta^2)+L^2\eta^2 =L^2.
    \end{aligned}
\]
Therefore, $\|\theta^{(\omega)}\|_2\leq S$ for every $\omega$. For action $a_u$ under parameter
$\theta^{(\omega)}$, the natural parameter is
\[
    Ra_u^\top\theta^{(\omega)} =(1-\eta^2)z_\star +\frac{\eta\nu}{\sqrt p}u^\top\omega.
\]
The preceding norm bounds imply
\[
    |Ra_u^\top\theta^{(\omega)}| \leq R\|a_u\|_2\|\theta^{(\omega)}\|_2 \leq L.
\]
Moreover, $|u^\top\omega|\leq p$ and $\nu\sqrt p\leq L\eta$, so
\[
    \begin{aligned}
    \left|Ra_u^\top\theta^{(\omega)}-z_\star\right| &\leq\eta^2|z_\star|+\eta\nu\sqrt p\leq L\eta^2+L\eta^2
    =2\epsilon.
    \end{aligned}
\]
Hence, all the constructed natural parameters lie in $[-L,L]$ and are within $2\epsilon$ of $z_\star$. Since
$\mu'(z_\star)=U_\mu$, the slope comparison in Equation~(\ref{eq:glbm_lower_slope_comparison}) gives
\[
    \mu'(z) \geq e^{-\rho|z-z_\star|}\mu'(z_\star) \geq e^{-2\rho\epsilon}U_\mu \geq e^{-1/2}U_\mu
\]
whenever $z\in[-L,L]$ and $|z-z_\star|\leq2\epsilon$. Together with the definition of $U_\mu$, this yields
\begin{equation}\label{eq:glbm_lower_local_slope}
    e^{-1/2}U_\mu\leq\mu'(z)\leq U_\mu
\end{equation}
for every constructed natural parameter and every point between any two of them. These points all lie in
$[-L,L]$, so the argument also applies when $z_\star$ is an endpoint.

We next bound how well the learner can distinguish two instances that differ in a single coordinate. Let $P_z$
denote the reward distribution with natural parameter $z$. Taking the expectation of the log-likelihood ratio
under $P_z$, and using $\mathbb E_{P_z}[y]=\mu(z)$, gives
\[
    \mathrm{KL}(P_z,P_{z'}) =\frac{\mu(z)(z-z')-b(z)+b(z')}{v}.
\]
Since $b'=\mu$ and $b''=\mu'$, Taylor's formula gives
\[
    \begin{aligned}
    b(z')-b(z)-\mu(z)(z'-z) =(z'-z)^2\int_0^1(1-\alpha)
    \mu'\bigl(z+\alpha(z'-z)\bigr)\,d\alpha\leq\frac{U_\mu}{2}(z'-z)^2,
    \end{aligned}
\]
where we use the fact that the segment between $z$ and $z'$ lies in $[-L,L]$. Therefore,
\begin{equation}\label{eq:glbm_lower_single_kl}
    \mathrm{KL}(P_z,P_{z'}) =\frac{b(z')-b(z)-\mu(z)(z'-z)}{v} \leq\frac{U_\mu}{2v}(z-z')^2.
\end{equation}

Fix any policy $\pi$. Write $\mathbb P_\omega$ for the distribution of its complete action--observation
history under $\theta^{(\omega)}$, and let $\omega^{(j)}$ be obtained by reversing coordinate $j$ of $\omega$.
For every available action $a_u$,
\[
    Ra_u^\top\theta^{(\omega)} -Ra_u^\top\theta^{(\omega^{(j)})} =\frac{2\eta\nu}{\sqrt p}u_j\omega_j.
\]
Thus, the absolute difference between the two natural parameters is always $2\eta\nu/\sqrt p$, regardless of
the selected action. By the divergence decomposition \citep[Lemma~15.1]{lattimore2020bandit} and
Equation~(\ref{eq:glbm_lower_single_kl}),
\[
    \begin{aligned}
    \mathrm{KL}(\mathbb P_\omega,\mathbb P_{\omega^{(j)}})& = \sum_{t=1}^{T}\mathbb E_\omega\left[
    \mathrm{KL}\left( P_{Ra_{u_t}^\top\theta^{(\omega)}}, P_{Ra_{u_t}^\top\theta^{(\omega^{(j)})}} \right)
    \right] \leq T\frac{U_\mu}{2v} \left(\frac{2\eta\nu}{\sqrt p}\right)^2= \frac{2TU_\mu\eta^2\nu^2}{pv}
    =\frac18,
    \end{aligned}
\]
where $a_{u_t}$ denotes the selected action at round $t$. This decomposition compares the reward distributions
at the same action conditional on a given history, and therefore remains valid for an adaptive policy.
Pinsker's inequality now gives
\[
    \|\mathbb P_\omega-\mathbb P_{\omega^{(j)}}\|_{\mathrm{TV}} \leq\sqrt{\frac12\cdot\frac18} =\frac14.
\]

We use this bound to show that the learner selects the wrong sign with a constant probability. Fix a round $t$
and a coordinate $j$, and consider a pair $\omega,\omega^{(j)}$ with $\omega_j=1$. An error in coordinate $j$
occurs when $u_{t,j}=-1$ under $\omega$, or when $u_{t,j}=1$ under $\omega^{(j)}$. Consequently,
\[
    \begin{aligned}
    \mathbb P_\omega(u_{t,j}\neq\omega_j) +\mathbb P_{\omega^{(j)}} \left(u_{t,j}\neq\omega^{(j)}_j\right)& =
    \mathbb P_\omega(u_{t,j}=-1) +\mathbb P_{\omega^{(j)}}(u_{t,j}=1)\\
    &= 1+\mathbb P_\omega(u_{t,j}=-1) -\mathbb P_{\omega^{(j)}}(u_{t,j}=-1)\\
    &\geq 1-\|\mathbb P_\omega-\mathbb P_{\omega^{(j)}}\|_{\mathrm{TV}} \geq\frac34.
    \end{aligned}
\]
The inequality follows because the difference in the probability of the same event under two distributions is
bounded by their total variation distance. For each coordinate $j$, the $2^p$ sign vectors form $2^{p-1}$
disjoint pairs of this type. Summing over these pairs and dividing by $2^p$ gives that, for every round $t$
and every coordinate $j$,
\begin{equation}\label{eq:glbm_lower_coordinate_error}
    \frac1{2^p}\sum_{\omega\in\{-1,1\}^p} \mathbb P_\omega(u_{t,j}\neq\omega_j) \geq\frac{2^{p-1}}{2^p}\frac34
    =\frac38.
\end{equation}

Finally, we relate these coordinate errors to regret. Since $\mu$ is increasing, an optimal action maximizes
$u^\top\omega$, which is achieved by $u=\omega$. Thus, $a_\omega$ is optimal under $\theta^{(\omega)}$. For
any selected action $a_{u_t}$,
\[
    \begin{aligned}
    R(a_\omega-a_{u_t})^\top\theta^{(\omega)} =\frac{\eta\nu}{\sqrt p}
    \sum_{j=1}^{p}(1-u_{t,j}\omega_j)=\frac{2\eta\nu}{\sqrt p} \sum_{j=1}^{p}\mathbf 1\{u_{t,j}\neq\omega_j\}.
    \end{aligned}
\]
In particular, each incorrect coordinate contributes $2\eta\nu/\sqrt p$ to the difference in natural
parameters. The slope lower bound in Equation~(\ref{eq:glbm_lower_local_slope}) holds throughout the interval
between these parameters. Integrating $\mu'$ over this interval therefore gives
\[
    \begin{aligned}
    \mu(Ra_\omega^\top\theta^{(\omega)}) -\mu(Ra_{u_t}^\top\theta^{(\omega)}) \geq \frac{U_\mu}{\sqrt e}
    R(a_\omega-a_{u_t})^\top\theta^{(\omega)} = \frac{2U_\mu\eta\nu}{\sqrt{ep}}
    \sum_{j=1}^{p}\mathbf 1\{u_{t,j}\neq\omega_j\}.
    \end{aligned}
\]

Because $A(s)=RI_d$, the memory state does not affect rewards, and the optimal comparator selects $a_\omega$
in every round. Summing the preceding inequality over $T$ rounds, taking expectations, and averaging over the
parameter instances yields
\begin{align*}
    \frac1{2^p}\sum_{\omega\in\{-1,1\}^p} \mathbb E_\omega\left[\mathrm{Reg}_T(\pi)\right]
    &\geq \frac{2U_\mu\eta\nu}{\sqrt{ep}} \sum_{t=1}^{T}\sum_{j=1}^{p} \frac1{2^p}\sum_{\omega\in\{-1,1\}^p}
    \mathbb P_\omega(u_{t,j}\neq\omega_j)\\
    &\geq \frac{2U_\mu\eta\nu}{\sqrt{ep}}\frac{3pT}{8}= \frac{3p}{16\sqrt e}\sqrt{vU_\mu T}.
\end{align*}
The second inequality follows from \Cref{eq:glbm_lower_coordinate_error}, and the last equality uses the
definition of $\nu$. Since all the constructed parameters satisfy $\|\theta^{(\omega)}\|_2\leq S$, the
supremum over admissible parameters is at least this average. Substituting $p=d-1$ proves the lemma.
\end{proof}

\end{APPENDICES}
\end{document}